\documentclass[12pt]{article}
\usepackage{amssymb,amsmath, amsthm}
\usepackage{framed}
\usepackage{mathrsfs}
\usepackage{enumerate}
\usepackage{tikz}
\usetikzlibrary{matrix}
\usepackage[all]{xy}
\usetikzlibrary{shapes}
\usepackage{hyperref}
\usepackage{multirow} 
\usepackage{bm}
\usepackage[algo2e,linesnumbered,vlined,ruled]{algorithm2e}
\usepackage{algpseudocode}
\usepackage{cite}
\usepackage{tikz}
\definecolor{gray}{rgb}{0.9, 0.9, 0.9}
\usepackage{enumitem}
\setenumerate[0]{leftmargin=*}

\addtolength{\oddsidemargin}{-.5in}
\addtolength{\evensidemargin}{-.5in}
\addtolength{\textwidth}{1in}
\addtolength{\topmargin}{-.5in}
\addtolength{\textheight}{1in}

%fd\numberwithin{align}{section}

%%%%%%%%%% Math %%%%%%%%%%%%%%%

\newcommand{\inclu}[0] {\ar@{^{(}->}}

\newcommand{\RR}{\mathbb{R}}

\newcommand{\EE}{{\mathbb E}}

\newcommand{\cL}{\mathcal{L}}

\newcommand{\cO}{\mathcal{O}}

\newcommand{\cS}{\mathcal{S}}
\newcommand{\cA}{\mathcal{A}}
\newcommand{\cP}{\mathcal{P}}
\newcommand{\be}{\mathbf{e}}

\newcommand{\sign}{\mathrm{sign}}

\newcommand{\cN}{\mathcal{N}}

\newcommand{\cB}{\mathcal{B}}
\newcommand{\cF}{\mathcal{F}}
\newcommand{\cG}{\mathcal{G}}

%%%%%%%%%% Convex Analysis %%%%%%%%%
%\newcommand{\prox}{{\textrm prox}}

\usepackage[margin=1.1in]{geometry}

\newcommand{\argmax}{\operatornamewithlimits{argmax}}

%%%%% Probability %%%%%%%%%%%%%%%%%

%%%%%% Algorithm Macro %%%%%%%%
\SetAlgoSkip{bigskip}
\SetKwInput{Input}{Input\hspace{34pt}{ }}
\SetKwInput{Output}{Output\hspace{27pt}{ }}
\SetKwInput{Initialize}{Initialize\hspace{2pt}{ }}
\SetKwInput{Parameters}{Parameters\hspace{2pt}{ }}

%%%%%%%%% Theorems %%%%%%%%%%%%%

\newtheorem{theorem}{Theorem}[section]
\newtheorem{proposition}[theorem]{Proposition}
\newtheorem{lemma}[theorem]{Lemma}

\newtheorem{assumption}[theorem]{Assumption}

\newtheorem{example}{Example}[section]

%%%%%%%%%%%%%%%%%%%%%%%%%%%%%%%%%%
\usepackage{mathtools}

\title{On the Convergence and Sample Efficiency of Variance-Reduced Policy Gradient Method}
\author{\,\,\,\,\,\,\,\,\,\,
{Junyu Zhang} \thanks{Department of Electrical and Computer Engineering, Princeton University \& Department of Industrial Systems Engineering and Management, National University of Singapore, junyuz@princeton.edu }\and
{Chengzhuo Ni} \thanks{Department of Electrical and Computer Engineering, Princeton University, chengzhuo.ni@princeton.edu}\and
{Zheng Yu} \thanks{Department of Electrical and Computer Engineering, Princeton University, zhengy@princeton.edu}\,\,\,\,\,\,\,\,\,\,\and
{Csaba Szepesvari} \thanks{Department of Computer Science, University of Alberta \& Deepmind, szepesva@ualberta.ca}\and
{Mengdi Wang} \thanks{Department of Electrical and Computer Engineering, Princeton University \& Deepmind, mengdiw@princeton.edu}
}

\begin{document}
\date{}
\maketitle
\begin{abstract}
	Policy gradient (PG) gives rise to a rich class of reinforcement learning (RL) methods. Recently, there has been an emerging trend to accelerate the existing PG methods such as REINFORCE by the \emph{variance reduction} techniques.  However, all existing variance-reduced PG methods heavily rely on an uncheckable importance weight assumption made for every single iteration of the algorithms. In this paper, a simple gradient truncation mechanism is proposed to address this issue. Moreover, we design a Truncated Stochastic Incremental Variance-Reduced Policy Gradient (TSIVR-PG) method, which is able to maximize not only a cumulative sum of rewards but also a general utility function over a policy's long-term visiting distribution.  We show an $\tilde{\cO}(\epsilon^{-3})$ sample complexity for TSIVR-PG to find an $\epsilon$-stationary policy. By assuming the overparameterizaiton of policy and exploiting the hidden convexity of the problem, we further show that TSIVR-PG converges to global $\epsilon$-optimal policy with $\tilde{\cO}(\epsilon^{-2})$ samples. 
\end{abstract}
\section{Introduction}
In this paper, we investigate the theoretical properties of Policy Gradient (PG) methods for Reinforcement Learning (RL) \cite{sutton2018reinforcement}. In view of RL as a policy optimization problem, the PG method parameterizes the policy function and conduct gradient ascent search to improve the policy. In this paper, we consider the soft-max policy parameterization
\begin{equation}
	\label{defn:policy-para}
	\pi_{\theta}(a|s) = \frac{\exp\{\psi(s,a;\theta)\}}{\sum_{a'}\exp\{\psi(s,a';\theta)\}}
\end{equation}
where $(s,a)$ is a state-action pair and $\psi$ is some smooth function. Potentially, one can set the function $\psi$ to be some deep neural network with weights $\theta$ and input $(s,a)$. The main problem considered in this paper is the policy optimization for a \emph{general utility} function:
\begin{equation}
	\label{prob:main-0}
	\max_{\theta} R(\pi_\theta) := F(\lambda^{\pi_{\theta}}),
\end{equation}
where $F$ is a general smooth function, and $\lambda^{\pi_{\theta}}$ denotes the unnormalized state-action occupancy measure (also referred to as the visitation measure).  For any policy $\pi$ and initial state distribution $\xi$,
\begin{eqnarray}
	\label{defn:occupancy}
	\lambda^{\pi}(s,a) := \sum_{t=0}^{+\infty}\gamma^t\cdot \mathbb{P}\Big( s_t = s, a_t = a\,\big|\, \pi, s_0\sim\xi\Big).
\end{eqnarray}
When $F$ is linear, the problem reduces to the standard policy optimization problem where the objective is to maximize a cumulative sum of rewards. When $F$ is nonlinear, problem \eqref{prob:main-0} goes beyond standard Markov decision problems: examples include the max-entropy exploration \cite{hazan2019provably}, risk-sensitive RL \cite{zhang2020cautious}, certain set constrained RL  \cite{miryoosefi2019reinforcement}, and so on. 

In the standard cumulative-return case (i.e., $F$ is linear), numerous works have studied  PG methods in various scenarios, see e.g. \cite{williams1992simple,baxter2001infinite,zhao2012analysis,konda1999actor,konda2000actor,schulman2015trust,schulman2017proximal}. 
To the authors' best knowledge, a most recent variant of PG methods, using the SARAH/Spider stochastic variance reduction technique \cite{fang2018spider,nguyen2017sarah}, finds a local $\epsilon$-stationary policy using $\cO(\epsilon^{-3})$ samples \cite{xu2019sample,pham2020hybrid}. This poses a contrast with the known $\tilde{O}(\epsilon^{-2})$ sample complexity results that can be achieved by various value-based methods \cite{azar2013minimax,sidford2018variance,sidford2018near} and are provably matching information-theoretic lower bounds \cite{even2006action,azar2011reinforcement,azar2013minimax}. In this paper, we attempt to close this gap and prove an $\tilde{\cO}(\epsilon^{-2})$ sample complexity bound for a PG method. Most importantly, when it comes to PG estimation, the application of the variance reduction technique typically relies on certain off-policy PG estimator, resulting in the difficulty of distribution shift. We notice that none of the existing variance-reduced PG methods attempt to address this challenge. Instead, they directly make an uncheckable assumption that the variance of the importance weight is bounded for every policy pair encountered in running the algorithm, see e.g. \cite{papini2018stochastic,xu2019sample,xu2020improved,pham2020hybrid}. In this paper, we propose a simple gradient truncation mechanism to fix this issue.

Next, let us go beyond cumulative return and consider policy optimization for a general utility where $F$ may be nonlinear. However, much less is known in this setting.  The nonlinearity of $F$ invalidates the concept of Q-function and value function, leading to the failure of policy gradient theorem \cite{sutton2000policy}. To overcome such difficulty,
\cite{zhang2020variational} showed that the policy gradient for the general utilities is the solution to a min-max problem. However, estimating a single PG is highly nontrivial in this case. It is still unclear how to make PG methods to use samples in a most efficient way.

In this paper, we aim to investigate the convergence and sample efficiency of the PG method, using episodic sampling, for both linear $F$ (i.e., cumulative rewards) and nonlinear $F$ (i.e., general utility). Observe that problem \eqref{prob:main-0} is an instance of the Stochastic Composite Optimization (SCO) problem \cite{wang2017stochastic,wang2017accelerating}: 
$$\min_x f (\mathbb{E}_\nu[g_\nu(x)]),$$ which involves an inner expectation that corresponds to the occupancy measure $\lambda^{\pi}$. Motivated by this view point, we attempt to develop stochastic policy gradient method with provable finite-sample efficiency bounds.

\textbf{Main results.}  Our main results are summarized below.
\begin{itemize} 
	\item We propose the TSIVR-PG algorithm  to solve problem \eqref{prob:main-0} via episodic sampling. It provides a conceptually simple stochastic gradient approach for solving  general utility RL.
	\item We provide a gradient truncation mechanism to address the distribution shift difficulty in variance-reduced PG methods. Such difficulty has never been addressed in previous works.
	\item We show that TSIVR-PG finds an $\epsilon$-stationary policy using $\tilde{O}(\epsilon^{-3})$ samples if $F$ and $\psi$ are general smooth functions. When $F$ is concave and $\psi$ satisfies certain overparameterization condition, we show that TSIVR-PG obtains a  gloal $\epsilon$-optimal policy using $\tilde{O}(\epsilon^{-2})$ samples. 
\end{itemize}
\textbf{Technical contribution.} Our analysis technique is also of independent interest in the relating areas.
\begin{itemize}
	\item For stochastic composite optimization (SCO), most existing algorithms require estimating the Jacobian matrix of the inner mapping, which corresponds to $\nabla_\theta \lambda^{\pi_{\theta}}$ in our setting. This is in practice prohibitive if the Jacobian matrix has high dimensions, which is exactly the case in our problem. Unlike SCO algorithms such as \cite[etc.]{lian2017finite,zhang2019stochastic,zhang2019multi}, our analysis enables us to avoid the Jacobian matrix estimation.
	\item For the stochastic variance-reduced gradient methods, our analysis implies a convergence of SARAH/Spider methods to global optimality and a new $\cO(\epsilon^{-2})$ sample complexity for nonconvex problems with ``hidden convexity'' structure, which has not been studied in the optimization community yet.
\end{itemize} 
\section{Related Works}
\label{sec:related-works}
%\mw{Begin Related Works right here. The entire intro should be kept within 2 pages}
Policy gradient gives rises to a rich family of RL algorithms, such as REINFORCE and many of its variants \cite{williams1992simple,baxter2001infinite,zhao2012analysis}, as well as extensions such as the natural policy gradient methods \cite{kakade2001natural,peters2008natural}, the actor-critic methods \cite{konda1999actor,konda2000actor,mnih2016asynchronous}, the trust-region policy optimization \cite{schulman2015trust,zhao2019stochastic,shani2020adaptive}, and the proximal policy optimization method \cite{schulman2017proximal,liu2019neural}, etc.  In this paper we mainly focus on REINFORCE-type methods, where many of them need $\tilde{\cO}(\epsilon^{-4})$ samples to find an  $\epsilon$-stationary solution, including the vanilla REINFORCE \cite{williams1992simple}, as well as its variants with baseline \cite{zhao2012analysis,sutton2018reinforcement} and GPOMDP \cite{baxter2001infinite}, etc. By incorporating the stochastic variance reduction techniques, the sample efficiency of PG methods can be further improved. In \cite{papini2018stochastic}, the SVRG \cite{johnson2013accelerating} variance reduction scheme is adopted and an $\cO(\epsilon^{-4})$ sample complexity is achieved, which is later improved to $\cO(\epsilon^{-10/3})$ by \cite{xu2019sample}. With additional Hessian information, \cite{shen2019hessian} achieved an $\cO(\epsilon^{-3})$ complexity. By utilizing a more efficient SARAH/Spider \cite{nguyen2017sarah,fang2018spider} variance reduction scheme, people are able to achieve $\cO(\epsilon^{-3})$ sample complexity without second-order information \cite{xu2019sample,pham2020hybrid}. We would like to comment that these results are only for finding $\epsilon$-stationary (rather than near-optimal) solutions, and all of them requires an uncheckable condition on the importance weights in every iteration.

Recently, for cumulative reward, a series of works have started to study the convergence of policy gradient method to global optimal solutions \cite{agarwal2020optimality,fazel2018global,zhang2019global,bhandari2019global,mei2020global,bhandari2020note,cen2020fast,zhang2020sample}. 
%a gradient-dominant property was established by \cite{agarwal2020optimality}, which indicates that under proper conditions, the utility value gap w.r.t. the global optimal value can be bounded by the norm of its gradient. With some additional high probability arguments, see e.g. \cite{zhang2020sample}, such gradient dominance immediately translates the existing complexity results of finding local $\epsilon$-stationary point to those of finding global $\epsilon$-optimal solutions. %, giving the $\cO(\epsilon^{-3})$ state-of-the-art sample complexity \mw{Is this result officially documented anywhere in the literature?}. 
%Though the gradient dominance property easily fits into the existing analyses, it essentially views the MDP problem from a nonconvex optimization perspective. 
In particular, \cite{zhang2020variational} exploited the hidden convexity property of the MDP problem and established the convergence to global optimality for general utility RL problem, as long as the policy gradient can be computed exactly. %This result inspires us to utilize such hidden convexity property to derive the near-optimal $\tilde{\cO}(\epsilon^{-2})$ complexity. 
%\mw{this paragraph is not about sample complexity. Some of the discussions here do not fit into Related Literatures.  JY: we are proving convergence to global optimality, which is unconventional. Global optimality of PG should also be viewed as related literatures.}

Our approach is related to the stochastic composite optimization (SCO) \cite{wang2017stochastic,wang2017accelerating}. %There are research on the SCO problem with multiple layers of composition \cite{yang2019multilevel,zhang2019multi,chen2020solving}, but that is beyond the scope of this paper. 
For the general composition problem, there have been numerous developments, including momentum-based and multi-time-scale algorithms \cite{wang2017stochastic,wang2017accelerating,ghadimi2020single}, and various composite stochastic variance-reduced algorithms \cite{lian2017finite,huo2018accelerated,zhang2019composite,zhang2019stochastic}. Our approach is also inspired by variance reduction techniques that were initially used for stochastic convex optimization, see \cite{johnson2013accelerating,schmidt2017minimizing,defazio2014saga,nguyen2017sarah}; and were later on extended to the stochastic nonconvex optimization problems \cite{allen2016variance,reddi2016stochastic,j2016proximal,reddi2016fast,fang2018spider,nguyen2019finite}. In particular, we will utilize the SARAH/Spider scheme \cite{fang2018spider,nguyen2019finite}.
% for developing PG method, which is proved optimal for nonconvex stochastic optimization \cite{fang2018spider}. 

\section{Problem Formulation}
Consider an MDP with a general utility function, denoted as $\text{MDP}(\cS,\cA,\cP,\gamma,F)$, where $\cS$ is a finite state space, $\cA$ is a finite action space, $\gamma\in(0,1)$ is a discount factor, and $F$ is some general utility function. For each state $s\in \cS$, a transition to state $s'\in \cS$ occurs when selecting an action $a\in\cA$ following the distribution $\cP(\cdot | a, s )$. For each state $s\in\cS$, a policy $\pi$ gives a distribution $\pi(\cdot|s)$ over the action space $\cA$. Let $\xi$ be the initial state distribution
%\begin{eqnarray}
%	\label{defn:occupancy}
%	\lambda^{\pi}(s,a) = \mathbb{P}\Big( \sum_{t=0}^{+\infty}\gamma^t\cdot \mathbf{1}_{\{t = s, a_t = a\}} \,\big|\, \pi, s_0\sim\xi\Big).
%\end{eqnarray}
and let the unnormalized state-action occupancy measure $\lambda^{\pi}$ be defined by \eqref{defn:occupancy}, we define the general utility function $F$ as a smooth function of the occupancy measure, and the goal of the general utility MDP is to maximize $F(\lambda^\pi)$. With the policy $\pi_\theta$ being parameterized by \eqref{defn:policy-para}, we propose to solve problem \eqref{prob:main-0}, which is 
$$\max_{\theta} R(\pi_{\theta}) := F\left(\lambda^{\pi_{\theta}}\right).$$
For notational convenience, we often write $\lambda(\theta)$ instead of $\lambda^{\pi_{\theta}}$. Such utility function is very general and includes many important problems in RL. We provide a few examples where $F$ are \emph{concave}.
\begin{example}[Cumulative reward]
	\label{example:cumulative-reward}
	When $F\left(\lambda^{\pi_{\theta}}\right) =\left\langle r, \lambda^{\pi_{\theta}}\right\rangle$, for some $r\in\RR^{|\cS||\cA|}$. Then we recover the standard cumulative sum of rewards: % with reward function $r$:
	$$R(\pi_{\theta})  =F\left(\lambda^{\pi_{\theta}}\right) =%\left\langle r, \lambda^{\pi_{\theta}}\right\rangle=
	\EE\big[\sum_{t=0}^{+\infty}\gamma^t\cdot r(s_t,a_t) \,\big|\, \pi_\theta, s_0\sim\xi \big].$$
\end{example}  
\begin{example}[Maximal entropy exploration]
	Let $\mu^{\pi_{\theta}}(s) = (1-\gamma)\sum_{a}\lambda^{\pi_{\theta}}(s,a)$, $\forall s\in\cS$ be the state occupancy measure, which is the margin  of $\lambda^{\pi_{\theta}}$ over $\cS$. Let $F(\cdot)$ be the entropy function, then we recover the objective for maximal entropy exploration \cite{hazan2019provably}:
	$$R(\pi_{\theta})=F\left(\lambda^{\pi_{\theta}}\right) =  -\sum_{s\in\cS} \mu^{\pi_{\theta}}(s)\log\mu^{\pi_{\theta}}(s).$$
\end{example}
\begin{example}[RL with Set Constraint]
	\label{example:SC-RL}
	Let $\mathbf{z}(s_t,a_t)\in\RR^d$ be a vector feedback received in each step. The cumulative feedback is $$\mathbf{u}(\pi_{\theta}):=\EE\big[\sum_{t=0}^{+\infty}\gamma^t\cdot\mathbf{z}(s_t,a_t)|s_0\sim\xi, \pi_\theta\big] = M\lambda^{\pi_{\theta}}$$
	for some matrix $M\in\RR^{d\times |\cS||\cA|}$. \cite{miryoosefi2019reinforcement} proposed a set-constrained RL problem which aims to find a policy $\pi$ s.t. $u(\pi)\in U$ for some convex set $U$. This problem can be formulated as an instance of \eqref{prob:main-0} by letting $F(\cdot)$ be the negative squared distance:
	$$R(\pi_{\theta}) = F\left(\lambda^{\pi_{\theta}}\right) = -\min_{\mathbf{u}'\in U} \|\mathbf{u}'-{\bf u}(\pi_{\theta})\|^2.$$
\end{example}

\section{The TSIVR-PG Algorithm}
In this section, we propose a Truncated Stochastic Incremental Variance-Reduced Policy Gradient (TSIVR-PG) method, which is inspired by techniques of variance reduction and off-policy estimation. A gradient truncation mechanism to proposed to provably control the importance weights in off-policy sampling. 
\subsection{Off-Policy PG Estimation}
\textbf{Policy Gradient}$~$ First, let us derive the policy gradient of the general utility. Let $V^{\pi_{\theta}}(r)$ be the cumulative reward under policy $\pi_{\theta}$, initial distribution $\xi$ and reward function $r$. By Example \ref{example:cumulative-reward}, $V^{\pi_{\theta}}(r) = \langle\lambda(\theta),r\rangle$, the chain rule and policy gradient theorem \cite{sutton2000policy} indicates that
\begin{eqnarray}  
	\label{defn:PGT}
	\nabla_{\theta} V^{\pi_{\theta}}(r)  = \big[\nabla_{\theta}\lambda(\theta)\big]^{\top} r =
	\EE_{\xi,\pi_\theta}\Big[\sum_{t=0}^{+\infty}\gamma^t\cdot r(s_t,a_t)\cdot\Big(\sum_{t'=0}^t\nabla_{\theta}\log \pi_{\theta}(a_{t'}|s_{t'})\Big)\Big],
\end{eqnarray}
where $\nabla_\theta  \lambda(\theta)$ is the Jacobian matrix of the vector mapping $\lambda(\theta)$. That is, policy gradient theorem actually provides a way for computing the Jacobian-vector product for the occupancy measure.  Following the above observation and the chain rule, we have
$$\nabla_{\theta} R(\pi_{\theta}) \!=\! \left[\nabla_\theta  \lambda(\theta)\right]^\top \!\nabla_\lambda F\!\left(\lambda(\theta)\!\right) = \nabla_{\theta} V^{\pi_{\theta}}(r) |_{r = \nabla_\lambda F\!\left(\lambda(\theta)\!\right)}.$$
Therefore, we can estimate the policy gradient using the typical REINFORCE as long as we pick the ``quasi-reward function" as 
$r := \nabla_\lambda F(\lambda(\theta))$.  To find this quasi-reward, we need to estimate the state-action occupancy measure $\lambda(\theta)$ (unless $F$ is linear).

\textbf{Importance Sampling Weight} $~$
Let $\tau = \{s_0, a_0, s_1, a_1, \cdots, s_{H-1}, a_{H-1}\}$ be a  length-$H$ trajectory generated under the initial distribution $\xi$ and the behavioral policy $\pi_{\theta_1}$. %In order to construct the variance reduced policy gradient, we will need to construct the estimators for $\lambda(\theta_2)$ and $[\nabla_\theta\lambda(\theta_2)]^\top r$  
For any target policy $\pi_{\theta_2}$, we define the importance sampling weight as 
\begin{eqnarray}
	\label{defn:ratio}
	\omega_t(\tau|\theta_1,\theta_2) =  \frac{\Pi_{h=0}^t\pi_{\theta_2}(a_h|s_h)}{\Pi_{t=0}^h\pi_{\theta_1}(a_h|s_h)}, \qquad 0\leq t\leq H-1.
\end{eqnarray}
It is worth noting that such importance sampling weight is inevitable in the stochastic variance reduced policy gradient methods, see \cite{papini2018stochastic,xu2019sample,xu2020improved,pham2020hybrid,liu2019neural}. In these works, the authors usually directly assume $\mathrm{Var}(\omega_{H-1}(\tau|\theta_1,\theta_2))\leq W$ for all the policy pairs encountered in every iteration of their algorithms. However, such assumption is too strong and is uncheckable. %Instead, we apply a gradient truncation technique to enable a rigorous proof for such statement.

Based on the above notation of behavioral and target policies, as long as the importance sampling weights, we present the following off-policy occupancy and policy gradient estimators.

\textbf{Off-Policy Occupancy Measure Estimator} $~$
Denote ${\bf e}_{sa}$ the vector with $(s,a)$-th entry being 1 while other entries being 0. We define the following estimator for $\lambda(\theta_2)$ 
\begin{eqnarray}
	\label{defn:weight-lambda-estimator}
	\widehat \lambda_\omega(\tau|\theta_1,\theta_2):= \sum_{t=0}^{H-1}\gamma^t\cdot\omega_t(\tau|\theta_1,\theta_2)\cdot\be_{s_ta_t}.
\end{eqnarray} 
When $\theta_2 = \theta_1$, $\omega_t(\tau|\theta_1,\theta_2)\equiv1$ and $\widehat \lambda_\omega(\tau|\theta_1,\theta_2)$ becomes the on-policy (discounted) empirical distribution, for which we use the simplified notion $\widehat\lambda(\tau|\theta_2):=\widehat \lambda_\omega(\tau|\theta_2,\theta_2)$.

\textbf{Off-Policy Policy Gradient Estimator} $~$
Let $r\in\RR^{|\cS||\cA|}$ be any quasi-reward vector. We aim to estimate the Jacobian-vector product $[\nabla_\theta\lambda(\theta_2)]^\top r$ for target policy $\pi_{\theta_2}$ by
\begin{equation}
	\label{defn:weight-Jacob-estimator}
	\widehat{g}_\omega(\tau|\theta_1,\theta_2,r )  := \sum_{t=0}^{H-1}\gamma^t\cdot\omega_t(\tau|\theta_1,\theta_2) \cdot r(s_t, a_t)\cdot\Big(\sum_{t'=0}^t\nabla_{\theta}\log \pi_{\theta_2}(a_{t'}|s_{t'})\Big).
\end{equation}
When $\theta_2 = \theta_1$, $\omega_t(\tau|\theta_1,\theta_2)\equiv1$ and $\widehat{g}_\omega(\tau|\theta_1,\theta_2,r )$ becomes the on-policy REINFORCE estimator with quasi-reward function $r$. In this case, we use the simplified notion $\widehat g(\tau|\theta_2,r):= \widehat{g}_\omega(\tau|\theta_2,\theta_2,r)$.

Estimators $\widehat \lambda_\omega(\tau|\theta_1,\theta_2)$ and $\widehat{g}_\omega(\tau|\theta_1,\theta_2,r )$ are almost unbiased. In details, 
$$\|\EE_{\tau\sim\pi_{\theta_1}}[\widehat \lambda_\omega(\tau|\theta_1,\theta_2)]\!-\!\lambda(\theta_2)\|\!\leq\!\cO(\gamma^H),\quad  \|\EE_{\tau\sim\pi_{\theta_1}}[\widehat{g}_\omega(\tau|\theta_1,\theta_2,r )]\!-\!\big[\nabla_{\theta} \lambda(\theta_2)\big]^{\!\top}\!r\|\!\leq\!\cO(H\!\cdot\!\gamma^H),$$ see details in Appendix \ref{appdx:prop-weighted-sampling}. Therefore the bias due to truncation is almost negligible if $H$ is properly selected.

\subsection{The TSIVR-PG Algorithm}
To achieve the $\tilde{\cO}(\epsilon^{-2})$ sample complexity, we propose an epoch-wise algorithm called Truncated Stochastic Incremental Variance-Reduced PG (TSIVR-PG) Algorithm. Let $\theta_0^i$ be the starting point of the $i$-th epoch, TSIVR-PG constructs the estimators for $\lambda(\theta_0^i)$, quasi-reward $\nabla_\lambda F(\lambda(\theta_0^i))$ and the policy gradient  $\nabla_{\theta}F(\lambda(\theta_0^i))$ by
\begin{equation}
	\label{defn:SIVR-PG-large-1}
	\lambda_{0}^i = \frac{1}{N}\sum_{\tau\in\cN_i}\widehat\lambda(\tau|\theta_0^i),\,\,\,\,r_0^i = \nabla_\lambda F(\lambda_0^i)\quad\mbox{and}\quad 						g_0^i = \frac{1}{N}\sum_{\tau\in\cN_i} \widehat{g}(\tau|\theta_{0}^i,r_0^i).
\end{equation}
where $\cN_i$ is a set of $N$ independent length-$H$ trajectories sampled under $\pi_{\theta_0^i}$. When $j\geq1$, 
\begin{equation}
	\label{defn:SIVR-PG-small-1}
	\lambda_{j}^i = \frac{1}{B}\sum_{\tau\in\cB_j^i}\left(\widehat\lambda(\tau|\theta^i_j) - \widehat\lambda_\omega(\tau|\theta_{j}^i,\theta_{j-1}^i)\right) + \lambda_{j-1}^i,\qquad r_j^i = \nabla_\lambda F(\lambda_j^i)\vspace{-0.2cm}
\end{equation}
\begin{equation}
	\label{defn:SIVR-PG-small-2}
	g_{j}^i = \frac{1}{B}\sum_{\tau\in\cB_j^i}\Big( \widehat{g}\left(\tau|\theta_{j}^i,r_{j-1}^i\right) -  \widehat{g}_\omega\left(\tau|\theta_j^i,\theta_{j-1}^i,r_{j-2}^i\right)\Big)+g_{j-1}^i,
\end{equation}
where $\cB_j^i$ is a set of $B$ independent length-$H$ trajectories sampled under $\pi_{\theta_j^i}$, and we default $r_{-1}^i:=r_0^i$. Specifically, $\widehat{g}_\omega(\tau|\theta_j^i,\theta_{j-1}^i,r_{j-2}^i)$ is used instead of $\widehat{g}_\omega(\tau|\theta_j^i,\theta_{j-1}^i,r_{j-1}^i)$ for independence issue. The details of the TSIVR-PG algorithm are stated in Algorithm \ref{alg:TSIVR-PG}.
\begin{algorithm2e}
	\caption{The TSIVR-PG Algorithm}
	\label{alg:TSIVR-PG}
	\textbf{Input:} Initial point $\theta_0^1 = \tilde\theta_0$; batch sizes $N$ and $B$; sample trajectory length $H$; stepsize $\eta$;   epoch length $m$; gradient truncation radius $\delta$. \\
	\For{Epoch $i=1,2,...,$}{ 
		\For{Iteration $j=0,...,m-1$}{
			\uIf{$j==0$}{
				Sample $N$ trajectories under policy $\pi_{\theta_{0}^i}$ of length $H$, collected as  $\cN_i$.	\\
				Compute estimators $\lambda_{0}^i$, $r_0^i$ and $g_0^i$ by \eqref{defn:SIVR-PG-large-1}.
				Default $r_{-1}^i := r_0^i$.\\
			}
			\Else{Sample $B$ trajectories under policy $\pi_{\theta_{j}^i}$ with length $H$, collected as $\cB_{j}^i$.\\
				Compute estimators $\lambda_{j}^i$, $r_j^i$ and $g_j^i$ by \eqref{defn:SIVR-PG-small-1} and \eqref{defn:SIVR-PG-small-2}. 
			}
			Update the policy parameter by a truncated gradient ascent step:\vspace{-0.2cm}
			{\small \begin{equation}
					\label{defn:GA-Trc}
					\theta_{j+1}^i = \begin{cases}
						\theta_j^i + \eta\cdot g_j^i&, \mbox{ if } \eta\|g_j^i\|\leq\delta,\\
						\theta_j^i + \delta\cdot g_j^i/\|g_j^i\|&,\mbox{ otherwise.}
					\end{cases}\vspace{-0.4cm}
			\end{equation}} 
		}
		Set $\theta_0^{i+1} = \tilde \theta_{i} = \theta_{m}^i$. \\
	}
\end{algorithm2e}

It is worth noting that the truncated gradient step \eqref{defn:GA-Trc} is equivalent to a trust region subproblem:
\begin{equation}
	\label{defn:TR}
	\theta_{j+1}^i = \argmax_{\|\theta-\theta_j^i\|\leq\delta} F(\lambda(\theta_j^i)) + \langle g_j^i, \theta-\theta_j^i\rangle + \frac{1}{2\eta}\|\theta-\theta_j^i\|^2
\end{equation}
where the approximate Hessian matrix is simply chosen as $(\eta)^{-1}\cdot I$.

\section{Sample Efficiency of TSIVR-PG}
\label{sec:complexity}
In this section, we analyze the finite-sample performance of TSIVR-PG. We first show that TSIVR-PG finds an $\epsilon$-stationary solution with $\tilde\cO(\epsilon^{-3})$ samples. Given additional assumptions, we show that TSIVR-PG finds a global $\epsilon$-optimal solution with $\tilde{\cO}(\epsilon^{-2})$ samples .
\subsection{Convergence Towards Stationary Points}
%\mw{This section takes $~1$ full page}
Since we focus on the soft-max policy parameterization where
${\small\pi_{\theta}(a|s) = \frac{\exp\{\psi(s,a;\theta)\}}{\sum_{a'}\exp\{\psi(s,a';\theta)\}}}$, we make the following assumptions on the parameterization function $\psi$ and the utility $F$.
\begin{assumption}
	\label{assumption:psi-Lip}
	$\psi(s,a;\cdot)$ is twice differentiable for all  $s$ and $a$. There $\exists \,\ell_\psi,L_\psi>0$ s.t. 
	\begin{equation}
		\label{defn:psi-Lip}
		\max_{s\in\cS,a\in\cA}\sup_{\theta}\|\nabla_{\theta}\psi(s,a;\theta)\|\leq \ell_\psi\quad\mbox{and}\quad \max_{s\in\cS,a\in\cA}\sup_{\theta}\|\nabla_{\theta}^2\psi(s,a;\theta)\|\leq L_h,
	\end{equation}
	where $\|\cdot\|$ stands for $L_2$ norm and spectral norm for vector and matrix respectively.
\end{assumption}
\begin{assumption}
	\label{assumption:F-Lip}
	$F$ is a smooth and possibly nonconvex function. There exists $\ell_{\lambda,\infty}>0$ such that
	$\|\nabla_\lambda F(\lambda)\|_\infty\leq \ell_{\lambda,\infty}$. And there exist constants $L_{\lambda,\infty},L_{\lambda}>0$ s.t. it holds that $$\begin{cases}\|\nabla_\lambda F(\lambda) - \nabla_\lambda F(\lambda')\|_\infty\leq L_{\lambda}\|\lambda-\lambda'\|_2\\
	\|\nabla_\lambda F(\lambda) - \nabla_\lambda F(\lambda')\|_\infty\leq L_{\lambda,\infty}\|\lambda-\lambda'\|_1\end{cases}\mbox{ for }\quad\forall \lambda,\lambda'.$$
\end{assumption}
As a consequence, we have the following lemmas. 
\begin{lemma}
	\label{lemma:importance}
	Given Assumption \ref{assumption:psi-Lip} and \ref{assumption:F-Lip}, the following results hold:\\
	{\bf(i).} For any policy parameter $\theta$ and any state-action pair $(s,a)$, then it holds for any $s,a$ and $\theta$ that
	$$\begin{cases}\|\nabla_\theta \log\pi_\theta(a|s)\|\leq 2\ell_\psi \\
	\|\nabla_\theta^2 \log\pi_\theta(a|s)\|\leq 2(L_\psi + \ell_\psi^2)\end{cases}\quad\mbox{and}\qquad\|\nabla_\theta F(\lambda(\theta))\|\leq\frac{2\ell_\psi\cdot \ell_{\lambda,\infty}}{(1-\gamma)^2}.$$\\
	{\bf(ii).} For any policy parameters $\theta_1$ and $\theta_2$, it holds that  $$\|\lambda^{\pi_{\theta_1}}-\lambda^{\pi_{\theta_2}}\|_1\leq  \frac{2\ell_\psi}{(1-\gamma)^2}\cdot\|\theta_1-\theta_2\|.$$ \\ 
	{\bf (iii).} The objective function $F\circ\lambda(\cdot)$ is $L_\theta$-smooth, with 
	$$L_\theta = \frac{4L_{\lambda,\infty}\cdot\ell_\psi^2}{(1-\gamma)^4} + \frac{8\ell_{\psi}^2\cdot \ell_{\lambda,\infty}}{(1-\gamma)^3}+\frac{2\ell_{\lambda,\infty}\cdot(L_\psi+\ell_{\psi}^2)}{(1-\gamma)^2}.$$
\end{lemma}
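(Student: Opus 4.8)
The three parts build on one another, so I would prove them in order. For part (i), I would start from the log-softmax identity $\log\pi_\theta(a|s) = \psi(s,a;\theta) - g(\theta)$ with log-partition $g(\theta) = \log\sum_{a'}\exp\{\psi(s,a';\theta)\}$. Differentiating gives $\nabla_\theta\log\pi_\theta(a|s) = \nabla_\theta\psi(s,a;\theta) - \EE_{a'\sim\pi_\theta}[\nabla_\theta\psi(s,a';\theta)]$, and since each $\|\nabla_\theta\psi\|\leq\ell_\psi$ by Assumption \ref{assumption:psi-Lip}, the triangle inequality yields $2\ell_\psi$. For the Hessian, the standard log-partition formula gives $\nabla_\theta^2\log\pi_\theta(a|s) = \nabla_\theta^2\psi(s,a;\theta) - \EE_{a'\sim\pi_\theta}[\nabla_\theta^2\psi(s,a';\theta)] - \mathrm{Cov}_{a'\sim\pi_\theta}[\nabla_\theta\psi(s,a';\theta)]$; bounding the first two terms by $L_\psi$ each and the covariance crudely by $\|\EE[XX^\top]\| + \|\EE[X]\EE[X]^\top\|\leq 2\ell_\psi^2$ produces $2(L_\psi+\ell_\psi^2)$. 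For the final inequality I would plug the quasi-reward $r = \nabla_\lambda F(\lambda(\theta))$, which satisfies $\|r\|_\infty\leq\ell_{\lambda,\infty}$ by Assumption \ref{assumption:F-Lip}, into the policy-gradient representation \eqref{defn:PGT} and bound termwise by $\|r\|_\infty\cdot 2\ell_\psi\cdot\sum_{t\geq 0}\gamma^t(t+1) = \frac{2\ell_\psi\ell_{\lambda,\infty}}{(1-\gamma)^2}$, using $\sum_{t\geq0}(t+1)\gamma^t = (1-\gamma)^{-2}$.

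For part (ii), I would use $\ell_1$--$\ell_\infty$ duality together with the identity $\langle r,\lambda(\theta)\rangle = V^{\pi_\theta}(r)$ from Example \ref{example:cumulative-reward}. Writing $\|\lambda(\theta_1)-\lambda(\theta_2)\|_1 = \sup_{\|r\|_\infty\leq1}\big(V^{\pi_{\theta_1}}(r) - V^{\pi_{\theta_2}}(r)\big)$ and applying the mean value theorem along the segment joining $\theta_1$ and $\theta_2$, the difference for each fixed $r$ is at most $\sup_\theta\|\nabla_\theta V^{\pi_\theta}(r)\|\cdot\|\theta_1-\theta_2\|$. The same termwise estimate gives $\|\nabla_\theta V^{\pi_\theta}(r)\|\leq\frac{2\ell_\psi}{(1-\gamma)^2}\|r\|_\infty\leq\frac{2\ell_\psi}{(1-\gamma)^2}$ for $\|r\|_\infty\leq1$, and taking the supremum over $r$ finishes the claim.

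For part (iii), I would write the exact gradient as $\nabla_\theta(F\circ\lambda)(\theta) = J(\theta)^\top h(\theta)$ with $J(\theta) = \nabla_\theta\lambda(\theta)$ and $h(\theta) = \nabla_\lambda F(\lambda(\theta))$, then split the gradient difference as $J(\theta_1)^\top(h(\theta_1)-h(\theta_2)) + (J(\theta_1)-J(\theta_2))^\top h(\theta_2)$. For the first summand I would recognize $J(\theta_1)^\top r = \nabla_\theta V^{\pi_{\theta_1}}(r)$ and apply the part-(i) gradient bound with quasi-reward $r = h(\theta_1)-h(\theta_2)$, then bound $\|h(\theta_1)-h(\theta_2)\|_\infty\leq L_{\lambda,\infty}\|\lambda(\theta_1)-\lambda(\theta_2)\|_1$ via Assumption \ref{assumption:F-Lip} and part (ii), giving $\frac{4L_{\lambda,\infty}\ell_\psi^2}{(1-\gamma)^4}\|\theta_1-\theta_2\|$. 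The second summand equals $\nabla_\theta V^{\pi_{\theta_1}}(r) - \nabla_\theta V^{\pi_{\theta_2}}(r)$ for the fixed reward $r = h(\theta_2)$ with $\|r\|_\infty\leq\ell_{\lambda,\infty}$, so it reduces to smoothness of the cumulative-reward value $\theta\mapsto V^{\pi_\theta}(r)$, i.e. a uniform bound on the policy Hessian $\nabla_\theta^2 V^{\pi_\theta}(r)$.

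This Hessian bound is where I expect the main work to lie. I would obtain the second-order policy gradient by differentiating \eqref{defn:PGT}, using the causality property of the score so that at time $t$ only the partial score $\Psi_t := \sum_{t'=0}^t\nabla_\theta\log\pi_\theta(a_{t'}|s_{t'})$ contributes, yielding $\nabla_\theta^2 V^{\pi_\theta}(r) = \EE[\sum_t\gamma^t r(s_t,a_t)(\nabla_\theta\Psi_t + \Psi_t\Psi_t^\top)]$. The part-(i) bounds give $\|\nabla_\theta\Psi_t\|\leq2(t+1)(L_\psi+\ell_\psi^2)$ and $\|\Psi_t\Psi_t^\top\|\leq4(t+1)^2\ell_\psi^2$; combined with $\sum_{t\geq0}(t+1)\gamma^t = (1-\gamma)^{-2}$ and $\sum_{t\geq0}(t+1)^2\gamma^t = \frac{1+\gamma}{(1-\gamma)^3}\leq\frac{2}{(1-\gamma)^3}$, this produces $\|\nabla_\theta^2 V^{\pi_\theta}(r)\|\leq\|r\|_\infty\big(\frac{2(L_\psi+\ell_\psi^2)}{(1-\gamma)^2} + \frac{8\ell_\psi^2}{(1-\gamma)^3}\big)$, which supplies the remaining two terms of $L_\theta$ after using $\|r\|_\infty\leq\ell_{\lambda,\infty}$. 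The delicate point is justifying the causal Hessian identity for the discounted infinite-horizon objective, where the full-trajectory score is not summable; I would handle this by arguing first on a finite horizon and passing to the limit using the $\gamma^t$ decay, or equivalently by invoking the standard discounted second-order policy-gradient formula. Summing the two summands then gives exactly the stated $L_\theta$.
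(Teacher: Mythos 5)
Your proof is correct and arrives at exactly the stated constant $L_\theta$, but it takes a genuinely different route in two places. For part (ii), the paper differentiates the map $\theta\mapsto\|\lambda(\theta)-\lambda(\theta')\|_1$ directly, writing its gradient with a $\sign(\cdot)$ weighting and then exploiting $\sum_{s,a}\be_{sa}(s_t,a_t)\equiv 1$ inside the expectation; your $\ell_1$--$\ell_\infty$ duality plus mean value theorem is the dual view of the same estimate and is arguably cleaner, since it sidesteps the nondifferentiability of the $\ell_1$ norm that the paper's $\nabla_\theta d(\theta,\theta')$ step glosses over. For part (iii) you use the same $T_1+T_2$ splitting and the identical bound for $T_1$, but your treatment of $T_2$ diverges substantively: the paper stays first-order, rewriting $[\nabla_\theta\lambda(\theta)]^\top r$ as $\sum_{s,a}\lambda^{\pi_\theta}(s,a)Q^{\pi_\theta}(s,a)\nabla_\theta\log\pi_\theta(a|s)$ and bounding the three resulting differences using part (ii) applied to the auxiliary occupancy measures $\mu_{sa}^{\pi_\theta}$ (to get Lipschitzness of the $Q$-function, via $Q^{\pi_\theta}(s,a)=r(s,a)+\gamma\langle r,\mu_{sa}^{\pi_\theta}\rangle$) together with the Hessian bound on $\log\pi_\theta$ from part (i); you instead bound the value-function Hessian through the second-order policy-gradient identity $\nabla_\theta^2 V^{\pi_\theta}(r)=\EE\big[\sum_t\gamma^t r(s_t,a_t)\big(\nabla_\theta\Psi_t+\Psi_t\Psi_t^\top\big)\big]$ and integrate along the segment. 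Both yield precisely $\frac{8\ell_\psi^2\ell_{\lambda,\infty}}{(1-\gamma)^3}+\frac{2\ell_{\lambda,\infty}(L_\psi+\ell_\psi^2)}{(1-\gamma)^2}$ for this term --- a reassuring consistency check --- and your series evaluations $\sum_{t\geq0}(t+1)\gamma^t=(1-\gamma)^{-2}$ and $\sum_{t\geq0}(t+1)^2\gamma^t=\frac{1+\gamma}{(1-\gamma)^3}\leq\frac{2}{(1-\gamma)^3}$ are right. What each approach buys: the paper's argument needs only first-order policy-gradient machinery plus the observation that part (ii) holds uniformly over initial distributions, whereas yours requires justifying the causal Hessian identity for the infinite-horizon discounted objective (differentiation under the integral, vanishing of the cross terms $\EE[r_t\Psi_t(\Psi_\infty-\Psi_t)^\top]=0$, and the finite-horizon limiting argument), which is standard but is exactly the extra work you correctly flag; in exchange your route produces a reusable uniform bound on $\nabla_\theta^2 V^{\pi_\theta}(r)$ rather than a one-off estimate. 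Part (i) is essentially identical to the paper's proof (your covariance form of $\nabla_\theta^2\log\pi_\theta$ is the same expression the paper writes out term by term), so no comment is needed there.
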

\noindent To measure the convergence, we propose to use the gradient mapping defined as follows:
$$\cG_\eta(\theta) = \frac{\theta_+ - \theta}{\eta}, \quad\mbox{ where }\quad \theta_+ =\theta+ \begin{cases}
	\theta + \eta\cdot g&, \mbox{ if } \eta\|g\|\leq\delta,\\
	\theta + \delta\cdot g/\|g\|&,\mbox{ otherwise}\end{cases}$$
where $g = \nabla_\theta F(\lambda(\theta))$. We remark that, $\EE[\|\cG_\eta(\theta_j^i)\|^2]$ is more suitable for the ascent analysis of the truncated gradient updates, compared with the commonly used $\EE[\|\nabla_\theta F(\lambda(\theta_j^i))\|^2]$. Note that $\cG_\eta(\theta) = \nabla F(\lambda(\theta))$ if $\|\cG_\eta(\theta)\|\leq\delta$ and $\|\nabla F(\lambda(\theta))\|$ is bounded for any $\theta$. Based on such observation, we have the following lemma to validate the choice of the proposed stationarity measure.
\begin{lemma}
	\label{lemma:optimality-measure} 
	For any random vector $\theta$, if $\EE[\|\cG_\eta(\theta)\|]\leq\epsilon$, then $$\EE[\|\nabla_\theta F(\lambda(\theta))\|]\leq \cO(\delta^{-1}\cdot\epsilon).$$
\end{lemma}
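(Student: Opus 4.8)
The plan is to reduce the statement to a deterministic, pointwise inequality between $\|\nabla_\theta F(\lambda(\theta))\|$ and $\|\cG_\eta(\theta)\|$ that holds for every realization of the random vector $\theta$, and then simply take expectations. Write $g := \nabla_\theta F(\lambda(\theta))$. Reading the definition of $\theta_+$ as the truncated ascent update \eqref{defn:GA-Trc} (i.e. $\theta_+ = \theta + \eta g$ when $\eta\|g\|\leq\delta$ and $\theta_+ = \theta + \delta g/\|g\|$ otherwise), the gradient mapping splits into two regimes. In the \emph{unsaturated} regime $\eta\|g\|\leq\delta$ the truncation is inactive, so $\cG_\eta(\theta) = g$ and hence $\|\cG_\eta(\theta)\| = \|g\|$. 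In the \emph{saturated} regime $\eta\|g\| > \delta$ the step is rescaled to the trust-region boundary, giving $\cG_\eta(\theta) = \tfrac{\delta}{\eta}\cdot \tfrac{g}{\|g\|}$ and thus $\|\cG_\eta(\theta)\| = \delta/\eta$. Compactly, $\|\cG_\eta(\theta)\| = \min\{\|g\|,\,\delta/\eta\}$ for every $\theta$.

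The key step is to invert this relation so as to bound $\|g\|$ from above by $\|\cG_\eta(\theta)\|$, and this is exactly where the uniform boundedness of the true policy gradient enters. By Lemma \ref{lemma:importance}(i) we have the problem-dependent bound $\|g\| = \|\nabla_\theta F(\lambda(\theta))\| \leq G$ with $G := \tfrac{2\ell_\psi\cdot\ell_{\lambda,\infty}}{(1-\gamma)^2}$. In the unsaturated regime the identity $\|g\| = \|\cG_\eta(\theta)\|$ is immediate. In the saturated regime we have $\|\cG_\eta(\theta)\| = \delta/\eta$ while $\|g\|\leq G$, so that $\|g\| \leq G = \tfrac{G\eta}{\delta}\cdot\tfrac{\delta}{\eta} = \tfrac{G\eta}{\delta}\cdot\|\cG_\eta(\theta)\|$. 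Combining the two regimes yields the single pointwise estimate
$$\|\nabla_\theta F(\lambda(\theta))\| \leq \max\Big\{1,\,\tfrac{G\eta}{\delta}\Big\}\cdot\|\cG_\eta(\theta)\|,$$
valid for all $\theta$.

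Finally, since $\eta$ is a fixed stepsize and $G$ is a fixed constant determined by Assumptions \ref{assumption:psi-Lip}--\ref{assumption:F-Lip}, taking $\delta$ small enough that $G\eta/\delta\geq 1$ makes the prefactor equal to $G\eta/\delta = \cO(\delta^{-1})$. Taking expectations of the pointwise bound (by monotonicity of expectation) and invoking the hypothesis $\EE[\|\cG_\eta(\theta)\|]\leq\epsilon$ then gives $\EE[\|\nabla_\theta F(\lambda(\theta))\|] \leq \cO(\delta^{-1})\cdot\EE[\|\cG_\eta(\theta)\|] \leq \cO(\delta^{-1}\cdot\epsilon)$, which is the claim.

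I do not expect a genuine obstacle in this argument; it is essentially a two-case algebraic computation followed by one application of expectation monotonicity. The only two points requiring mild care are, first, correctly interpreting the (apparently mis-typeset) definition of $\theta_+$ so that it matches the truncated update \eqref{defn:GA-Trc}, and second, recognizing precisely \emph{why} the loss factor is $\delta^{-1}$ rather than $\cO(1)$: it is forced by the saturated regime, where a true gradient of norm as large as $G$ is compressed by the truncation into a gradient mapping of norm only $\delta/\eta$, so recovering $\|g\|$ costs a factor proportional to $G\eta/\delta$.
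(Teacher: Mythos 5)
Your proof is correct and rests on exactly the same two ingredients as the paper's: the two-regime structure of the truncated step (unsaturated regime, where $\cG_\eta(\theta)=\nabla_\theta F(\lambda(\theta))$, versus saturated regime, where $\|\cG_\eta(\theta)\|=\delta/\eta$) together with the uniform gradient bound $\|\nabla_\theta F(\lambda(\theta))\|\leq G:=\frac{2\ell_\psi\,\ell_{\lambda,\infty}}{(1-\gamma)^2}$ from Lemma \ref{lemma:importance}(i). The only difference is packaging: the paper conditions on the event $\{\eta\|\nabla_\theta F(\lambda(\theta))\|<\delta\}$, extracts $\mathbb{P}(\mathcal{E}^c)\leq\eta\epsilon/\delta$, and arrives at the constant $1+\eta G/\delta$, whereas your pointwise inequality $\|\nabla_\theta F(\lambda(\theta))\|\leq\max\{1,\eta G/\delta\}\cdot\|\cG_\eta(\theta)\|$ reaches the same (indeed marginally tighter) constant with a single application of monotonicity of expectation, so in substance the arguments coincide.
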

Based on the notion of $\cG_\eta$, we characterize the per-iteration ascent as follows. 
\begin{lemma}
	\label{lemma:ascent-ncvx}
	Let the iterates be generated by Algorithm \ref{alg:TSIVR-PG}. Then it holds that 
	\begin{eqnarray*}
		F(\lambda(\theta_{j+1}^i))&\geq& F(\lambda(\theta_j^i)) + \frac{\eta}{4}\|\cG_\eta(\theta_j^i)\|^2 + \Big(\frac{1}{2\eta}-L_\theta\Big)\|\theta_{j+1}^i-\theta_j^i\|^2 \\
		&&- \Big(\frac{\eta}{2} + \frac{1}{2L_\theta}\Big)\|\nabla_\theta F(\lambda(\theta_j^i))-g_j^i\|^2.
	\end{eqnarray*}
	%\begin{align}
	%	F(\lambda(\theta_{j+1}^i))\geq F&(\lambda(\theta_j^i)) + \frac{\eta}{4}\|\cG_\eta(\theta_j^i)\|^2 + \Big(\frac{1}{2\eta}-L_\theta\Big)\|\theta_{j+1}^i-\theta_j^i\|^2 \nonumber\\
	%	&- \Big(\frac{\eta}{2} + \frac{1}{2L_\theta}\Big)\|\nabla_\theta F(\lambda(\theta_j^i))-g_j^i\|^2.\nonumber
	%\end{align}
\end{lemma}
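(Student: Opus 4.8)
The plan is to treat this as a deterministic one-step ascent estimate: given the current iterate $\theta_j^i$ and the (fixed) inexact gradient $g_j^i$, I will bound the increase of $F\circ\lambda$ produced by the truncated step \eqref{defn:GA-Trc}, using only the $L_\theta$-smoothness from Lemma~\ref{lemma:importance}(iii) together with the geometry of the update. The first observation I would record is that the truncated ascent step is exactly a Euclidean projection onto a trust-region ball: writing $B := \{\theta : \|\theta - \theta_j^i\| \le \delta\}$, one has $\theta_{j+1}^i = \proj_{B}(\theta_j^i + \eta g_j^i)$, since keeping $\theta_j^i + \eta g_j^i$ when $\eta\|g_j^i\|\le\delta$ and otherwise rescaling to the boundary is precisely what projection onto a ball does. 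In parallel I introduce the ``ideal'' step $\theta_+ := \proj_{B}(\theta_j^i + \eta g)$ driven by the true gradient $g := \nabla_\theta F(\lambda(\theta_j^i))$, so that by definition $\cG_\eta(\theta_j^i) = (\theta_+ - \theta_j^i)/\eta$ and hence $\tfrac{\eta}{4}\|\cG_\eta(\theta_j^i)\|^2 = \tfrac{1}{4\eta}\|\theta_+ - \theta_j^i\|^2$.

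With this setup the core chain has three ingredients. First, $L_\theta$-smoothness gives the quadratic lower bound
$$F(\lambda(\theta_{j+1}^i)) \ge F(\lambda(\theta_j^i)) + \langle g, \theta_{j+1}^i - \theta_j^i\rangle - \tfrac{L_\theta}{2}\|\theta_{j+1}^i - \theta_j^i\|^2.$$
Second, the first-order optimality (variational) inequality for $\theta_{j+1}^i = \proj_B(\theta_j^i + \eta g_j^i)$, tested against the feasible point $\theta_j^i \in B$, yields $\langle g_j^i, \theta_{j+1}^i - \theta_j^i\rangle \ge \tfrac{1}{\eta}\|\theta_{j+1}^i - \theta_j^i\|^2$. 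Third, I substitute $g = g_j^i + (g - g_j^i)$ into the smoothness bound, use the projection inequality on the $g_j^i$-part, and control the cross term $\langle g - g_j^i, \theta_{j+1}^i - \theta_j^i\rangle$ by Young's inequality with the specific weight $L_\theta$, i.e. bounded below by $-\tfrac{1}{2L_\theta}\|g - g_j^i\|^2 - \tfrac{L_\theta}{2}\|\theta_{j+1}^i - \theta_j^i\|^2$. Collecting these produces
$$F(\lambda(\theta_{j+1}^i)) \ge F(\lambda(\theta_j^i)) + \Big(\tfrac{1}{\eta} - L_\theta\Big)\|\theta_{j+1}^i - \theta_j^i\|^2 - \tfrac{1}{2L_\theta}\|g - g_j^i\|^2,$$
which already carries the $\tfrac{1}{2L_\theta}\|\nabla_\theta F(\lambda(\theta_j^i)) - g_j^i\|^2$ penalty of the claim.

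The remaining—and in my view most delicate—step is to convert the coefficient $\tfrac{1}{\eta} - L_\theta$ on $\|\theta_{j+1}^i - \theta_j^i\|^2$ into the advertised split $\tfrac{\eta}{4}\|\cG_\eta(\theta_j^i)\|^2 + (\tfrac{1}{2\eta} - L_\theta)\|\theta_{j+1}^i - \theta_j^i\|^2$. Here I would peel off a $\tfrac{1}{2\eta}\|\theta_{j+1}^i - \theta_j^i\|^2$ term and trade it for the gradient-mapping term using the non-expansiveness of the projection: since $\theta_{j+1}^i$ and $\theta_+$ are projections of $\theta_j^i + \eta g_j^i$ and $\theta_j^i + \eta g$ respectively, $\|\theta_+ - \theta_{j+1}^i\| \le \eta\|g - g_j^i\|$. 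Combined with the triangle inequality $\|\theta_+ - \theta_j^i\| \le \|\theta_+ - \theta_{j+1}^i\| + \|\theta_{j+1}^i - \theta_j^i\|$ and $(a+b)^2 \le 2a^2 + 2b^2$, this gives $\tfrac{1}{4\eta}\|\theta_+ - \theta_j^i\|^2 \le \tfrac{\eta}{2}\|g - g_j^i\|^2 + \tfrac{1}{2\eta}\|\theta_{j+1}^i - \theta_j^i\|^2$, equivalently $\tfrac{1}{2\eta}\|\theta_{j+1}^i - \theta_j^i\|^2 \ge \tfrac{\eta}{4}\|\cG_\eta(\theta_j^i)\|^2 - \tfrac{\eta}{2}\|g - g_j^i\|^2$. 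Substituting this lower bound merges the $\tfrac{\eta}{2}\|g - g_j^i\|^2$ contribution with the earlier $\tfrac{1}{2L_\theta}$ penalty to form the exact coefficient $\tfrac{\eta}{2} + \tfrac{1}{2L_\theta}$, yielding the claim. The main obstacle is precisely this last coupling: the gradient mapping is defined through the true gradient while the actual step uses $g_j^i$, so the comparison must be routed through non-expansiveness, and the bookkeeping only closes if the Young weight in the cross-term step is taken to be $L_\theta$ and the coefficient is partitioned as $\tfrac{1}{\eta} - L_\theta = \tfrac{1}{2\eta} + (\tfrac{1}{2\eta} - L_\theta)$ so that exactly $\tfrac{1}{2\eta}\|\theta_{j+1}^i - \theta_j^i\|^2$ is fed into the conversion.
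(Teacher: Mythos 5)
Your proof is correct and takes essentially the same route as the paper's: the same projection interpretation of the truncated step, the $L_\theta$-smoothness lower bound combined with the trust-region/projection optimality inequality $\langle g_j^i,\theta_{j+1}^i-\theta_j^i\rangle\geq\frac{1}{\eta}\|\theta_{j+1}^i-\theta_j^i\|^2$, Young's inequality with weight $L_\theta$ on the cross term, and the non-expansiveness comparison with the ideal step driven by the true gradient. Your inline conversion $\frac{\eta}{4}\|\cG_\eta(\theta_j^i)\|^2\leq\frac{1}{2\eta}\|\theta_{j+1}^i-\theta_j^i\|^2+\frac{\eta}{2}\|g_j^i-\nabla_\theta F(\lambda(\theta_j^i))\|^2$ is exactly the paper's separate supporting lemma (stated there as $\|\cG_\eta(\theta_j^i)\|^2\leq 2\eta^{-2}\|\theta_{j+1}^i-\theta_j^i\|^2+2\|g_j^i-\nabla_\theta F(\lambda(\theta_j^i))\|^2$), and the final bookkeeping coincides.
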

This suggests us to bound mean-squared-error $\EE[\|\nabla_\theta F(\lambda(\theta_j^i))\!-\!g_j^i\|^2]$. For this purpose, we need to bound the importance sampling weight, by utilizing the soft-max form of policy parameterization \eqref{defn:policy-para}. 
\begin{lemma}
	\label{lemma:Weight-Var0}
	For any behavioral policy $\pi_{\theta_1}$ and target policy $\pi_{\theta_2}$ parameterized by \eqref{defn:policy-para}, the importance weight satisfies $$\omega_t(\tau|\theta_1,\theta_2)\leq\exp\big\{2(t+1)\ell_\psi\|\theta_1-\theta_2\|\big\},$$ for $\forall 0\leq t\leq H-1$.
\end{lemma}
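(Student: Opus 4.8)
The plan is to reduce the bound on the full-trajectory weight to a bound on a single-step likelihood ratio. By definition,
$$\omega_t(\tau|\theta_1,\theta_2)=\prod_{h=0}^{t}\frac{\pi_{\theta_2}(a_h|s_h)}{\pi_{\theta_1}(a_h|s_h)}$$
is a product of exactly $t+1$ such ratios, so it suffices to show that each factor is at most $\exp\{2\ell_\psi\|\theta_1-\theta_2\|\}$; multiplying $t+1$ copies then yields the claimed bound $\exp\{2(t+1)\ell_\psi\|\theta_1-\theta_2\|\}$. This immediately isolates where the two constants in the exponent come from: the $(t+1)$ is the trajectory length contributed by the product, and I expect the factor $2$ to track two separate sources in the single-step estimate.

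To bound a single ratio I would substitute the soft-max form \eqref{defn:policy-para} and split it into a logit factor and a normalizer factor:
$$\frac{\pi_{\theta_2}(a|s)}{\pi_{\theta_1}(a|s)} = \exp\{\psi(s,a;\theta_2)-\psi(s,a;\theta_1)\}\cdot\frac{\sum_{a'}\exp\{\psi(s,a';\theta_1)\}}{\sum_{a'}\exp\{\psi(s,a';\theta_2)\}}.$$
The key input is the uniform gradient bound $\|\nabla_\theta\psi(s,a;\theta)\|\leq\ell_\psi$ from Assumption \ref{assumption:psi-Lip}, which by the mean value theorem gives the Lipschitz estimate $|\psi(s,a;\theta_2)-\psi(s,a;\theta_1)|\leq\ell_\psi\|\theta_1-\theta_2\|$, holding uniformly over all $(s,a)$. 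The first (logit) factor is then at most $\exp\{\ell_\psi\|\theta_1-\theta_2\|\}$ directly.

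For the normalizer factor I would apply the same Lipschitz estimate termwise: for every action $a'$ we have $\exp\{\psi(s,a';\theta_1)\}\leq\exp\{\ell_\psi\|\theta_1-\theta_2\|\}\cdot\exp\{\psi(s,a';\theta_2)\}$, and summing over $a'$ pulls the common constant outside the sum, so the ratio of normalizers is also bounded by $\exp\{\ell_\psi\|\theta_1-\theta_2\|\}$. Combining the two factors gives the per-step bound $\exp\{2\ell_\psi\|\theta_1-\theta_2\|\}$, confirming that the factor $2$ is exactly the sum of these two contributions. There is no serious obstacle here; the only point requiring a little care is keeping the inequality directions consistent for the normalizer — the numerator sum is bounded above and the denominator sum implicitly bounded below by the same uniform constant — but the termwise exponential estimate handles this cleanly, after which the product over $h=0,\dots,t$ completes the argument.
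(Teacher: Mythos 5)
Your proposal is correct and follows essentially the same route as the paper: factor $\omega_t$ into $t+1$ single-step ratios, split each ratio into a logit term and a normalizer term via the soft-max form \eqref{defn:policy-para}, and bound each by $\exp\{\ell_\psi\|\theta_1-\theta_2\|\}$ using the gradient bound from Assumption \ref{assumption:psi-Lip}. Your termwise bound on the normalizer sums is just a restatement of the paper's step bounding the ratio of sums by $\max_{a'}\exp\{\psi(s,a';\theta_1)-\psi(s,a';\theta_2)\}$, so the two arguments are the same in substance.
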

Since TSIVR-PG only uses importance weights for two consecutive iterations $\theta^i_j, \theta^i_{j-1}$ while forcing $\|\theta^i_j \!-\! \theta^i_{j-1}\|\!\leq\!\delta$ by the truncated gradient step \eqref{defn:GA-Trc}, we have $\omega_{\!H\!-\!1}(\tau|\theta^i_j,\!\theta^i_{j-1})\!\leq\! \exp\{2H\ell_\psi\delta\}$ w.p. 1. As we will see later, the effective horizon $H$ only has a mild magnitude of $\cO\big((1-\gamma)^{-1}\cdot\log(1/\epsilon)\big)$, the truncation radius only need to satisfy $\delta = \cO(H^{-1}\ell_\psi^{-1})$ s.t. $\omega_{t\!-\!1}(\tau|\theta^i_j,\!\theta^i_{j-1})\! = \!\cO(1)$, for $\forall t\!\leq\! H\!-\!1.$ 
Consequently, combining Lemma \ref{lemma:importance}, \ref{lemma:Weight-Var0} and Lemma B.1 of \cite{xu2019sample} gives the following result.
\begin{lemma}
	\label{lemma:Weight-Var}
	Let policy $\pi_\theta$ be parameterized by \eqref{defn:policy-para} with function $\psi$ satisfying Assumption \ref{assumption:psi-Lip}. Suppose behavioral policy $\pi_{\theta_1}$ and target policy $\pi_{\theta_2}$ satisfy $\|\theta_1-\theta_2\|\leq\delta$, then
	$$\EE[\omega_t(\tau|\theta_1,\theta_2)] = 1\quad\mbox{and}\quad\mathrm{Var}\left(\omega_t(\tau|\theta_1,\theta_2)\right) \leq C_\omega(t+1)\cdot\|\theta_1-\theta_2\|^2, $$
	where $\tau$ is sampled under policy $\pi_{\!\theta_1}$, and $C_\omega(t) = t\big(4\ell_\psi^2(t+\frac{1}{2}) +2L_\psi\big)(e^{4\delta t} + 1)$.	
\end{lemma}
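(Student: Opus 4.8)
The plan is to prove the two assertions separately, establishing unbiasedness first since the variance identity relies on it.

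For unbiasedness, $\EE_{\tau\sim\pi_{\theta_1}}[\omega_t(\tau|\theta_1,\theta_2)]=1$, I would write out the law of the length-$(t+1)$ prefix $(s_0,a_0,\dots,s_t,a_t)$ under $\pi_{\theta_1}$, which factors as $\xi(s_0)\prod_{h=0}^t\pi_{\theta_1}(a_h|s_h)\prod_{h=1}^t\cP(s_h|s_{h-1},a_{h-1})$. Since $\omega_t$ is exactly the ratio $\prod_{h=0}^t\pi_{\theta_2}(a_h|s_h)/\prod_{h=0}^t\pi_{\theta_1}(a_h|s_h)$ and neither the transition kernel nor the initial distribution depends on the policy, the behavioral-policy factors cancel and the summand becomes the law of the same prefix under $\pi_{\theta_2}$; summing over all prefixes gives $1$. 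The weight only reads off the first $t+1$ steps, so marginalizing out the remainder of $\tau$ is harmless.

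For the variance I would use $\mathrm{Var}(\omega_t)=\EE_{\theta_1}[\omega_t^2]-1$ and introduce the function of the target parameter $q(\theta):=\EE_{\tau\sim\pi_{\theta_1}}[\omega_t(\tau|\theta_1,\theta)^2]$. Two facts pin down $q$ near $\theta_1$: $q(\theta_1)=1$ (the weight is identically one there), and $\nabla q(\theta_1)=0$. The latter is the crucial step that makes the final bound quadratic in $\|\theta_1-\theta_2\|$: since $\nabla_\theta(\omega_t^2)=2\omega_t^2\sum_{h=0}^t\nabla_\theta\log\pi_\theta(a_h|s_h)$, evaluating at $\theta_1$ (where $\omega_t\equiv1$) leaves $2\EE_{\theta_1}[\sum_{h=0}^t\nabla\log\pi_{\theta_1}(a_h|s_h)]$, which vanishes by the score identity $\EE_{a\sim\pi_{\theta_1}(\cdot|s)}[\nabla\log\pi_{\theta_1}(a|s)]=0$ applied conditionally at each step via the tower rule. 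A second-order Taylor expansion with Lagrange remainder then yields $\mathrm{Var}(\omega_t)=q(\theta_2)-1\le\tfrac12\|\nabla^2 q(\tilde\theta)\|\,\|\theta_1-\theta_2\|^2$ for some $\tilde\theta$ on the segment $[\theta_1,\theta_2]$; because $\|\theta_1-\theta_2\|\le\delta$, this $\tilde\theta$ also satisfies $\|\theta_1-\tilde\theta\|\le\delta$, which is precisely what lets Lemma~\ref{lemma:Weight-Var0} control the weight there.

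The remaining work, and the main obstacle, is the uniform bound on $\|\nabla^2 q(\tilde\theta)\|$. Differentiating once more gives $\nabla^2(\omega_t^2)=4\omega_t^2 SS^\top+2\omega_t^2\sum_{h=0}^t\nabla^2\log\pi_\theta(a_h|s_h)$ with $S=\sum_{h=0}^t\nabla\log\pi_\theta(a_h|s_h)$; taking expectations and spectral norms I would bound each ingredient with results already in hand: $\omega_t^2\le e^{4(t+1)\ell_\psi\delta}$ from Lemma~\ref{lemma:Weight-Var0}, and $\|S\|\le 2(t+1)\ell_\psi$ together with $\|\sum_{h=0}^t\nabla^2\log\pi_\theta\|\le 2(t+1)(L_\psi+\ell_\psi^2)$ from Lemma~\ref{lemma:importance}(i). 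Collecting the $(t+1)^2$ contribution of the score-outer-product term and the $(t+1)$ contribution of the score-Hessian term against the exponential weight factor is exactly the bookkeeping of Lemma~B.1 of \cite{xu2019sample}, and tracking these constants produces $C_\omega(t+1)=(t+1)\big(4\ell_\psi^2((t+1)+\tfrac12)+2L_\psi\big)(e^{4\delta(t+1)}+1)$. The only delicate points are justifying differentiation under the expectation, which is routine because $\cS,\cA$ are finite and $\psi$ is bounded and smooth, and applying the weight bound at the intermediate point $\tilde\theta$ rather than at $\theta_2$, which is where the truncation constraint $\|\theta_1-\theta_2\|\le\delta$ enters.
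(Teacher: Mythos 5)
Your proof is correct and is essentially the argument the paper relies on: the paper gives no self-contained proof, instead deferring to Lemma B.1 of \cite{xu2019sample}, whose proof is exactly your second-order Taylor expansion of $q(\theta)=\EE_{\tau\sim\pi_{\theta_1}}[\omega_t(\tau|\theta_1,\theta)^2]$ around $\theta_1$ (value $1$, vanishing gradient by the score identity, Hessian bounded via the score/Hessian bounds of Lemma \ref{lemma:importance}(i) together with the pointwise weight bound of Lemma \ref{lemma:Weight-Var0} at the intermediate point), and your unbiasedness argument matches Proposition \ref{proposition:weighted-sampling}. One cosmetic caveat: honest bookkeeping of your Hessian bound yields a leading term $8\ell_\psi^2(t+1)^2$ rather than the stated $4\ell_\psi^2(t+1)\big((t+1)+\frac{1}{2}\big)$, and the exponential factor $e^{4(t+1)\ell_\psi\delta}$ versus $(e^{4\delta(t+1)}+1)$, but these constant-level discrepancies are already present between the paper's statement and its own appendix (which uses $e^{2H\ell_\psi\delta}+1$) and affect nothing downstream.
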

As a result, we can bound the mean-squared-error of the $g_j^i$ as follows. 
\begin{lemma}
	\label{lemma:SIVR-variance}
	For the PG estimators $g_{j}^i$, we have 
	\begin{align} 
		\EE\Big[\|g_j^i-\nabla_{\theta} F(\lambda(\theta_{j}^i))\|^2\Big] \,\,\leq\,\, \frac{C_1}{N} &+ C_2\gamma^{2H}+ \frac{C_3}{B}\cdot\sum_{j'=1}^j\EE\left[\|\theta_{j'-1}^i-\theta_{j'}^i\|^2\right] \nonumber\\
		&+ C_4\cdot\EE\left[\|\theta_{j-1}^i-\theta_{j}^i\|^2\right]\nonumber
	\end{align}
	for some constants  
	$C_1,..,C_4>0$. In case $j = 0$, we default $\sum^{0}_{j'=1}\cdot = 0$.
\end{lemma}
The expression of constants $C_i$'s are complicated, we provide their detailed formula in the appendix. If we set $H = \cO\big(\frac{\log(1/\epsilon)}{1-\gamma}\big)$ and $\delta \leq \frac{1}{2H\ell_\psi}$, then $C_i$ only depends polynomially on the Lipschitz constants, $\log(\epsilon^{-1})$, and $(1-\gamma)^{-1}$. Combining Lemma \ref{lemma:ascent-ncvx}, \ref{lemma:SIVR-variance}, and \ref{lemma:optimality-measure} gives the following theorem. 
\begin{theorem}
	\label{theorem:ncvx}
	For Algorithm \ref{alg:TSIVR-PG}, we choose $H = \frac{2\log(1/\epsilon)}{1-\gamma}$, $\delta = \frac{1}{2H\ell_\psi}$, $B = m = \epsilon^{-1}$, $N = \epsilon^{-2}$, $\eta = \frac{1}{1+(C_3+C_4)/L_\theta^{2}}\cdot\frac{1}{2L_\theta}$. After running the algorithm for $T=\epsilon^{-1}$ epochs and output $\theta_{out}$ from $\{\theta_j^i\}_{j=0,\cdots,m-1}^{i=1,\cdots,T}$ uniformly at random, then $\EE[\|\cG_\eta(\theta_{out})\|]\leq\cO(\epsilon)$. The total number of samples used is $Tm\!\cdot\!(BH \!+\! N) = \tilde{\cO}(\epsilon^{-3}).$
	By Lemma \ref{lemma:optimality-measure}, we also have $\EE[\|\nabla_\theta F(\lambda(\theta_{out}))\|]\leq\cO(\epsilon)$. 
\end{theorem}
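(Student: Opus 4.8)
The plan is to chain together the three preceding lemmas---the per-iteration ascent (Lemma~\ref{lemma:ascent-ncvx}), the variance bound (Lemma~\ref{lemma:SIVR-variance}), and the stationarity-measure comparison (Lemma~\ref{lemma:optimality-measure})---and then telescope, first across the $m$ inner iterations of a single epoch and afterward across the $T$ epochs. Concretely, I would take expectations in Lemma~\ref{lemma:ascent-ncvx} and substitute the bound of Lemma~\ref{lemma:SIVR-variance} for the mean-squared error $\EE[\|\nabla_\theta F(\lambda(\theta_j^i)) - g_j^i\|^2]$. Summing over $j=0,\dots,m-1$ within epoch $i$ produces three kinds of terms: the ascent terms $\frac{\eta}{4}\sum_j\EE[\|\cG_\eta(\theta_j^i)\|^2]$ that we want to keep; the displacement terms $\|\theta_{j+1}^i-\theta_j^i\|^2$, appearing with a positive coefficient from the ascent and negative coefficients from the variance bound; and a constant error floor of order $m(\frac{\eta}{2}+\frac{1}{2L_\theta})(\frac{C_1}{N}+C_2\gamma^{2H})$.

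The crux is to show the displacement terms carry a nonnegative net coefficient, so they may be discarded. Because the variance bound at iteration $j$ contains the cumulative sum $\frac{C_3}{B}\sum_{j'=1}^j\|\theta_{j'-1}^i-\theta_{j'}^i\|^2$, summing over $j$ attaches a factor of $m$ to each displacement, giving total negative coefficient $\frac{C_3 m}{B}+C_4$. Taking $B=m$ collapses this to $C_3+C_4$, and then the prescribed step size $\eta=\frac{1}{1+(C_3+C_4)/L_\theta^2}\cdot\frac{1}{2L_\theta}$ is calibrated so that $\frac{1}{2\eta}-L_\theta=(C_3+C_4)/L_\theta$; since $\eta\leq\frac{1}{2L_\theta}$, the inequality $\frac{1}{2\eta}-L_\theta\geq(\frac{\eta}{2}+\frac{1}{2L_\theta})(C_3+C_4)$ holds and every displacement term can be dropped. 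After this cancellation, each epoch satisfies
\[
\EE[F(\lambda(\theta_m^i))]-\EE[F(\lambda(\theta_0^i))]\;\geq\;\frac{\eta}{4}\sum_{j=0}^{m-1}\EE[\|\cG_\eta(\theta_j^i)\|^2]\;-\;m\Big(\tfrac{\eta}{2}+\tfrac{1}{2L_\theta}\Big)\Big(\tfrac{C_1}{N}+C_2\gamma^{2H}\Big).
\]

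To finish, I would telescope across epochs using $\theta_0^{i+1}=\theta_m^i$, so the left-hand side collapses to $\EE[F(\lambda(\theta_m^T))]-F(\lambda(\theta_0^1))\leq F^\star-F(\lambda(\theta_0^1))=:\Delta_F<\infty$, finiteness following from $\|\lambda(\theta)\|_1=(1-\gamma)^{-1}$ confining $\lambda$ to a bounded set on which the smooth $F$ is bounded. Dividing by $\frac{\eta Tm}{4}$ and using that $\theta_{out}$ is uniform over the $Tm$ iterates gives $\EE[\|\cG_\eta(\theta_{out})\|^2]\leq\frac{4\Delta_F}{\eta Tm}+\frac{4}{\eta}(\frac{\eta}{2}+\frac{1}{2L_\theta})(\frac{C_1}{N}+C_2\gamma^{2H})$. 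Plugging in $Tm=\epsilon^{-2}$, $N=\epsilon^{-2}$, and noting $\gamma^{2H}\leq e^{-2H(1-\gamma)}=\epsilon^{4}$ from $H=\frac{2\log(1/\epsilon)}{1-\gamma}$, every term on the right is $\cO(\epsilon^2)$. Jensen's inequality yields $\EE[\|\cG_\eta(\theta_{out})\|]\leq\cO(\epsilon)$, and Lemma~\ref{lemma:optimality-measure} converts this to $\EE[\|\nabla_\theta F(\lambda(\theta_{out}))\|]\leq\cO(\delta^{-1}\epsilon)=\tilde\cO(\epsilon)$. The sample count is the sum of the $TNH$ transitions drawn at the epoch starts and the $TmBH$ drawn in the inner loops, each of order $\tilde\cO(\epsilon^{-3})$.

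I expect the main obstacle to be the displacement-term bookkeeping of the second paragraph: the cumulative structure of the variance bound forces a factor of $m$ that would blow up unless $B$ scales with $m$, and the step size must be tuned so the ascent's positive quadratic exactly absorbs the accumulated negative quadratics. Getting the constants to line up---rather than merely scaling correctly---is the delicate part, and it is precisely what pins down the specific form of $\eta$. A secondary care point is the measurability bookkeeping: Lemma~\ref{lemma:SIVR-variance} is already an unconditional expectation, so I would apply it term-by-term after the summation, while verifying that substituting this upper bound for $\|\nabla_\theta F(\lambda(\theta_j^i))-g_j^i\|^2$ inside the ascent inequality (where it enters with a negative sign) correctly preserves the direction of the inequality.
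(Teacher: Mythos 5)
Your proposal is correct and follows the paper's own proof essentially step for step: the same substitution of Lemma~\ref{lemma:SIVR-variance} into the summed ascent inequality of Lemma~\ref{lemma:ascent-ncvx}, the same cancellation of the displacement terms via $B=m$ together with the calibrated stepsize (your verification that $\frac{1}{2\eta}-L_\theta=(C_3+C_4)/L_\theta$ dominates $(\frac{\eta}{2}+\frac{1}{2L_\theta})(C_3+C_4)$ is exactly what the paper asserts without detail), the same telescoping over epochs, the same plug-in of $Tm=N=\epsilon^{-2}$ and $\gamma^{2H}\leq\epsilon^4$, and the same Jensen/Lemma~\ref{lemma:optimality-measure} conversion at the end. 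The only cosmetic difference is the order in which you apply Jensen and Lemma~\ref{lemma:optimality-measure}, which changes nothing.
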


\subsection{Convergence Towards Global Optimality}
Next, we provide a mechanism to establish the convergence of TSIVR-PG to global optimality. For this purpose, we introduce the hidden convexity of the general utility RL problem. In addition to the smoothness of $F$ (Assumption \ref{assumption:F-Lip}), we further assume its concavity, formally stated as follows. 
\begin{assumption}
	\label{assumption:convexity}
	$F$ is a concave function of the state-action occupancy measure.
\end{assumption}
Let $\cL$ be the image of the mapping $\lambda(\theta)$. Then the parameterized \emph{policy optimization} problem \eqref{prob:main-0} can be rewritten as an equivalent \emph{occupancy optimization} problem:
\begin{equation}
	\label{prob:main-1}
	\mathrm{max}_\theta\,\, F(\lambda(\theta))\qquad\Longleftrightarrow\qquad\mathrm{max}_{\mu\in\cL} \,\,F(\mu).
\end{equation}
When the policy parameterization is powerful enough to represent any policy, the image $\cL$ is a convex polytope, see e.g. \cite{chen2018scalable}. Since $F$ is concave, the occupancy optimization problem is a \emph{convex optimization} problem. In this case, if the mapping $\lambda(\cdot)$ is invertible (see \cite{zhang2020variational}), we may view the original problem \eqref{prob:main-0} as a reformulation of a convex problem by a change of variable: $\theta = \lambda^{-1}(\mu)$. We call this property ``hidden convexity''.  However, requiring $\lambda(\cdot)$ to be invertible is too restrictive, and it doesn't even hold for simple soft-max policy with $\psi(s,a;\theta) = \theta_{sa}$ where multiple $\theta$ correspond to a same policy. Therefore, we adopt a weaker assumption where (i). $\pi_{\theta}$ can represent any policy (ii). a continuous inverse  $\lambda^{-1}(\cdot)$ can be locally defined over a subset of $\theta$.
\begin{assumption}
	\label{assumption:over-para}
	For policy parameterization of form \eqref{defn:policy-para}, $\theta$ overparametrizes the set of policies in the following sense. (i). For any $\theta$ and $\lambda(\theta)$, there exist (relative) neighourhoods  $\theta\in\mathcal{U}_\theta\subset B(\theta,\delta)$ and $\lambda(\theta)\in\mathcal{V}_{\lambda(\theta)}\subset\lambda(B(\theta,\delta))$ s.t.  $\big(\lambda|_{\mathcal{U}_\theta}\big)(\cdot)$ forms a bijection between  $\mathcal{U}_\theta$ and $\mathcal{V}_{\lambda(\theta)}$, where $\big(\lambda|_{\mathcal{U}_\theta}\big)(\cdot)$ is the confinement of $\lambda$ onto $\mathcal{U}_\theta$. We assume $(\lambda|_{\mathcal{U}_\theta})^{-1}(\cdot)$ is $\ell_\theta$-Lipschitz continuous for any $\theta$.
	(ii). Let $\pi_{\theta^*}$ be the optimal policy. Assume there exists $\bar\epsilon$ small enough, s.t. $(1-\epsilon)\lambda(\theta) + \epsilon\lambda(\theta^*)\in\mathcal{V}_{\lambda(\theta)}$ for $\forall \epsilon\leq\bar\epsilon$, $\forall\theta$. 
\end{assumption}
Based on Assumption \ref{assumption:over-para}, we replace Lemma \ref{lemma:ascent-ncvx} with the following lemma.
\begin{lemma}
	\label{lemma:ascent-0}
	For $\forall\epsilon<\bar\epsilon$ with $\bar\epsilon$ defined in Assumption \ref{assumption:over-para}, it holds for all iterations that
	\begin{align} 
		\label{lm:ascent}
		&F(\lambda(\theta^*))-F(\lambda(\theta^i_{j+1})) \,\,\leq\,\,  (1-\epsilon)\left(F(\lambda(\theta^*))-F(\lambda(\theta_{j}^i))\right)  \\ 
		& \qquad\qquad\qquad+ \Big(L_\theta+ \frac{1}{2\eta}\Big)\frac{2\epsilon^2\ell_{\theta}^2}{(1-\gamma)^2} - \Big(\frac{1}{2\eta}- L_\theta\Big)\|\theta_{j+1}^i - \theta_j^i\|^2 + \frac{1}{L_\theta}\|g_j^i - \nabla_\theta F(\lambda(\theta_j^i))\|^2.\nonumber
	\end{align}
\end{lemma}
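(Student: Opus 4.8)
The plan is to establish \eqref{lm:ascent} as a \emph{contraction}: the optimality gap $F(\lambda(\theta^*))-F(\lambda(\theta_{j+1}^i))$ should shrink by a factor $(1-\epsilon)$ per step, modulo a bias term of order $\epsilon^2$, the trust-region displacement $\|\theta_{j+1}^i-\theta_j^i\|^2$, and the gradient-estimation error $\|g_j^i-\nabla_\theta F(\lambda(\theta_j^i))\|^2$ that Lemma~\ref{lemma:SIVR-variance} will later bound. The mechanism is the ``hidden convexity'' supplied by Assumptions~\ref{assumption:convexity} and \ref{assumption:over-para}: concavity of $F$ together with the locally invertible overparameterization lets me place, \emph{inside} the trust region $B(\theta_j^i,\delta)$, a comparator parameter whose occupancy measure is the convex interpolant that moves an $\epsilon$-fraction from $\lambda(\theta_j^i)$ toward $\lambda(\theta^*)$.

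First I would build that comparator. Set $\mu_\epsilon:=(1-\epsilon)\lambda(\theta_j^i)+\epsilon\lambda(\theta^*)$; for $\epsilon\le\bar\epsilon$, Assumption~\ref{assumption:over-para}(ii) gives $\mu_\epsilon\in\mathcal{V}_{\lambda(\theta_j^i)}$, so the local inverse applies and I define $\theta_\epsilon:=(\lambda|_{\mathcal{U}_{\theta_j^i}})^{-1}(\mu_\epsilon)\in\mathcal{U}_{\theta_j^i}\subset B(\theta_j^i,\delta)$, which is thus feasible for the truncated step and satisfies $\lambda(\theta_\epsilon)=\mu_\epsilon$ exactly. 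The $\ell_\theta$-Lipschitzness of the inverse yields
\[
\|\theta_\epsilon-\theta_j^i\|\le\ell_\theta\|\mu_\epsilon-\lambda(\theta_j^i)\|_2=\epsilon\,\ell_\theta\|\lambda(\theta^*)-\lambda(\theta_j^i)\|_2,
\]
and since occupancy measures are nonnegative with mass $\|\lambda(\cdot)\|_1=(1-\gamma)^{-1}$, I bound $\|\lambda(\theta^*)-\lambda(\theta_j^i)\|_2^2\le\|\lambda(\theta^*)-\lambda(\theta_j^i)\|_1\|\lambda(\theta^*)-\lambda(\theta_j^i)\|_\infty\le 2(1-\gamma)^{-2}$, giving $\|\theta_\epsilon-\theta_j^i\|^2\le 2\epsilon^2\ell_\theta^2(1-\gamma)^{-2}$ --- precisely the bias scale in the statement.

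Next I would chain three estimates. (i) By the $L_\theta$-smoothness of $F\circ\lambda$ (Lemma~\ref{lemma:importance}(iii)), $F(\lambda(\theta_{j+1}^i))\ge F(\lambda(\theta_j^i))+\la\nabla_\theta F(\lambda(\theta_j^i)),\theta_{j+1}^i-\theta_j^i\ra-\tfrac{L_\theta}{2}\|\theta_{j+1}^i-\theta_j^i\|^2$. (ii) The truncated step \eqref{defn:GA-Trc} is the Euclidean projection of $\theta_j^i+\eta g_j^i$ onto $B(\theta_j^i,\delta)$, equivalently the maximizer over that ball of the concave model $\theta\mapsto\la g_j^i,\theta-\theta_j^i\ra-\tfrac{1}{2\eta}\|\theta-\theta_j^i\|^2$ (cf.\ \eqref{defn:TR}); evaluating this model at the feasible $\theta_\epsilon$ gives
\[
\la g_j^i,\theta_{j+1}^i-\theta_j^i\ra-\tfrac{1}{2\eta}\|\theta_{j+1}^i-\theta_j^i\|^2\ge\la g_j^i,\theta_\epsilon-\theta_j^i\ra-\tfrac{1}{2\eta}\|\theta_\epsilon-\theta_j^i\|^2.
\]
(iii) I convert the comparator inner product into a value gap using the smoothness upper bound, $\la\nabla_\theta F(\lambda(\theta_j^i)),\theta_\epsilon-\theta_j^i\ra\ge F(\mu_\epsilon)-F(\lambda(\theta_j^i))-\tfrac{L_\theta}{2}\|\theta_\epsilon-\theta_j^i\|^2$, and then concavity of $F$ (Assumption~\ref{assumption:convexity}) at $\mu_\epsilon$, $F(\mu_\epsilon)-F(\lambda(\theta_j^i))\ge\epsilon\big(F(\lambda(\theta^*))-F(\lambda(\theta_j^i))\big)$ --- the source of the $(1-\epsilon)$ factor.

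Finally I would write $\nabla_\theta F(\lambda(\theta_j^i))=g_j^i+\big(\nabla_\theta F(\lambda(\theta_j^i))-g_j^i\big)$ wherever the true gradient is paired with $\theta_{j+1}^i-\theta_j^i$ or $\theta_\epsilon-\theta_j^i$, combine (i)--(iii), and rearrange into the gap $F(\lambda(\theta^*))-F(\lambda(\theta_{j+1}^i))$. Two deterministic cross terms, $\la g_j^i-\nabla_\theta F(\lambda(\theta_j^i)),\theta_{j+1}^i-\theta_j^i\ra$ and $-\la g_j^i-\nabla_\theta F(\lambda(\theta_j^i)),\theta_\epsilon-\theta_j^i\ra$, survive; I would split each by Young's inequality $\la a,b\ra\le\tfrac{1}{2L_\theta}\|a\|^2+\tfrac{L_\theta}{2}\|b\|^2$, whose two $\tfrac{1}{2L_\theta}\|\cdot\|^2$ halves sum to the stated $\tfrac{1}{L_\theta}\|g_j^i-\nabla_\theta F(\lambda(\theta_j^i))\|^2$. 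The matching $\tfrac{L_\theta}{2}\|\theta_{j+1}^i-\theta_j^i\|^2$ degrades the displacement coefficient from $\tfrac{1}{2\eta}-\tfrac{L_\theta}{2}$ to $\tfrac{1}{2\eta}-L_\theta$, and the $\tfrac{L_\theta}{2}\|\theta_\epsilon-\theta_j^i\|^2$ half merges with the $(\tfrac{L_\theta}{2}+\tfrac{1}{2\eta})\|\theta_\epsilon-\theta_j^i\|^2$ already present to give coefficient $L_\theta+\tfrac{1}{2\eta}$; substituting the bias bound reproduces the $(L_\theta+\tfrac{1}{2\eta})\tfrac{2\epsilon^2\ell_\theta^2}{(1-\gamma)^2}$ term. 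The main obstacle is exactly the comparator construction --- guaranteeing a parameter that is simultaneously feasible for the trust region and realizes the convex occupancy interpolant --- which is what Assumption~\ref{assumption:over-para} is engineered to deliver; once it is in hand the rest is the bookkeeping above, the only care being that the Young split leaves the $\|\theta_{j+1}^i-\theta_j^i\|^2$ coefficient in a form the eventual choice of $\eta$ renders nonpositive.
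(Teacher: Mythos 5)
Your proposal is correct and follows essentially the same route as the paper's proof: the same comparator $\theta_\epsilon=(\lambda|_{\mathcal{U}_{\theta_j^i}})^{-1}\big((1-\epsilon)\lambda(\theta_j^i)+\epsilon\lambda(\theta^*)\big)$ made feasible by Assumption \ref{assumption:over-para}, the same trust-region optimality characterization \eqref{defn:TR}, the same pair of Young splits whose $\tfrac{1}{2L_\theta}$-halves sum to $\tfrac{1}{L_\theta}\|g_j^i-\nabla_\theta F(\lambda(\theta_j^i))\|^2$, and the same concavity and Lipschitz-inverse steps yielding the $(1-\epsilon)$ contraction and the $\tfrac{2\epsilon^2\ell_\theta^2}{(1-\gamma)^2}$ bias. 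The only cosmetic differences are that the paper first derives a uniform max-over-ball inequality and then substitutes the comparator (you substitute it directly into the model optimality), and that you bound $\|\lambda(\theta^*)-\lambda(\theta_j^i)\|_2^2$ via a H\"older-type product rather than the paper's expansion, both giving the identical constant $2(1-\gamma)^{-2}$.
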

The analysis of Lemma \ref{lemma:ascent-0} is very different from its nonconvex optimization counterpart (Lemma \ref{lemma:ascent-ncvx}).  Next, we derive the sample complexity of the TSIVR-PG algorithm given Lemma \ref{lemma:ascent-0} and \ref{lemma:SIVR-variance}. 

\begin{theorem}
	\label{theorem:convergence-rate}
	For TSIVR-PG method (Algorithm \ref{alg:TSIVR-PG}), let $\epsilon\in(0,\bar\epsilon)$ be the target accuracy. If we choose $H,m,B,N$ and $\delta$ according to Theorem \ref{theorem:ncvx}. and we let the stepsize to be small enough s.t. $\eta \leq \frac{1}{2L_\theta + 8(C_3+C_4)/L_\theta}$, then after at most $ T = \log_2(\epsilon^{-1})$ epochs,  $\EE\big[F(\lambda(\theta^*))\!-\!F(\lambda(\tilde\theta_{T}))\big]\leq\cO(\epsilon)$.
	%\begin{equation} 
	%	\label{thm:cvg-rate-2}
	%	\EE\Big[R^*-R(\pi_{\tilde\theta_{T}})\Big] \,\,\leq\,\, \frac{2C_2}{L_\theta\epsilon}\cdot\gamma^{2H}+\left(R^*-R(\pi_{\tilde\theta_{0}}) + \frac{(4L_\theta+\frac{2}{\eta})\ell_{\theta}^2}{(1-\gamma)^2}+\frac{2C_1}{L_\theta}\right)\cdot\epsilon 
	%\end{equation}
	The total number of samples taken is $T\times ((m-1)B+N)\times H = \tilde{\cO}(\epsilon^{-2})$.
\end{theorem}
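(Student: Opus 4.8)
The plan is to turn the single-iteration guarantee of Lemma~\ref{lemma:ascent-0} into a geometric per-epoch contraction of the optimality gap and then compose these contractions across the $T$ epochs. Write $\Delta_j^i := F(\lambda(\theta^*))-F(\lambda(\theta_j^i))$ and $d_j := \EE[\|\theta_j^i-\theta_{j-1}^i\|^2]$. Taking expectations in Lemma~\ref{lemma:ascent-0} and inserting the mean-squared-error bound of Lemma~\ref{lemma:SIVR-variance} in place of $\EE[\|g_j^i-\nabla_\theta F(\lambda(\theta_j^i))\|^2]$, I obtain the recursion
\begin{equation*}
\EE[\Delta_{j+1}^i]\leq(1-\epsilon)\EE[\Delta_j^i]+A\epsilon^2+\frac{1}{L_\theta}\Big(\tfrac{C_1}{N}+C_2\gamma^{2H}\Big)-\Big(\tfrac{1}{2\eta}-L_\theta\Big)d_{j+1}+\frac{C_3}{L_\theta B}\sum_{j'=1}^{j}d_{j'}+\frac{C_4}{L_\theta}d_j,
\end{equation*}
with $A:=(L_\theta+\tfrac{1}{2\eta})\tfrac{2\ell_\theta^2}{(1-\gamma)^2}$. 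Unrolling over one epoch $j=0,\dots,m-1$ (using $\theta_0^{i+1}=\theta_m^i$, so $\Delta_0^{i+1}=\Delta_m^i$) yields
\begin{equation*}
\EE[\Delta_0^{i+1}]\leq(1-\epsilon)^m\EE[\Delta_0^i]+\sum_{j=0}^{m-1}(1-\epsilon)^{m-1-j}\Big[A\epsilon^2+\tfrac{1}{L_\theta}\big(\tfrac{C_1}{N}+C_2\gamma^{2H}\big)\Big]+S_i,
\end{equation*}
where $S_i$ gathers all displacement-dependent contributions.

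The heart of the argument, and the step I expect to be the main obstacle, is showing $S_i\leq 0$. This is where variance reduction pays off: the negative term $-(\tfrac{1}{2\eta}-L_\theta)d_{j+1}$ produced by the trust-region/ascent step must dominate the positive, \emph{accumulating} displacement terms $\tfrac{C_3}{L_\theta B}\sum_{j'\le j}d_{j'}+\tfrac{C_4}{L_\theta}d_j$ coming from the SARAH/Spider error bound. Bounding the geometric weights by $1$ on the positive part and collecting the total coefficient of each $d_k$, the choice $B=m$ makes the accumulated weight $\tfrac{C_3}{L_\theta B}\sum_j 1$ of order $\tfrac{C_3}{L_\theta}$, so the positive part is at most $\tfrac{C_3+C_4}{L_\theta}\sum_k d_k$; this is absorbed by the negative part $-(\tfrac{1}{2\eta}-L_\theta)\sum_k(1-\epsilon)^{m-k}d_k$ precisely because the stepsize rule $\eta\le\frac{1}{2L_\theta+8(C_3+C_4)/L_\theta}$ forces $\tfrac{1}{2\eta}-L_\theta\ge\tfrac{4(C_3+C_4)}{L_\theta}$. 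The delicate point is that the negative weights $(1-\epsilon)^{m-k}\ge(1-\epsilon)^m$ never fall below a constant of order $e^{-1}$ (since $\epsilon m=1$), and the factor-$4$ slack in the stepsize condition is exactly what is needed to absorb this constant; careful bookkeeping of these coefficients is the one place where the argument could fail if the constants were chosen less conservatively.

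Granting $S_i\leq 0$, the residual error terms are straightforward: the geometric sum contributes a factor $\le\epsilon^{-1}$, turning $A\epsilon^2$ into $\cO(\epsilon)$; with $N=\epsilon^{-2}$ the term $\tfrac{C_1}{N}$ becomes $\cO(\epsilon^2)$ and hence $\cO(\epsilon)$ after summation; and with $H=\tfrac{2\log(1/\epsilon)}{1-\gamma}$ one has $\gamma^{2H}=\cO(\epsilon^4)$, which is negligible. This gives the per-epoch contraction $\EE[\Delta_0^{i+1}]\leq(1-\epsilon)^m\EE[\Delta_0^i]+\cO(\epsilon)$, and since $m=\epsilon^{-1}$ we have $(1-\epsilon)^m\leq e^{-1}=:\rho$, a constant strictly below $1$. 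Composing over $i=1,\dots,T$ and using $\tilde\theta_T=\theta_0^{T+1}$,
\begin{equation*}
\EE[F(\lambda(\theta^*))-F(\lambda(\tilde\theta_T))]\leq\rho^{T}\Delta_0^1+\frac{\cO(\epsilon)}{1-\rho}.
\end{equation*}
Here $\Delta_0^1$ is a fixed constant (the occupancy set is bounded by $\|\lambda\|_1\le(1-\gamma)^{-1}$ and $F$ is smooth), and $\tfrac{1}{1-\rho}=\cO(1)$, so the error floor is $\cO(\epsilon)$. Taking $T=\log_2(\epsilon^{-1})$ makes $\rho^T=\epsilon^{1/\ln 2}\le\epsilon$ for small $\epsilon$, so the transient term is also $\cO(\epsilon)$, establishing $\EE[F(\lambda(\theta^*))-F(\lambda(\tilde\theta_T))]\leq\cO(\epsilon)$.

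Finally I would count samples: each epoch draws $N$ trajectories at $j=0$ and $B$ trajectories at each of the $m-1$ remaining iterations, all of length $H$, for $((m-1)B+N)H$ samples; over $T$ epochs the total is $T((m-1)B+N)H$. Substituting $m=B=\epsilon^{-1}$, $N=\epsilon^{-2}$, $H=\cO\big(\tfrac{\log(1/\epsilon)}{1-\gamma}\big)$ and $T=\log_2(\epsilon^{-1})$ gives $(m-1)B+N=\cO(\epsilon^{-2})$ and hence a total of $\tilde{\cO}(\epsilon^{-2})$, as claimed.
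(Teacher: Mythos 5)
Your proposal is correct and takes essentially the same route as the paper's proof: unroll Lemma \ref{lemma:ascent-0} over an epoch with geometric weights, absorb the accumulated displacement terms from Lemma \ref{lemma:SIVR-variance} via the stepsize condition $\frac{1}{2\eta}-L_\theta\geq\frac{4(C_3+C_4)}{L_\theta}$ combined with the lower bound $(1-\epsilon)^{m}\geq\frac{1}{4}$ for $\epsilon\leq\frac12$ (exactly your factor-4 slack), and then compose the resulting per-epoch contraction over $T=\log_2(\epsilon^{-1})$ epochs with the same sample count. The only differences are cosmetic: you isolate the displacement-dependent terms as $S_i\leq 0$ where the paper bounds the weighted sum of $\sigma_j^i$ by the residual $\frac{C_1}{L_\theta}\epsilon+\frac{C_2}{L_\theta\epsilon}\gamma^{2H}$ directly, and you use the contraction constant $e^{-1}$ where the paper uses $\frac12$.
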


\section{Numerical Experiments}
\label{sec:experiments}
%We evaluate the performance of our algorithm on several RL environments from the OpenAI gym platform. %\cite{brockman2016openai}.   
%More details are given in the appendix. 

\subsection{Maximizing Cumulative Reward.} 
%Our first experiment measures the convergence rate of $R(\pi_{\theta^k})$ to the optimal value $R^*$, under the environments FrozenLake8x8 and Cartpole. 
In this experiment, we aim to evaluate the performance of the TSIVR-PG algorithm for maximizing the cumulative sum of reward. As the benchmarks, we also implement the SVRPG \cite{xu2020improved}, the SRVR-PG \cite{xu2019sample}, the HSPGA \cite{pham2020hybrid}, and the REINFORCE \cite{williams1992simple} algorithms.  Our experiment is performed on benchmark RL environments including the FrozenLake,  Acrobot and Cartpole that are available from OpenAI gym \cite{brockman2016openai}, which is a well-known toolkit for developing and comparing reinforcement learning algorithms.  For all the algorithms, their batch sizes are chosen according to their theory.  In details, let $\epsilon$ be any target accuracy.  For both TSIVR-PG and SRVR-PG, we set $N = \Theta(\epsilon^{-2})$, $B = m =  \Theta(\epsilon^{-1})$. For SVRPG, we set $N=\Theta(\epsilon^{-2})$, $B = \Theta(\epsilon^{-4/3})$ and $m = \Theta(\epsilon^{-2/3})$. For HSPGA, we set $B =  \Theta(\epsilon^{-1})$, other parameters are calculated  according to formulas in \cite{pham2020hybrid} given $B$. For REINFORCE, we set the batchsize to be $N = \Theta(\epsilon^{-2})$.  The parameter $\varepsilon$ and the stepsize/learning rate are tuned for each individual algorithm using a grid search.  For each algorithm, we run the experiment for multiple times with random initialization of the policy parameters.  The curve is obtained by first calculating the moving average of the most recent 50 episodes, and then calculate the median of the return over the outcomes of different runs.  The upper and lower bounds of the shaded area are calculated as the $\frac{1}{4}$ and $\frac{3}{4}$ quantiles over the outcomes. We run the experiment for 10 times for the FrozenLake environment and 50 times for the other environments.  The detailed parameters used in the experiments are presented in the Appendix.  

\paragraph{FrozenLake} $~\,\,$The FrozenLake8x8 environment is a tabular MDP with finite state and action spaces.  For this environment, the policy is parameterized with $\psi(s,a;\theta) = \theta_{sa}$.  

\paragraph{Cartpole and Acrobot} $~\,\,$Both the Cartpole environment and the Acrobot environment are environments with a discrete action space and a continuous state space.  For both environments,  we use a neural network with two hidden layers with width 64 for both layers to model the policy. 

\paragraph{Result} $~\,\,$We plot our experiment outcomes in Figure \ref{exp_fig_1}. The experiments show that given enough episodes, all of the algorithms are able to solve the tasks, achieving nearly optimal returns.  And as expected,  the REINFORCE algorithm takes the longest time to find the optimal policy.  While the other algorithms yield a faster convergence speed, the TSIVR-PG algorithm consistently outperforms the other benchmark algorithms under all of the environments, showing the advantage of our method.  
\begin{figure}[htb!]
	\label{exp_fig_1}
	\center
	\hspace{-0.5cm}
	\includegraphics[scale=0.35]{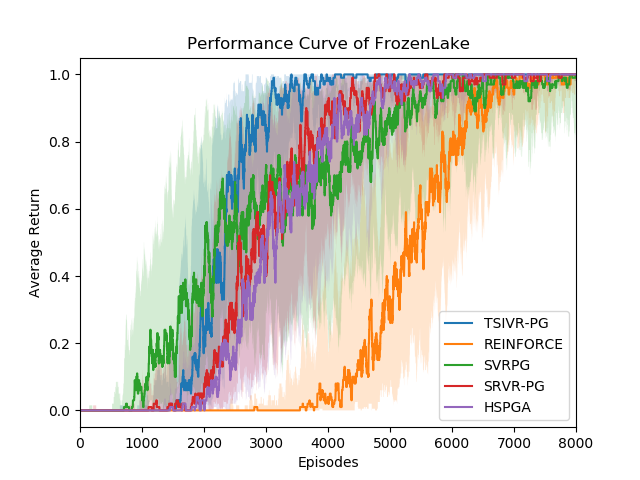}\hspace{-0.5cm}
	\includegraphics[scale=0.35]{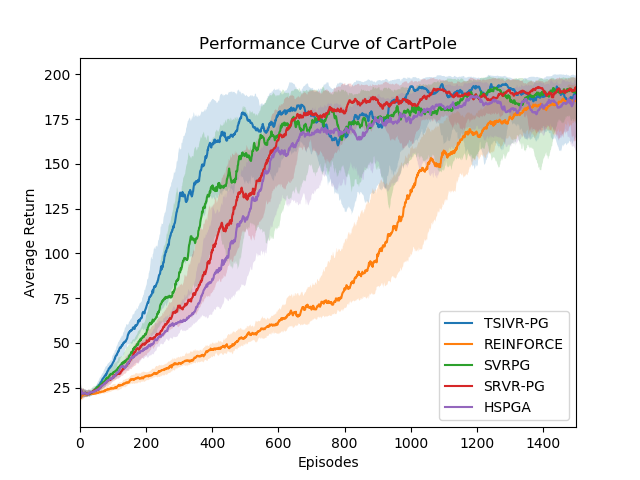}\hspace{-0.5cm}
	\includegraphics[scale=0.35]{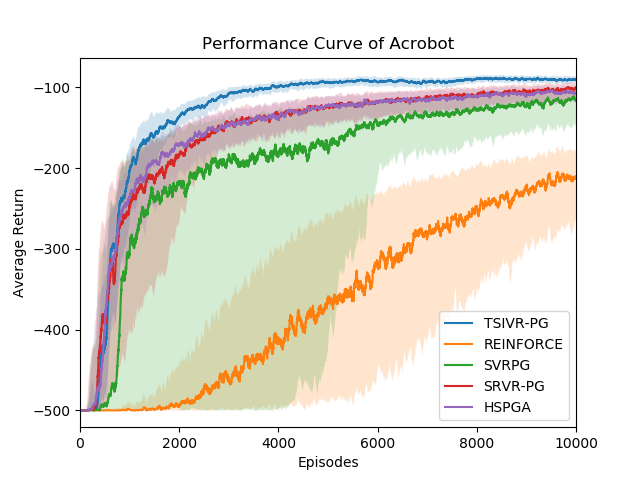}\hspace{-0.5cm}
	\caption{The performance curves of TSIVR-PG and benchmark algorithms under different environments.  The curve is the median return over multiple runs and the shaded areas are calculated as the $\frac{1}{4}$ and $\frac{3}{4}$ quantiles of the experiment outcomes. }
\end{figure}

\subsection{Validating the $\tilde{\cO}(\epsilon^{-2})$ Sample Complexity}
Besides the comparison between different benchmark algorithms, we also perform a validation experiment showing that for certain environments, the convergence rate of TSIVR-PG is close to the theoretical guarantee.  Because the parameters $N, B, m$ are dependent on the target accuracy $\epsilon$, in this section we adopt a different way to set up these parameters: We first set a fixed epoch $E$,  and perform experiments using different values of the parameter $N$.  The parameter $B$ and $m$ are set according to our choice of $N$ by $B=m=\sqrt{N}$.  The performance of the algorithm output is calculated as the average score of the last few episodes, which is then averaged over 10 independent runs.  Again, we use the FrozenLake8x8 environment to do the experiment.  Because FrozenLake8x8 is a tabular environment whose transition and reward function can be easily obtained from the document, we can calculate it's optimal value simply by value iteration, which takes $0.4146$ when we choose $\gamma=0.99$.  In this way, we calculate the gap between the algorithm return and the optimal value, and get log-log figure w.r.t. the gap and the number of episodes calculated by $E(N+Bm)=2EN$. 

\paragraph{Result} $~\,\,$The result is shown in the first sub-figure of Figure \ref{exp_gif_2}, where the blue curve is the gap between the average return of experiment outcome and the optimal value and the shaded area is the range of one standard deviation of the logarithm value.  In addition, we add a orange dotted line to fit the convergence curve, whose slope takes value $-0.496$,  which nearly matches the $O(\epsilon^{-2})$ theoretical bound (slope $-0.5$).

\subsection{Maximizing Non-linear Objective Function }
The TSIVR-PG algorithm is designed not only to solve typical RL problems, but is also able to solve a broader class of problems where the objective function is a general concave function. Unfortunately, none of the benchmark algorithms proposed in the previous section have the ability to solve this kind of problem. To evaluate the performance of our algorithm, we choose another benchmark algorithm, which is the MaxEnt algorithm \cite{hazan2019provably}.  In the experiment, we use FrozenLake8x8 environment since it's more tractable to compute $\lambda$ for a discrete state space. We set the objective function as
\begin{align*}
	F(\lambda) = \sum_{s\in\mathcal{S}}\log \bigg(\sum_{a\in\mathcal{A}}\lambda_{s, a} + \sigma\bigg),
\end{align*}
where $\sigma$ is a fixed small constant.  We choose $\sigma=0.125$ in our experiment. The orders of $N, B, m$ are set in the same way as those in section 6.1.  For the MaxEnt algorithm,  note that in the original paper, the nonlinear objective function assumes the input value is the stationary state distribution $d^\pi$,  but the input value can easily be changed into our $\lambda$ without changing the steps of the algorithm much.  The result is illustrated in Fig. \ref{exp_gif_2}.  From the result, we may see that our algorithm consistently outperforms the benchmark.

\begin{figure}[htb!]
	\label{exp_gif_2}
	\center
	\includegraphics[scale=0.45]{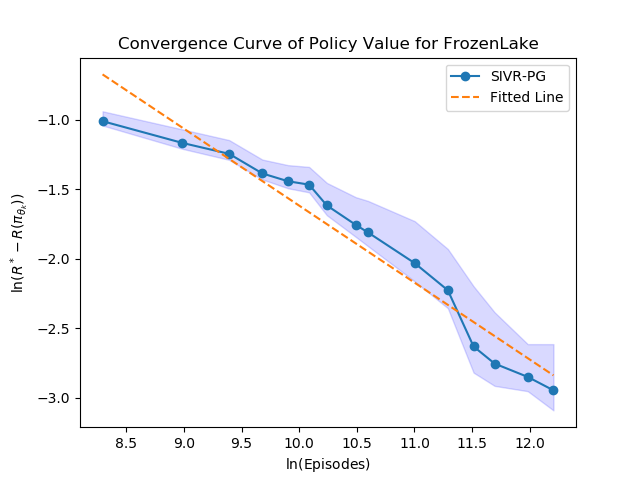}
	\includegraphics[scale=0.45]{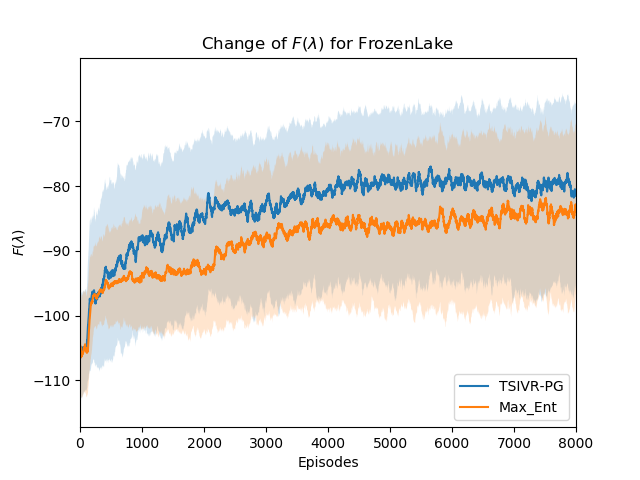}
	\caption{Left: Empirical Evaluation of the Convergence Rate of TSIVR-PG. The optimality gap achieved by TSIVR-PG decreases as the sample size increases,  nearly matching the $\epsilon^{-2}$ sample complexity theory (orange line).  Right: Performance Curve ofTSIVR-PG and MaxEnt for Maximizing Non-linear Objective Functions.  The curve is the median return over 10 runs and the shaded areas are calculated as the $\frac{1}{4}$ and $\frac{3}{4}$ quantiles of the experiment outcomes. }
\end{figure}

\bibliographystyle{plain}
\bibliography{bibliography} 
\newpage
\appendix

\section{Proof of Lemma \ref{lemma:importance}}
\begin{proof}
	Let us prove the arguments of this lemma one by one. \\
	{\bf Proof of (i).} Note that the policy $\pi_\theta$ is parameterized by $\pi_{\theta}(a|s) = \frac{\exp\{\psi(s,a;\theta)\}}{\sum_{a'}\exp\{\psi(s,a';\theta)\}}.$
	By direct computation, we have 
	$$\nabla_\theta \log \pi_\theta(a|s) = \nabla_\theta\psi(s,a;\theta) - \sum_{a'}\pi_\theta(a'|s)\cdot\nabla_\theta\psi(s,a';\theta)$$
	\begin{eqnarray*}
		\nabla_\theta^2 \log \pi_\theta(a|s) & = & \nabla_\theta^2\psi(s,a;\theta) + \sum_{a',a''}\pi_\theta(a'|s)\pi_\theta(a''|s)\cdot\nabla_\theta\psi(s,a';\theta)\nabla_\theta\psi(s,a'';\theta)^\top \\
		&&-  \sum_{a'}\pi_\theta(a'|s)\cdot\left(\nabla_\theta^2\psi(s,a';\theta)+\nabla_\theta\psi(s,a';\theta)\nabla_\theta\psi(s,a';\theta)^\top\right).
	\end{eqnarray*}
	Because $\|\nabla_\theta\psi(s,a;\theta)\|\leq\ell_\psi$ and $\|\nabla_\theta^2\psi(s,a;\theta)\|\leq L_\psi$ for $\forall s,a, \theta$, we have 
	$$\|\nabla_\theta \log \pi_\theta(a|s)\|\leq \ell_\psi + \sum_{a'}\pi_\theta(a'|s)\ell_\psi = 2\ell_\psi,$$
	$$\|\nabla_\theta^2 \log \pi_\theta(a|s)\| \leq L_\psi + \sum_{a',a''}\pi_\theta(a'|s)\pi_\theta(a''|s)\cdot\ell_\psi^2 +   \sum_{a'}\pi_\theta(a'|s)\cdot\left(L_\psi+\ell^2_\psi\right) = 2(\ell_\psi^2+L_\psi).$$
	Next, for  $\|\nabla_\theta F(\lambda(\theta))\|$, by the chain rule and policy gradient \eqref{defn:PGT}, it holds that 
	\begin{eqnarray}
		\|\nabla_\theta F(\lambda(\theta))\|
		& = & \Big\|\EE\Big[\sum_{t=0}^{+\infty}\gamma^t\cdot\frac{\partial F(\lambda(\theta))}{\partial\lambda_{s_ta_t}}\cdot\Big(\sum_{t'=0}^t\nabla_{\theta}\log \pi_{\theta}(a_{t'}|s_{t'})\Big)\Big]\Big\|\nonumber\\
		& \leq & \EE\Big[\sum_{t=0}^{+\infty}\gamma^t\cdot \|\nabla_\lambda F(\lambda(\theta))\|_\infty \cdot\Big\|\Big(\sum_{t'=0}^t\nabla_{\theta}\log \pi_{\theta}(a_{t'}|s_{t'})\Big)\Big\|\Big]\nonumber\\
		& \leq & \sum_{t=0}^{+\infty}\gamma^t\ \cdot2(t+1)\ell_\psi \ell_{\lambda,\infty}\nonumber\\
		& \leq & \frac{2\ell_\psi\cdot \ell_{\lambda,\infty}}{(1-\gamma)^2}.
	\end{eqnarray}
	{\bf Proof of (ii).} Define $d(\theta,\theta'):=\|\lambda(\theta) - \lambda(\theta')\|_1$. Then 
	$$\nabla_\theta d(\theta,\theta') = \sum_{s,a}\sign(\lambda^{\pi_\theta}(s,a) - \lambda^{\pi_{\theta'}}(s,a))\cdot\nabla_\theta\lambda^{\pi_\theta}(s,a),$$
	where $\sign(x):= 1$ if $x\geq0$ and $\sign(x):=-1$ if $x<0$. Let ${\bf e}_{sa}$ be the vector with the $(s,a)$-th entry equal to 1 while other entries equal to 0. Then $\lambda^{\pi_\theta}(s,a) = \langle\lambda^{\pi_\theta}, {\bf e}_{sa}\rangle$ equals the cumulative sum of rewards with reward function being ${\bf e}_{sa}$. By Policy Gradient Theorem \cite{sutton2000policy}, we have 
	\begin{eqnarray*}
		\|\nabla_\theta d(\theta,\theta')\| & = & \big\|\sum_{s,a}\sign(\lambda^{\pi_\theta}(s,a) - \lambda^{\pi_{\theta'}}(s,a))\cdot\nabla_\theta\lambda^{\pi_\theta}(s,a)\big\|\\
		& \leq & \sum_{s,a}\|\nabla_\theta\lambda^{\pi_\theta}(s,a)\|\\
		& \overset{(a)}{=} & \sum_{s,a}\Big\|\EE\Big[\sum_{t=0}^{+\infty}\gamma^t\cdot {\bf e}_{sa}(s_t,a_t)\cdot\Big(\sum_{t'=0}^t\nabla_{\theta}\log \pi_{\theta}(a_{t'}|s_{t'})\Big)\Big]\Big\|\\
		& \overset{(b)}{\leq} & \sum_{s,a}\EE\Big[\sum_{t=0}^{+\infty}\gamma^t\cdot {\bf e}_{sa}(s_t,a_t)\cdot\Big\|\sum_{t'=0}^t\nabla_{\theta}\log \pi_{\theta}(a_{t'}|s_{t'})\Big\|\Big]\\
		& \overset{(c)}{\leq} & 2\ell_\psi\cdot\EE\Big[\sum_{t=0}^{+\infty}\gamma^t\cdot (t+1)\cdot\sum_{s,a}{\bf e}_{sa}(s_t,a_t)\Big]\\
		& \overset{(d)}{=} & 2\ell_\psi\cdot\sum_{t=0}^{+\infty}\gamma^t\cdot (t+1)\\
		& = & \frac{2\ell_\psi}{(1-\gamma)^2}.
	\end{eqnarray*}
	In the above arguments,  (a) is due to \eqref{defn:PGT}. (b) is because $\|\EE[X]\|\leq \EE[\|X\|]$ for any random vector $X$. (c) is due to (i) of Lemma \ref{lemma:importance}. (d) is because  $\sum_{sa}{\bf e}_{sa}(s',a')\equiv1$. As a result, 
	$$d(\theta,\theta') \leq d(\theta',\theta') + \frac{2\ell_\psi}{(1-\gamma)^2}\|\theta-\theta'\| = \frac{2\ell_\psi}{(1-\gamma)^2}\|\theta-\theta'\|.$$
	This completes the proof of (ii) of Lemma \ref{lemma:importance}.	\\
	{\bf Proof of (iii).} By the chain rule, $\nabla_\theta F(\lambda(\theta)) = [\nabla_\theta \lambda(\theta)]^\top\nabla_\lambda F(\lambda(\theta))$. Therefore, 
	\begin{eqnarray}
		\label{lm:importance-1}
		&&\|\nabla_\theta F(\lambda(\theta_1)) - \nabla_\theta F(\lambda(\theta_2))\|\\
		& = & \|[\nabla_\theta \lambda(\theta_1)]^\top\nabla_\lambda F(\lambda(\theta_1)) - [\nabla_\theta \lambda(\theta_2)]^\top\nabla_\lambda F(\lambda(\theta_2))\|\nonumber\\
		& \leq & \underbrace{\|[\nabla_\theta \lambda(\theta_1)]^\top(\nabla_\lambda F(\lambda(\theta_1)) - \nabla_\lambda F(\lambda(\theta_2)))\|}_{T_1} + \underbrace{\|[\nabla_\theta \lambda(\theta_1) - \nabla_\theta \lambda(\theta_2)]^\top\nabla_\lambda F(\lambda(\theta_2))\|}_{T_2}\nonumber.
	\end{eqnarray}
	For the term $T_1$, by \eqref{defn:PGT}, we have 
	\begin{eqnarray}
		\label{lm:importance-2}
		T_1& = &\big\|[\nabla_\theta \lambda(\theta_1)]^\top(\nabla_\lambda F(\lambda(\theta_1)) - \nabla_\lambda F(\lambda(\theta_2)))\big\|\\
		& = & \Big\|\EE\Big[\sum_{t=0}^{+\infty}\gamma^t\cdot \Big(\frac{\partial F(\lambda(\theta_1))}{\partial\lambda_{s_ta_t}} - \frac{\partial F(\lambda(\theta_2))}{\partial\lambda_{s_ta_t}}\Big)\cdot\Big(\sum_{t'=0}^t\nabla_{\theta}\log \pi_{\theta}(a_{t'}|s_{t'})\Big)\Big]\Big\|\nonumber\\
		& \leq & \EE\Big[\sum_{t=0}^{+\infty}\gamma^t\cdot \|\nabla_\lambda F(\lambda(\theta_1)) - \nabla_\lambda F(\lambda(\theta_2))\|_\infty \cdot\Big\|\Big(\sum_{t'=0}^t\nabla_{\theta}\log \pi_{\theta}(a_{t'}|s_{t'})\Big)\Big\|\Big]\nonumber\\
		& \overset{(a)}{\leq} & \sum_{t=0}^{+\infty}\gamma^t\ \cdot2(t+1)\ell_\psi \cdot L_{\lambda,\infty}\|\lambda(\theta_1) - \lambda(\theta_2)\|_1\nonumber\\
		& \overset{(b)}{\leq} & \frac{4\ell_\psi^2\cdot L_{\lambda,\infty}}{(1-\gamma)^4}\cdot\|\theta_1-\theta_2\|.\nonumber
	\end{eqnarray}
	In the above arguments, (a) is due to Assumption \ref{assumption:F-Lip} and (b) is because $\sum_{t=0}^{+\infty}\gamma^t\ \cdot(t+1) = \frac{1}{(1-\gamma)^2}$.
	For the term $T_2$, denote $r = \nabla_\lambda F(\lambda(\theta_2))$, $Q^{\pi_\theta}(s,a)$ be the Q-function for the discounted MDP with reward function $r$. We also define $\mu_{sa}^{\pi_\theta}$ as the occupancy measure with initial state distribution $p(\cdot|s,a)$:
	$$\mu_{sa}^{\pi_\theta}(s',a'):=\sum_{t=0}^{+\infty}\mathbb{P}\left(s_t = s',a_t = a' \,|\, \pi_\theta, s_0\sim p(\cdot|s,a)\right).$$
	Note that (ii) of Lemma \ref{lemma:importance} does not rely on the initial state distribution, therefore, $\|\mu_{sa}^{\pi_\theta}-\mu_{sa}^{\pi_{\theta'}}\|_1\leq\frac{2\ell_\psi}{(1-\gamma)^2}\cdot\|\theta_1-\theta_2\|$ still holds for any $s,a$. Therefore, by Policy Gradient Theorem \eqref{defn:PGT} and its equivalent form provided in \cite{sutton2000policy}, we have
	\begin{eqnarray}
		\label{lm:importance-3}
		T_2& = &\|[\nabla_\theta \lambda(\theta_1) - \nabla_\theta \lambda(\theta_2)]^\top\nabla_\lambda F(\lambda(\theta_2))\|\\
		& \overset{(a)}{=} & \big\|\sum_{sa}\lambda^{\pi_{\theta_1}}(s,a)Q^{\pi_{\theta_1}}(s,a)\nabla_{\theta}\log\pi_{\theta_1}(a|s) - \sum_{sa}\lambda^{\pi_{\theta_2}}(s,a)Q^{\pi_{\theta_2}}(s,a)\nabla_{\theta}\log\pi_{\theta_2}(a|s)\big\|\nonumber\\
		& \leq &\sum_{sa}\big|\lambda^{\pi_{\theta_1}}(s,a)\!-\!\lambda^{\pi_{\theta_2}}(s,a)\big|\!\cdot\! |Q^{\pi_{\theta_1}}(s,a)|\!\cdot\!\|\nabla_{\theta}\log\pi_{\theta_1}(a|s)\| \nonumber\\
		& & \!+\! \sum_{sa}\lambda^{\pi_{\theta_2}}(s,a)\!\cdot\!\big| Q^{\pi_{\theta_1}}(s,a)\!-\!Q^{\pi_{\theta_2}}(s,a)\big|\!\cdot\!\|\nabla_{\theta}\log\pi_{\theta_1}(a|s)\|\nonumber\\
		& & \!+\! \sum_{sa}\lambda^{\pi_{\theta_2}}(s,a)\!\cdot\!\big| Q^{\pi_{\theta_2}}(s,a)\big|\!\cdot\!\|\nabla_{\theta}\log\pi_{\theta_1}(a|s)-\nabla_{\theta}\log\pi_{\theta_2}(a|s)\|\nonumber\\
		& \overset{(b)}{\leq} & \frac{2\ell_\psi\!\cdot\!\ell_{\lambda,\infty}}{1-\gamma}\|\lambda^{\pi_{\theta_1}}-\lambda^{\pi_{\theta_2}}\|_1 \!+\! \frac{2\ell_\psi}{1-\gamma}\max_{sa}|Q^{\pi_{\theta_1}}(s,a)\!-\!Q^{\pi_{\theta_2}}(s,a)| \nonumber\\
		&&\!+\! \frac{\ell_{\lambda,\infty}}{(1-\gamma)^2}\max_{sa}\|\nabla_{\theta}\log\pi_{\theta_1}(a|s)-\nabla_{\theta}\log\pi_{\theta_2}(a|s)\|\nonumber
	\end{eqnarray}
	In the above argument, the $Q^{\pi_\theta}$ in (a) denotes the Q-function of a discounted MDP with reward function $r$ and policy $\pi_\theta$. (b) is because $\|\nabla_{\theta}\log\pi_{\theta}(a|s)\|\leq 2\ell_\psi$, $\sum_{sa}\lambda^{\pi_{\theta}}(s,a) = (1-\gamma)^{-1}$ and $\!Q^{\pi_{\theta}}(s,a)  = \frac{\|\nabla_{\lambda} F(\lambda(\theta))\|_\infty}{1-\gamma} = \frac{\ell_{\lambda,\infty}}{1-\gamma}$.  Therefore, we have
	\begin{eqnarray}
		T_2& \overset{(a)}{\leq} & \frac{4\ell_{\lambda,\infty}\cdot\ell_\psi^2}{(1-\gamma)^3}\cdot\|\theta_1-\theta_2\| + \frac{2\ell_{\psi}}{1-\gamma}\max_{sa}|\langle r, \mu_{sa}^{\pi_{\theta_1}} - \mu_{sa}^{\pi_{\theta_2}}\rangle| \!+\! \frac{2\ell_{\lambda,\infty}(L_\psi+\ell_{\psi}^2)}{(1-\gamma)^2}\|\theta_1-\theta_2\|\nonumber\\
		& \leq & \frac{4\ell_{\lambda,\infty}\cdot\ell_\psi^2}{(1-\gamma)^3}\cdot\|\theta_1-\theta_2\| + \frac{2\ell_{\psi}\cdot\|r\|_\infty}{1-\gamma}\|\mu_{sa}^{\pi_{\theta_1}} - \mu_{sa}^{\pi_{\theta_2}}\|_1 \!+\! \frac{2\ell_{\lambda,\infty}(L_\psi+\ell_{\psi}^2)}{(1-\gamma)^2}\|\theta_1-\theta_2\|\nonumber\\
		&\overset{(b)}{\leq}&\Big(\frac{8\ell_{\lambda,\infty}\cdot\ell_\psi^2}{(1-\gamma)^3}+\frac{2\ell_{\lambda,\infty}\cdot(L_\psi+\ell_{\psi}^2)}{(1-\gamma)^2}\Big)\cdot\|\theta_1-\theta_2\|  \nonumber,
	\end{eqnarray}
	where (a) is because $Q^{\pi_\theta}(s,a) = r(s,a) + \gamma\cdot\langle r,\mu_{sa}^{\pi_\theta}\rangle$ and (i) \& (ii) of Lemma \ref{lemma:importance}; (b) is due to applying (ii) of Lemma \ref{lemma:importance} to $\mu_{sa}^{\pi_{\theta}}$.
	Now combining \eqref{lm:importance-1}, \eqref{lm:importance-2} and \eqref{lm:importance-3} proves (iii) of Lemma \ref{lemma:importance}.
\end{proof}

\section{Proof of Lemma \ref{lemma:optimality-measure}}
\begin{proof}
	For the ease of notation, let us define $\mathcal{E}$ as the event when $\eta\|\nabla_\theta F(\lambda(\theta))\|<\delta$, and denote $\mathcal{E}^c$ as the complement of the event $\mathcal{E}$. By definition of $\cG_\eta(\theta)$, we have 
	\begin{eqnarray}
		\EE\big[\|\cG_\eta(\theta)\|\big]
		&=& \mathbb{P}\big(\mathcal{E}\big)\cdot\EE\Big[\|\cG_\eta(\theta)\|\,\Big|\,\mathcal{E}\Big] + \mathbb{P}\big(\mathcal{E}^c\big)\cdot\EE\Big[\|\cG_\eta(\theta)\|\,\Big|\,\mathcal{E}^c\Big]\nonumber\\
		& = & \mathbb{P}\big(\mathcal{E}\big)\cdot\EE\Big[\|\nabla_\theta F(\lambda(\theta))\|\,\Big|\,\mathcal{E}\Big]+\mathbb{P}\big(\mathcal{E}^c\big)\cdot\frac{\delta}{\eta}\nonumber\\
		&\leq & \epsilon\nonumber
	\end{eqnarray} 
	This indicates that 
	$$\mathbb{P}\big(\mathcal{E}\big)\cdot\EE\Big[\|\nabla_\theta F(\lambda(\theta))\|\,\Big|\,\mathcal{E}\Big]\leq\epsilon\quad\mbox{and}\quad\mathbb{P}\big(\mathcal{E}^c\big)\leq \frac{\eta\epsilon}{\delta}.$$
	Note that Lemma \ref{lemma:importance} indicates that $\|\nabla_\theta F(\lambda(\theta))\|\leq \frac{2\ell_\psi\cdot\ell_{\lambda,\infty}}{(1-\gamma)^2}$, combined with the above inequalities yields
	\begin{eqnarray}
		&&\EE[\|\nabla_\theta F(\lambda(\theta))\|] \nonumber\\
		& = & \mathbb{P}\big(\mathcal{E}\big)\cdot\EE\Big[\|\nabla_\theta F(\lambda(\theta))\|\,\Big|\,\mathcal{E}\Big] + \mathbb{P}\big(\mathcal{E}^c\big)\cdot\EE\Big[\|\nabla_\theta F(\lambda(\theta))\|\,\Big|\,\mathcal{E}^c\Big]\nonumber\\
		& \leq & \epsilon + \frac{\eta\epsilon}{\delta}\cdot\frac{2\ell_\psi\cdot\ell_{\lambda,\infty}}{(1-\gamma)^2}\nonumber\\
		& = & \left(1+ \frac{\eta}{\delta}\cdot\frac{2\ell_\psi\cdot\ell_{\lambda,\infty}}{(1-\gamma)^2}\right)\cdot\epsilon\nonumber.
	\end{eqnarray}
	This completes the proof. 
\end{proof}

\section{Proof of Lemma \ref{lemma:ascent-ncvx}}
To prove this lemma, let us first provide a supporting lemma. 
\begin{lemma}
	For Algorithm \ref{alg:TSIVR-PG} and any iterates $\theta_j^i$ and $\theta_{j+1}^i$, it holds that
	$$\|\cG_\eta(\theta_j^i)\|^2\leq2\eta^{-2}\!\cdot\!\|\theta_{j\!+\!1}^i \!-\! \theta_j^i\|^2 \!+\! 2\!\cdot\!\|g_j^i- \nabla_\theta F(\lambda(\theta_j^i))\|^2.$$
\end{lemma}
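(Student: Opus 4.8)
The plan is to recognize that the truncated ascent step is nothing but a Euclidean projection onto a ball, and then to exploit the non-expansiveness of that projection. Concretely, for a base point $\theta$ and any vector $v$, the map
\[
\theta \longmapsto \begin{cases} \theta + \eta v, & \mbox{if } \eta\|v\|\leq\delta,\\ \theta + \delta v/\|v\|, & \mbox{otherwise,}\end{cases}
\]
coincides with $\Pi_{B(\theta,\delta)}(\theta+\eta v)$, the Euclidean projection of $\theta+\eta v$ onto the closed ball $B(\theta,\delta)$ of radius $\delta$ centered at $\theta$. Indeed, when $\eta\|v\|\leq\delta$ the point $\theta+\eta v$ already lies in $B(\theta,\delta)$ and is thus its own projection, while otherwise the projection of an exterior point onto a ball rescales the displacement $\eta v$ to length exactly $\delta$, giving $\theta+\delta v/\|v\|$. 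Applying this identity with $v=\nabla_\theta F(\lambda(\theta_j^i))$ gives $\theta_+ = \Pi_{B(\theta_j^i,\delta)}(\theta_j^i+\eta\nabla_\theta F(\lambda(\theta_j^i)))$, the idealized step defining $\cG_\eta$, whereas applying it with $v=g_j^i$ yields the realized iterate $\theta_{j+1}^i=\Pi_{B(\theta_j^i,\delta)}(\theta_j^i+\eta g_j^i)$. Crucially, both are projections onto the \emph{same} ball $B(\theta_j^i,\delta)$.

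Next I would invoke the $1$-Lipschitz (non-expansive) property of Euclidean projection onto a convex set. Since $\theta_+$ and $\theta_{j+1}^i$ are projections of $\theta_j^i+\eta\nabla_\theta F(\lambda(\theta_j^i))$ and $\theta_j^i+\eta g_j^i$ onto this common ball, non-expansiveness yields
\[
\|\theta_+ - \theta_{j+1}^i\| \leq \big\|(\theta_j^i+\eta\nabla_\theta F(\lambda(\theta_j^i))) - (\theta_j^i+\eta g_j^i)\big\| = \eta\,\|\nabla_\theta F(\lambda(\theta_j^i)) - g_j^i\|.
\]
This is the single estimate linking the idealized step (driven by the exact gradient) to the realized step (driven by the stochastic estimate $g_j^i$).

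Finally, recalling $\cG_\eta(\theta_j^i)=(\theta_+-\theta_j^i)/\eta$, I would split $\eta\,\cG_\eta(\theta_j^i)=\theta_+-\theta_j^i=(\theta_+-\theta_{j+1}^i)+(\theta_{j+1}^i-\theta_j^i)$ and square using $\|a+b\|^2\leq 2\|a\|^2+2\|b\|^2$ to get
\[
\eta^2\|\cG_\eta(\theta_j^i)\|^2 \leq 2\|\theta_+-\theta_{j+1}^i\|^2 + 2\|\theta_{j+1}^i-\theta_j^i\|^2.
\]
Substituting the non-expansiveness bound $\|\theta_+-\theta_{j+1}^i\|^2\leq \eta^2\|\nabla_\theta F(\lambda(\theta_j^i))-g_j^i\|^2$ and dividing by $\eta^2$ delivers the claim. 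There is no genuine obstacle; the only point needing care is the first step—verifying that the clipped ascent step is exactly the ball projection (including the trivial case $v=0$, where no normalization occurs)—after which non-expansiveness of projections does all the work.
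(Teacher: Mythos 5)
Your proposal is correct and follows essentially the same route as the paper's proof: both identify the truncated step as the Euclidean projection $\mathbf{Proj}_{B(\theta_j^i,\delta)}(\theta_j^i+\eta v)$, apply the decomposition $\|a+b\|^2\leq 2\|a\|^2+2\|b\|^2$, and use non-expansiveness of the projection to bound the difference between the idealized and realized steps by $\eta\|\nabla_\theta F(\lambda(\theta_j^i))-g_j^i\|$. Your explicit verification that the clipping rule coincides with the ball projection (including the degenerate case $v=0$) is a welcome touch of care that the paper states without proof.
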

\begin{proof}
	For Algorithm \ref{alg:TSIVR-PG}, the truncated gradient update of the iterates can also be written as a gradient projection step:
	$\theta_{j+1}^i = \mathbf{Proj}_{B(\theta_j^i,\delta)}\left(\theta_j^i + \eta\cdot g_j^i\right)$. 
	Denote 
	$$\hat\theta_{j+1}^i = \mathbf{Proj}_{B(\theta_j^i,\delta)}\left(\theta_j^i + \eta\cdot \nabla_\theta F(\lambda(\theta_j^i))\right).$$
	Then by Cauchy's inequality and the non-expansiveness of the projection operator yields 
	\begin{eqnarray}
		&&\|\cG_\eta(\theta_j^i)\|^2\nonumber\\
		& = & \eta^{-2}\cdot\|\hat\theta_{j+1}^i - \theta_j^i\|^2 \nonumber\\
		& \leq & 2\eta^{-2}\cdot\|\theta_{j+1}^i - \theta_j^i\|^2 + 2\eta^{-2}\cdot\|\hat\theta_{j+1}^i - \theta_{j+1}^i\|^2\nonumber\\
		& = & 2\eta^{-2}\!\cdot\!\|\theta_{j\!+\!1}^i \!-\! \theta_j^i\|^2 \!+\! 2\eta^{-2}\!\cdot\!\|\mathbf{Proj}_{B(\theta_j^i,\delta)}\left(\theta_j^i \!+\! \eta\cdot g_j^i\right) \!-\! \mathbf{Proj}_{B(\theta_j^i,\delta)}\left(\theta_j^i \!+\! \eta\cdot \nabla_\theta F(\lambda(\theta_j^i))\right)\!\|^2\nonumber\\
		& \leq & 2\eta^{-2}\!\cdot\!\|\theta_{j\!+\!1}^i \!-\! \theta_j^i\|^2 \!+\! 2\!\cdot\!\|g_j^i- \nabla_\theta F(\lambda(\theta_j^i))\|^2.\nonumber
	\end{eqnarray}
\end{proof}
Now we are ready to provide the proof of Lemma \ref{lemma:ascent-ncvx}.
\begin{proof}
	By the $L_\theta$-smoothness of the objective function, we have 
	\begin{eqnarray}
		&&F(\lambda(\theta_{j+1}^i)) \nonumber\\
		& \geq & F(\lambda(\theta_{j}^i)) + \langle\nabla_\theta F(\lambda(\theta_j^i)),\theta_{j+1}^i-\theta_j^i\rangle - \frac{L_\theta}{2}\|\theta_{j+1}^i-\theta_j^i\|^2\nonumber\\
		& = & F(\lambda(\theta_{j}^i)) + \langle g_j^i,\theta_{j+1}^i-\theta_j^i\rangle - \frac{1}{2\eta}\|\theta_{j+1}^i-\theta_j^i\|^2 + \langle\nabla_\theta F(\lambda(\theta_j^i))-g_j^i,\theta_{j+1}^i-\theta_j^i\rangle\nonumber\\
		& & + \Big(\frac{1}{2\eta}-\frac{L_\theta}{2}\Big)\|\theta_{j+1}^i-\theta_j^i\|^2\nonumber\\
		& \overset{(i)}{\geq} & F(\lambda(\theta_{j}^i)) + \langle\nabla_\theta F(\lambda(\theta_j^i))-g_j^i,\theta_{j+1}^i-\theta_j^i\rangle  + \Big(\frac{1}{\eta}-\frac{L_\theta}{2}\Big)\|\theta_{j+1}^i-\theta_j^i\|^2\nonumber\\
		& \overset{(ii)}{\geq} & F(\lambda(\theta_{j}^i)) + \frac{\eta}{4}\|\cG_\eta(\theta_j^i)\|^2 + \Big(\frac{1}{2\eta}-L_\theta\Big)\|\theta_{j+1}^i-\theta_j^i\|^2 - \Big(\frac{\eta}{2} + \frac{1}{2L_\theta}\Big )\|\nabla_\theta F(\lambda(\theta_j^i))-g_j^i\|^2\nonumber
	\end{eqnarray}
	where (i) is due to \eqref{defn:TR}; (ii) is due to $$\langle\nabla_\theta F(\lambda(\theta_j^i))-g_j^i,\theta_{j+1}^i-\theta_j^i\rangle\geq - \frac{1}{2L_\theta}\|\nabla_\theta F(\lambda(\theta_j^i))-g_j^i\|^2 - \frac{L_\theta}{2}\|\theta_{j+1}^i-\theta_j^i\|^2$$
	and adding 
	$$0\geq\frac{\eta}{4}\|\cG_\eta(\theta_j^i)\|^2-\frac{1}{2\eta}\!\cdot\!\|\theta_{j\!+\!1}^i \!-\! \theta_j^i\|^2 \!-\! \frac{\eta}{2}\!\cdot\!\|g_j^i- \nabla_\theta F(\lambda(\theta_j^i))\|^2$$
	to both sides of (i). Taking expectation on both sides of the above inequality proves the lemma. 
\end{proof}

\section{Proof of Lemma \ref{lemma:Weight-Var0}}
\begin{proof}
	Due to the parameterization form \eqref{defn:policy-para}, for any $\theta_1,\theta_2$ and any state-action pair $(s,a)$, we have 
	\begin{eqnarray*}
		\label{lm:importance-4}
		\frac{\pi_{\theta_2}(a|s)}{\pi_{\theta_1}(a|s)} & = & \frac{\exp\{\psi(s,a;\theta_2)\}}{\exp\{\psi(s,a;\theta_1)\}}\cdot\frac{\sum_{a'}\exp\{\psi(s,a';\theta_1)\}}{\sum_{a'}\exp\{\psi(s,a';\theta_2)\}}\\
		& \leq & \exp\{\psi(s,a;\theta_2)-\psi(s,a;\theta_1)\}\cdot\max_{a'}\exp\{\psi(s,a';\theta_1)-\psi(s,a';\theta_2)\}\\
		& \leq & \exp\{2\ell_\psi\cdot\|\theta_1-\theta_2\|\}
	\end{eqnarray*}
	As a result, by definition, for any $t\in\{0,1,...,H-1\}$, the importance sampling weight is 
	$$\omega_t(\tau|\theta_1,\theta_2) = \prod_{t'=0}^{t}\frac{\pi_{\theta_2}(a_{t'}|s_{t'})}{\pi_{\theta_1}(a_{t'}|s_{t'})}\leq \exp\{2(t+1)\ell_\psi\cdot\|\theta_1-\theta_2\|\}.$$
\end{proof}

\section{A few supporting lemmas}
\label{appdx:prop-weighted-sampling}
First, we would like to introduce the lemma that describes the properties of the off-policy sampling estimators. For the ease of discussion, let us define the occupancy measure of a $H$-horizon truncated trajectory as 
\begin{eqnarray}
	\label{defn:occupancy-trun}
	\lambda_H(s,a;\theta) := \sum_{t=0}^{H-1}\gamma^t\cdot\mathbb{P}\Big(s_t = s, a_t = a \,\big|\, \pi_\theta, s_0\sim\xi\Big),
\end{eqnarray}
for $\forall (s,a)\in\cS\times\cA$. Then for any vector $r$, we have 
\begin{align}
	\label{lm:PG-Jacob-trun-1}
	[\nabla_\theta \lambda_H(\theta)]^{\top}r = \EE\bigg[\sum_{t=0}^{H-1}\gamma^t\cdot r(s_t,a_t)\cdot\Big(\sum_{t'=0}^{t}\nabla_{\theta}\log \pi_{\theta}(a_{t'} | s_{t'})\Big)\Big| \pi_\theta, s_0\sim\xi\bigg].
\end{align}

Next, let us focus on the off-policy estimators. 
\begin{proposition} 
	\label{proposition:weighted-sampling} 
	Let $\tau = \{s_0, a_0, s_1, a_1, \cdots, s_{H-1}, a_{H-1}\}$ be sampled from the behavioral policy $\pi_{\theta_1}$. Then for the target policy $\pi_{\theta_2}$, it holds that
	\begin{eqnarray*}
		\EE_{\tau\sim\pi_{\theta_1}}\left[\widehat\lambda_\omega(\tau|\theta_1,\theta_2)\right] = \lambda_H(\theta_2)\qquad\mbox{and}\qquad \EE_{\tau\sim\pi_{\theta_1}}\left[\widehat{g}_\omega(\tau|\theta_1,\theta_2,r)\right] = [\nabla_\theta\lambda_H(\theta_2)]^\top r.
	\end{eqnarray*}
	The above equations also hold in case $\theta_1=\theta_2$. 
\end{proposition}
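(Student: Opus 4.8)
The plan is to use the elementary change-of-measure (importance sampling) identity, applied separately at each time step $t$. The central observation is that the weight $\omega_t(\tau|\theta_1,\theta_2)$ is precisely the likelihood ratio between the length-$(t+1)$ marginal trajectory laws under $\pi_{\theta_2}$ and $\pi_{\theta_1}$: writing the probability of a partial trajectory $(s_0,a_0,\dots,s_t,a_t)$ under $\pi_{\theta_k}$ as $\xi(s_0)\prod_{h=0}^t \pi_{\theta_k}(a_h|s_h)\prod_{h=0}^{t-1}\cP(s_{h+1}|a_h,s_h)$, the initial distribution and all transition kernels cancel in the ratio (both policies run on the same MDP dynamics), leaving exactly $\omega_t = \prod_{h=0}^t \pi_{\theta_2}(a_h|s_h)/\pi_{\theta_1}(a_h|s_h)$.

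First I would prove the occupancy-measure claim. By linearity of expectation it suffices to handle each summand $\gamma^t\,\omega_t\,\be_{s_t a_t}$ separately. Since both $\omega_t$ and $\be_{s_t a_t}$ depend only on the partial trajectory up to time $t$ (the marginalization over later steps contributes a factor of $1$), the change-of-measure identity gives $\EE_{\tau\sim\pi_{\theta_1}}[\omega_t\,\be_{s_t a_t}] = \EE_{\tau\sim\pi_{\theta_2}}[\be_{s_t a_t}]$, whose $(s,a)$-entry is $\mathbb{P}(s_t=s,a_t=a\mid\pi_{\theta_2},s_0\sim\xi)$. Summing against $\gamma^t$ over $t=0,\dots,H-1$ reproduces the definition \eqref{defn:occupancy-trun} of $\lambda_H(\theta_2)$.

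The gradient claim follows by the same template. The integrand at time $t$, namely $\omega_t\, r(s_t,a_t)\big(\sum_{t'=0}^t \nabla_\theta \log\pi_{\theta_2}(a_{t'}|s_{t'})\big)$, again depends only on $(s_0,a_0,\dots,s_t,a_t)$, so the identical weight cancellation yields $\EE_{\tau\sim\pi_{\theta_1}}[\,\cdots\,]=\EE_{\tau\sim\pi_{\theta_2}}[\, r(s_t,a_t)\sum_{t'=0}^t \nabla_\theta\log\pi_{\theta_2}(a_{t'}|s_{t'})\,]$. Summing against $\gamma^t$ produces exactly the right-hand side of \eqref{lm:PG-Jacob-trun-1}, which is $[\nabla_\theta\lambda_H(\theta_2)]^\top r$. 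The case $\theta_1=\theta_2$ is immediate: then $\omega_t\equiv 1$, the estimators reduce to their on-policy counterparts, and the two equalities are just \eqref{defn:occupancy-trun} and \eqref{lm:PG-Jacob-trun-1}.

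The step requiring the most care is the bookkeeping that makes the cancellation exact: one must verify that $\omega_t$ uses exactly the first $t+1$ policy factors, matching precisely the portion of the trajectory on which the time-$t$ summand depends, so that the time-$t$ term transforms cleanly without any leftover weight from later steps. This is the reason the estimators employ the step-dependent weight $\omega_t$ rather than the terminal weight $\omega_{H-1}$ uniformly across $t$. Everything else is routine summation.
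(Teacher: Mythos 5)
Your proof is correct and follows essentially the same route as the paper's: both identify $\omega_t(\tau|\theta_1,\theta_2)$ as the likelihood ratio $p(\tau_t|\pi_{\theta_2})/p(\tau_t|\pi_{\theta_1})$ of the length-$(t+1)$ truncated trajectory laws (with the initial distribution and transition kernels cancelling), then exploit the fact that each time-$t$ summand depends only on the first $t+1$ steps so that marginalizing over later steps contributes a factor of $1$, and conclude by summing against $\gamma^t$. The paper merely writes out this change of measure explicitly as a sum over trajectories, whereas you invoke it as an identity; the content is identical, including your (correct) observation that the step-dependent weight $\omega_t$ is what makes the cancellation exact.
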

\begin{proof}
	First, let us define $\tau_t = \{s_0,a_0,...,s_t,a_t\}$ as the truncated trajectory of $\tau$ with length $t+1$. Then we can write 
	$$p(\tau_t|\pi_{\theta}) = \xi(s_0)\pi_{\theta}(a_0|s_0)\cdot\prod_{t'=1}^{t}P(s_{t'}|s_{t'-1},a_{t'-1})\pi_{\theta}(a_{t'}|s_{t'}).$$
	Then we also have $\omega_t(\tau|\theta_1,\theta_2) = \frac{p(\tau_t|\pi_{\theta_2})}{p(\tau_t|\pi_{\theta_1})}$.
	Consequently, we have 
	\begin{eqnarray}
		&&\EE_{\tau\sim\pi_{\theta_1}}\left[\widehat\lambda_\omega(\tau|\theta_1,\theta_2)\right]\nonumber\\
		& = & \sum_{\tau} p(\tau|\pi_{\theta_1})\cdot\bigg(\sum_{t=0}^{H-1}\gamma^t\cdot\omega_t(\tau|\theta_1,\theta_2)\cdot\be_{s_ta_t}\bigg)\nonumber\\
		& = & \sum_{\tau} p(\tau|\pi_{\theta_1})\cdot\bigg(\sum_{t=0}^{H-1}\gamma^t\cdot\frac{p(\tau_t|\pi_{\theta_2})}{p(\tau_t|\pi_{\theta_1})}\cdot\be_{s_ta_t}\bigg)\nonumber\\
		& = & \sum_{\tau} \sum_{t=0}^{H-1}\gamma^t\cdot p(\tau_t|\pi_{\theta_2})\left(\prod_{t'=t+1}^{H-1}P(s_{t'}|s_{t'-1},a_{t'-1})\pi_{\theta_1}(a_{t'}|s_{t'})\right)\cdot\be_{s_ta_t}\nonumber\\
		& = & \sum_{\tau_t} \gamma^t\cdot p(\tau_t|\pi_{\theta_2})\sum_{\{s_{t'},a_{t'}\}_{t'=t+1}^{H-1}}\left(\prod_{t'=t+1}^{H-1}P(s_{t'}|s_{t'-1},a_{t'-1})\pi_{\theta_1}(a_{t'}|s_{t'})\right)\cdot\be_{s_ta_t}\nonumber\\
		& \overset{(i)}{=} & \sum_{\tau_t} \gamma^t\cdot p(\tau_t|\pi_{\theta_2})\sum_{\{s_{t'},a_{t'}\}_{t'=t+1}^{H-1}}\left(\prod_{t'=t+1}^{H-1}P(s_{t'}|s_{t'-1},a_{t'-1})\pi_{\theta_2}(a_{t'}|s_{t'})\right)\cdot\be_{s_ta_t}\nonumber\\
		& = & \sum_{\tau} \sum_{t=0}^{H-1}\gamma^t\cdot p(\tau_t|\pi_{\theta_2})\left(\prod_{t'=t+1}^{H-1}P(s_{t'}|s_{t'-1},a_{t'-1})\pi_{\theta_2}(a_{t'}|s_{t'})\right)\cdot\be_{s_ta_t}\nonumber\\
		& = & \sum_{\tau} p(\tau|\pi_{\theta_2})\cdot\bigg(\sum_{t=0}^{H-1}\gamma^t\cdot\be_{s_ta_t}\bigg)\nonumber\\
		& = & \lambda_H(\theta_2),\nonumber
	\end{eqnarray}
	where (i)  is due to the fact that $\sum_{\{s_{t'},a_{t'}\}_{t'=t+1}^{H-1}}\left(\prod_{t'=t+1}^{H-1}P(s_{t'}|s_{t'-1},a_{t'-1})\pi_{\theta}(a_{t'}|s_{t'})\right) \equiv 1$ for any policy $\pi_\theta$.	Similarly, we also have 
	$$\EE\left[\widehat{g}_{\omega}(\tau|\theta_1,\theta_2,r)\right] = [\nabla_\theta\lambda_H(\theta_2)]^\top r.$$
\end{proof}
Next, we introduce the Lipschitz continuity of these estimators.  
\begin{lemma}
	\label{lemma:Lipschitz-g}
	Let $\tau = \{s_0,a_0,...,s_{H-1},a_{H-1}\}$ be an arbitrary trajectory. Then the following inequalities hold true.
	\begin{itemize}
		\item[(i).] For $\forall\theta$, and $\forall r_1, r_2$, it holds that $\|\widehat g(\tau|\theta,r_1)-\widehat g(\tau|\theta,r_2)\|\leq \frac{2\ell_\psi}{(1-\gamma)^2}\cdot\|r_1-r_2\|_\infty.$
		\item[(ii).] For $\forall\theta_1,\theta_2$, and $\forall r$,  it holds that $
		\|\widehat g(\tau|\theta_1,r)-\widehat g(\tau|\theta_2,r)\|\leq \frac{2(\ell_\psi^2+L_\psi)\cdot\|r\|_\infty}{(1-\gamma)^2}\cdot\|\theta_1-\theta_2\|.$
	\end{itemize}
\end{lemma}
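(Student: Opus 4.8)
The plan is to prove both parts directly from the definition of the on-policy estimator
$\widehat g(\tau|\theta,r) = \sum_{t=0}^{H-1}\gamma^t\cdot r(s_t,a_t)\cdot\big(\sum_{t'=0}^t\nabla_\theta\log\pi_\theta(a_{t'}|s_{t'})\big)$,
applying the triangle inequality term by term and then invoking the score-function and Hessian bounds from Lemma \ref{lemma:importance}(i) together with the elementary identity $\sum_{t=0}^\infty \gamma^t(t+1) = (1-\gamma)^{-2}$. The two parts differ only in which argument is perturbed, so the structure is the same and the shared geometric-series computation can be factored out.

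For part (i) the two estimators share the policy parameter $\theta$, so their difference is $\sum_{t=0}^{H-1}\gamma^t\,(r_1(s_t,a_t)-r_2(s_t,a_t))\,\big(\sum_{t'=0}^t\nabla_\theta\log\pi_\theta(a_{t'}|s_{t'})\big)$. First I would bound $|r_1(s_t,a_t)-r_2(s_t,a_t)|\le\|r_1-r_2\|_\infty$; then, using $\|\nabla_\theta\log\pi_\theta(a|s)\|\le 2\ell_\psi$ from Lemma \ref{lemma:importance}(i), the inner sum of $t+1$ score functions is at most $2(t+1)\ell_\psi$. Summing $\sum_{t}\gamma^t\cdot 2(t+1)\ell_\psi$ and applying the series identity produces the factor $2\ell_\psi(1-\gamma)^{-2}$, as claimed.

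For part (ii) the two estimators share the reward $r$, so their difference is $\sum_{t=0}^{H-1}\gamma^t\,r(s_t,a_t)\,\big(\sum_{t'=0}^t(\nabla_\theta\log\pi_{\theta_1}(a_{t'}|s_{t'})-\nabla_\theta\log\pi_{\theta_2}(a_{t'}|s_{t'}))\big)$. The one genuinely new ingredient is to convert the Hessian bound $\|\nabla_\theta^2\log\pi_\theta(a|s)\|\le 2(\ell_\psi^2+L_\psi)$ from Lemma \ref{lemma:importance}(i) into Lipschitz continuity of the score map, namely $\|\nabla_\theta\log\pi_{\theta_1}(a|s)-\nabla_\theta\log\pi_{\theta_2}(a|s)\|\le 2(\ell_\psi^2+L_\psi)\|\theta_1-\theta_2\|$. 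After bounding $|r(s_t,a_t)|\le\|r\|_\infty$ and the inner sum by $2(t+1)(\ell_\psi^2+L_\psi)\|\theta_1-\theta_2\|$, the same summation $\sum_t\gamma^t(t+1)=(1-\gamma)^{-2}$ delivers the stated constant.

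I do not expect a real obstacle: once the ingredients of Lemma \ref{lemma:importance}(i) are in hand, both estimates are routine. The only point requiring mild care is the passage in part (ii) from a spectral-norm Hessian bound to the gradient Lipschitz estimate, which follows from writing $\nabla_\theta\log\pi_{\theta_1}(a|s)-\nabla_\theta\log\pi_{\theta_2}(a|s) = \int_0^1 \nabla_\theta^2\log\pi_{\theta_2+t(\theta_1-\theta_2)}(a|s)\,(\theta_1-\theta_2)\,dt$ along the segment $t\mapsto\nabla_\theta\log\pi_{\theta_2+t(\theta_1-\theta_2)}(a|s)$ and bounding by the operator norm; I would state this explicitly rather than leave it implicit.
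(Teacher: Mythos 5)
Your proposal matches the paper's proof essentially step for step: both parts proceed by the term-by-term triangle inequality on the defining sum, the bounds $\|\nabla_\theta\log\pi_\theta(a|s)\|\leq 2\ell_\psi$ and $\|\nabla_\theta^2\log\pi_\theta(a|s)\|\leq 2(\ell_\psi^2+L_\psi)$ from Lemma \ref{lemma:importance}(i), and the identity $\sum_{t\geq 0}\gamma^t(t+1)=(1-\gamma)^{-2}$. Your only departure is making explicit, via the integral representation along the segment from $\theta_2$ to $\theta_1$, the Hessian-to-score-Lipschitz step that the paper uses implicitly in part (ii) --- a sound clarification rather than a different argument.
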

\begin{proof}
	For the first inequality, we have  
	\begin{eqnarray*} 
		\|\widehat g(\tau|\theta,r_1)-\widehat g(\tau|\theta,r_2)\| & = & \Big\| \sum_{t=0}^{H-1}\gamma^t\cdot \left(r_1(s_t,a_t)-r_2(s_t,a_t)\right)\cdot\Big(\sum_{t'=0}^t\nabla_\theta\log \pi_{\theta}(a_{t'}|s_{t'})\Big)\Big\|\nonumber\\
		& \leq & \sum_{t=0}^{H-1}\gamma^t(t+1)\cdot2\ell_\psi\cdot\|r_1-r_2\|_\infty\nonumber\\
		&\leq& \frac{2\ell_\psi\cdot\|r_1-r_2\|_\infty}{(1-\gamma)^2}.\nonumber
	\end{eqnarray*}
	For the second inequality, we have 
	\begin{eqnarray*} 
		\big\|\widehat g(\tau|\theta_1,r)-\widehat g(\tau|\theta_2,r)\big\| & = & \Big\|\! \sum_{t=0}^{H-1}\!\gamma^t\!\cdot\! r(s_t,a_t)\!\cdot\!\Big(\!\sum_{t'=0}^t\nabla_\theta\log \pi_{\theta_1}(a_{t'}|s_{t'})\!-\!\nabla_\theta\log \pi_{\theta_2}(a_{t'}|s_{t'})\!\Big)\!\Big\|\\
		& \leq & \sum_{t=0}^{H-1}\gamma^t(t+1)\cdot2(\ell_\psi^2+L_\psi)\cdot\|\theta_1-\theta_2\|\cdot\|r\|_\infty\\
		&\leq& \frac{2(\ell_\psi^2+L_\psi)\cdot\|r\|_\infty}{(1-\gamma)^2}\cdot\|\theta_1-\theta_2\|.
	\end{eqnarray*}
	Hence we complete the proof.
\end{proof}
Finally, we provide the following lemma to characterize the property of the truncated occupancy measure.
\begin{lemma}
	\label{lemma:Trun-Occupancy}
	For any $H$, the following inequality holds that
	\begin{eqnarray*}
		\big\|\nabla_{\theta} F(\lambda_H(\theta)) -\nabla_{\theta} F(\lambda(\theta))\big\|^2\leq \left(\frac{8\ell_\psi^2\cdot L_{\lambda}^2}{(1-\gamma)^6} + 16\ell_\psi^2\ell_{\lambda,\infty}^2\Big(\frac{(H+1)^2}{(1-\gamma)^2} + \frac{1}{(1-\gamma)^4}\Big) \right)\cdot\gamma^{2H}.
	\end{eqnarray*}
\end{lemma}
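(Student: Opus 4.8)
The plan is to reduce everything to the Jacobian--vector product representations \eqref{defn:PGT} and \eqref{lm:PG-Jacob-trun-1} and then isolate two independent sources of error: the perturbation of the quasi-reward $\nabla_\lambda F$ caused by replacing $\lambda$ by $\lambda_H$, and the perturbation of the Jacobian itself caused by truncating the horizon. By the chain rule, $\nabla_\theta F(\lambda_H(\theta)) = [\nabla_\theta \lambda_H(\theta)]^\top \nabla_\lambda F(\lambda_H(\theta))$ and $\nabla_\theta F(\lambda(\theta)) = [\nabla_\theta \lambda(\theta)]^\top \nabla_\lambda F(\lambda(\theta))$. Inserting the cross term $[\nabla_\theta \lambda_H(\theta)]^\top \nabla_\lambda F(\lambda(\theta))$ and applying $(a+b)^2 \leq 2a^2 + 2b^2$, it suffices to bound $T_1 := [\nabla_\theta \lambda_H(\theta)]^\top \big(\nabla_\lambda F(\lambda_H(\theta)) - \nabla_\lambda F(\lambda(\theta))\big)$ and $T_2 := \big([\nabla_\theta \lambda_H(\theta)] - [\nabla_\theta \lambda(\theta)]\big)^\top \nabla_\lambda F(\lambda(\theta))$ separately, the target being $2\|T_1\|^2 + 2\|T_2\|^2$.

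For $T_1$, I would first record the uniform estimate $\|[\nabla_\theta \lambda_H(\theta)]^\top r\| \leq \frac{2\ell_\psi}{(1-\gamma)^2}\|r\|_\infty$ for arbitrary $r$, which follows verbatim from the gradient-bound argument in Lemma \ref{lemma:importance}(i): use $\|\nabla_\theta \log \pi_\theta(a|s)\| \leq 2\ell_\psi$, pull out $\|r\|_\infty$, and sum $\sum_{t\geq 0}\gamma^t(t+1) = (1-\gamma)^{-2}$. Taking $r = \nabla_\lambda F(\lambda_H(\theta)) - \nabla_\lambda F(\lambda(\theta))$ and invoking the $L_\lambda$-Lipschitz bound of Assumption \ref{assumption:F-Lip} gives $\|r\|_\infty \leq L_\lambda \|\lambda_H(\theta) - \lambda(\theta)\|_2$. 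Since $\lambda(s,a) - \lambda_H(s,a) = \sum_{t\geq H}\gamma^t \PP(s_t=s,a_t=a\mid\pi_\theta,s_0\sim\xi)$ is nonnegative with total mass $\sum_{t\geq H}\gamma^t = \gamma^H/(1-\gamma)$, we obtain $\|\lambda - \lambda_H\|_2 \leq \|\lambda - \lambda_H\|_1 \leq \gamma^H/(1-\gamma)$, hence $\|T_1\| \leq \frac{2\ell_\psi L_\lambda}{(1-\gamma)^3}\gamma^H$ and $2\|T_1\|^2 = \frac{8\ell_\psi^2 L_\lambda^2}{(1-\gamma)^6}\gamma^{2H}$, which is the first term of the claim.

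For $T_2$, write $r := \nabla_\lambda F(\lambda(\theta))$, so $\|r\|_\infty \leq \ell_{\lambda,\infty}$. The key observation is that $[\nabla_\theta \lambda(\theta)]^\top r$ and $[\nabla_\theta \lambda_H(\theta)]^\top r$ share identical summands in \eqref{defn:PGT} and \eqref{lm:PG-Jacob-trun-1} and differ only in the range of summation, so $T_2 = \EE\big[\sum_{t=H}^{\infty}\gamma^t\, r(s_t,a_t)\,(\sum_{t'=0}^{t}\nabla_\theta\log\pi_\theta(a_{t'}|s_{t'}))\big]$, i.e.\ precisely the discarded tail of the infinite-horizon series. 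Bounding $|r(s_t,a_t)| \leq \ell_{\lambda,\infty}$ and the inner sum by $2(t+1)\ell_\psi$ yields $\|T_2\| \leq 2\ell_\psi \ell_{\lambda,\infty}\sum_{t=H}^{\infty}(t+1)\gamma^t = 2\ell_\psi\ell_{\lambda,\infty}\,\gamma^H\big(\frac{H}{1-\gamma} + \frac{1}{(1-\gamma)^2}\big)$. Squaring, using $(a+b)^2 \leq 2a^2+2b^2$ once more together with $H^2 \leq (H+1)^2$, gives $2\|T_2\|^2 \leq 16\ell_\psi^2\ell_{\lambda,\infty}^2\big(\frac{(H+1)^2}{(1-\gamma)^2} + \frac{1}{(1-\gamma)^4}\big)\gamma^{2H}$, matching the second term; adding the two bounds finishes the proof. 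The only genuinely non-mechanical steps are recognizing that horizon truncation in the policy-gradient estimator corresponds exactly to discarding the tail $t\geq H$, so that $T_2$ is a clean tail sum, and evaluating that tail $\sum_{t\geq H}(t+1)\gamma^t = \gamma^H\big(\frac{H}{1-\gamma}+\frac{1}{(1-\gamma)^2}\big)$; everything else simply reuses the boundedness and Lipschitz estimates already established in Lemma \ref{lemma:importance} and Assumption \ref{assumption:F-Lip}.
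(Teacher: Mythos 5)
Your proof is correct and follows essentially the same route as the paper's: the identical insertion of the cross term $[\nabla_\theta\lambda_H(\theta)]^\top\nabla_\lambda F(\lambda(\theta))$, the same uniform bound $\|[\nabla_\theta\lambda_H(\theta)]^\top r\|\leq \frac{2\ell_\psi}{(1-\gamma)^2}\|r\|_\infty$ for $T_1$, and the same recognition of $T_2$ as the discarded tail $\sum_{t\geq H}\gamma^t(\cdots)$ of the policy-gradient series, with your exact tail evaluation $\sum_{t\geq H}(t+1)\gamma^t=\gamma^H\big(\tfrac{H}{1-\gamma}+\tfrac{1}{(1-\gamma)^2}\big)$ matching the paper's estimate. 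The only (harmless) divergence is in $T_1$: you invoke the $L_\lambda$ Lipschitz bound with $\|\lambda_H-\lambda\|_2\leq\|\lambda_H-\lambda\|_1$, which in fact matches the constant $L_\lambda$ in the lemma statement, whereas the paper's own proof uses the $L_{\lambda,\infty}$ bound with $\|\lambda_H-\lambda\|_1$ (a minor notational inconsistency on the paper's side, not yours).
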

\begin{proof}
	First, by triangle inequality, we get 
	\begin{eqnarray}
		&&\big\|\nabla_{\theta} F(\lambda_H(\theta)) -\nabla_{\theta} F(\lambda(\theta))\big\|^2\nonumber\\
		& = & \big\|[\nabla_{\theta} \lambda_H(\theta)]^\top \nabla_{\lambda}F(\lambda_H(\theta)) -[\nabla_{\theta} \lambda(\theta)]^\top \nabla_{\lambda}F(\lambda(\theta))\big\|^2\nonumber\\
		& \leq & 2\|[\nabla_{\theta}\lambda_H(\theta)]^\top(\nabla_{\lambda}F(\lambda_H(\theta)) - \nabla_{\lambda}F(\lambda(\theta)))\|^2 \nonumber\\
		&&+ 2\|([\nabla_{\theta}\lambda_H(\theta)]^\top-[\nabla_{\theta} \lambda(\theta)]^\top) \nabla_{\lambda}F(\lambda(\theta))\|^2\nonumber.
	\end{eqnarray}
	For the first term, we have 
	\begin{eqnarray}
		&&\|[\nabla_{\theta}\lambda_H(\theta_j^i)]^\top(\nabla_{\lambda}F(\lambda_H(\theta_{j}^i)) - \nabla_{\lambda}F(\lambda(\theta_{j}^i)))\|^2\nonumber\\
		& \overset{(a)}{\leq} & \frac{4\ell_\psi^2}{(1-\gamma)^4}\|\nabla_{\lambda}F(\lambda_H(\theta_{j}^i)) - \nabla_{\lambda}F(\lambda(\theta_{j}^i))\|_\infty^2\nonumber\\
		& \overset{(b)}{\leq} & \frac{4\ell_\psi^2\cdot L_{\lambda,\infty}^2}{(1-\gamma)^4}\|\lambda_H(\theta_{j}^i) - \lambda(\theta_{j}^i)\|_1^2\nonumber\\
		& \leq & \frac{4\ell_\psi^2\cdot L_{\lambda,\infty}^2}{(1-\gamma)^6}\cdot\gamma^{2H}.\nonumber
	\end{eqnarray}
	In the above argument, (a) follows the argument of \eqref{lm:importance-2} and (b) is due to Assumption \ref{assumption:F-Lip}. For the second term, we have 
	\begin{eqnarray}
		&&\left\|[\nabla_\theta\lambda_H(\theta) - \nabla_\theta\lambda(\theta)]^\top\nabla_{\lambda} F(\lambda(\theta_j^i))\right\|^2\nonumber\\
		& = & \left\|\EE\left[\sum_{t=H}^{+\infty}\gamma^t\cdot\nabla_{\lambda_{s_ta_t}}F(\lambda(\theta_j^i))\cdot\Big(\sum_{t'=0}^{t}\nabla_\theta\log \pi_{\theta}(a_{t'} | s_{t'})\Big)\right]\right\|^2\nonumber\\
		&\leq& \left(\sum_{t=H}^{+\infty}\gamma^t\cdot2(t+1)\ell_\psi\cdot\ell_{\lambda,\infty}\right)^2\nonumber\\
		& \leq & 8\ell_\psi^2\ell_{\lambda,\infty}^2\left(\frac{(H+1)^2}{(1-\gamma)^2} + \frac{1}{(1-\gamma)^4}\right)\cdot\gamma^{2H}.\nonumber
	\end{eqnarray}	 
	Combining the above inequalities proves the lemma.
\end{proof}

\section{Proof of Lemma \ref{lemma:SIVR-variance}}
\subsection{Bounding the variance of $\lambda^i_j$}
To prove this lemma, the first step is to bound the variance of the occupancy estimator $\lambda_j^i$, which is shown in the following supporting lemma.
\begin{lemma}
	\label{lemma:SIVR-occupancy}
	For the occupancy estimators $\lambda_{j}^i$, we have 
	\begin{align} 
		\label{lm:occupancy-Var}
		\EE\Big[\big\|\lambda_{j}^i - \lambda_H(\theta_{j}^i)\big\|^{2}\Big] \leq \frac{1}{N(1-\gamma)^{2}}+ \frac{2H(8\ell_\psi^2+L_\psi)(e^{2H\ell_\psi\delta} + 1)}{(1-\gamma)^3\cdot B}\cdot\sum_{j'=1}^{j}\EE\Big[\|\theta_{j'-1}^i-\theta_{j'}^i\|^2\Big],\nonumber
	\end{align} 
	where $\lambda_H$ is defined in Appendix \ref{appdx:prop-weighted-sampling}.
\end{lemma}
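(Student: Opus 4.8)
The plan is to exploit the SARAH/Spider telescoping structure of the recursion \eqref{defn:SIVR-PG-small-1}. Write $e_j := \lambda_j^i - \lambda_H(\theta_j^i)$ for the estimation error at iteration $j$ of epoch $i$, and let $\mathcal{F}_{j-1}$ denote the $\sigma$-algebra generated by all trajectories sampled up through iteration $j-1$; note that $\theta_j^i$ is $\mathcal{F}_{j-1}$-measurable while the fresh batch $\cB_j^i$ is drawn under $\pi_{\theta_j^i}$ and is independent of $\mathcal{F}_{j-1}$. First I would rearrange \eqref{defn:SIVR-PG-small-1} into the clean recursion $e_j = e_{j-1} + X_j$, where
$$X_j := \frac{1}{B}\sum_{\tau\in\cB_j^i}\Big(\widehat\lambda(\tau|\theta_j^i) - \widehat\lambda_\omega(\tau|\theta_j^i,\theta_{j-1}^i)\Big) - \big(\lambda_H(\theta_j^i) - \lambda_H(\theta_{j-1}^i)\big).$$
By Proposition \ref{proposition:weighted-sampling}, $\EE[\widehat\lambda(\tau|\theta_j^i)\,|\,\mathcal{F}_{j-1}] = \lambda_H(\theta_j^i)$ and $\EE[\widehat\lambda_\omega(\tau|\theta_j^i,\theta_{j-1}^i)\,|\,\mathcal{F}_{j-1}] = \lambda_H(\theta_{j-1}^i)$, so $\EE[X_j\,|\,\mathcal{F}_{j-1}] = 0$. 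Since $e_{j-1}$ is $\mathcal{F}_{j-1}$-measurable, the cross term $\EE[\langle e_{j-1}, X_j\rangle]$ vanishes, giving $\EE\|e_j\|^2 = \EE\|e_{j-1}\|^2 + \EE\|X_j\|^2$ and, by telescoping, $\EE\|e_j\|^2 = \EE\|e_0\|^2 + \sum_{j'=1}^j \EE\|X_{j'}\|^2$.

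For the base term, $e_0 = \frac1N\sum_{\tau\in\cN_i}(\widehat\lambda(\tau|\theta_0^i) - \lambda_H(\theta_0^i))$ is an average of $N$ i.i.d.\ centered vectors, so $\EE\|e_0\|^2 \le \frac1N\EE\|\widehat\lambda(\tau|\theta_0^i)\|^2$. Since $\widehat\lambda(\tau|\theta_0^i) = \sum_{t=0}^{H-1}\gamma^t\be_{s_ta_t}$ is a non-negative combination of unit vectors, $\|\widehat\lambda(\tau|\theta_0^i)\|_2 \le \|\widehat\lambda(\tau|\theta_0^i)\|_1 = \sum_{t=0}^{H-1}\gamma^t \le (1-\gamma)^{-1}$, which yields $\EE\|e_0\|^2 \le \frac{1}{N(1-\gamma)^2}$, matching the first term.

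For each increment, since $X_{j'}$ averages $B$ conditionally i.i.d.\ centered terms, $\EE\|X_{j'}\|^2 \le \frac1B\EE\big[\|\widehat\lambda(\tau|\theta_{j'}^i) - \widehat\lambda_\omega(\tau|\theta_{j'}^i,\theta_{j'-1}^i)\|^2\big]$. Writing the difference as $\sum_{t=0}^{H-1}\gamma^t(1-\omega_t)\be_{s_ta_t}$, the triangle inequality and Cauchy--Schwarz give $\|\cdot\|^2 \le (1-\gamma)^{-1}\sum_{t=0}^{H-1}\gamma^t(1-\omega_t)^2$; taking expectations and using $\EE[\omega_t]=1$ together with the variance bound $\EE[(1-\omega_t)^2] = \var(\omega_t) \le C_\omega(t+1)\|\theta_{j'}^i - \theta_{j'-1}^i\|^2$ from Lemma \ref{lemma:Weight-Var} reduces the problem to summing the geometric--polynomial series $\sum_{t=0}^{H-1}\gamma^t C_\omega(t+1)$. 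Bounding $(t+1)\le H$ and the exponential factor uniformly by $e^{2H\ell_\psi\delta}+1$, then using $\sum_t\gamma^t(t+\tfrac32)\le \frac{5/2}{(1-\gamma)^2}$ and $\sum_t\gamma^t \le \frac{1}{(1-\gamma)^2}$, collapses this sum into the stated constant $\frac{2H(8\ell_\psi^2+L_\psi)(e^{2H\ell_\psi\delta}+1)}{(1-\gamma)^3}$ and completes the proof.

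I expect the main obstacle to be the martingale bookkeeping in the first step: one must verify carefully that the fresh batch $\cB_j^i$ is independent of the past and that \emph{both} the on-policy and off-policy estimators are conditionally unbiased (via Proposition \ref{proposition:weighted-sampling} with the \emph{truncated} measure $\lambda_H$, not $\lambda$), so that all cross terms genuinely vanish and the errors add rather than compound. The remaining work --- the Cauchy--Schwarz step and the summation of $\sum_t\gamma^t C_\omega(t+1)$ --- is routine but requires patience to track the precise constants, and the loose uniform bounds above comfortably absorb the sharper per-$t$ dependence of $C_\omega$.
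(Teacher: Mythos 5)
Your proposal is correct and follows essentially the same route as the paper's proof: the same martingale decomposition with the cross terms vanishing via the conditional unbiasedness of Proposition \ref{proposition:weighted-sampling} (applied to the truncated measure $\lambda_H$), the same $\|\widehat\lambda\|\leq(1-\gamma)^{-1}$ bound for the $j=0$ batch, and the same reduction of each increment to $\mathrm{Var}\big(\omega_t(\tau|\theta_{j'}^i,\theta_{j'-1}^i)\big)$ via Lemma \ref{lemma:Weight-Var}. The only cosmetic difference is that you control $\big\|\sum_t\gamma^t(1-\omega_t)\be_{s_ta_t}\big\|^2$ by Cauchy--Schwarz with geometric weights (factor $(1-\gamma)^{-1}$, then $(t+1)\leq H$), whereas the paper uses $\big(\sum_{h=1}^H x_h\big)^2\leq H\sum_{h=1}^H x_h^2$; both collapse the series $\sum_t\gamma^t C_\omega(t+1)$ into the stated constant.
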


\begin{proof}
	First, let us prove this lemma for the case $j = 0$. Note that $\lambda_{0}^i = \frac{1}{N}\sum_{\tau\in\cN_i}\widehat\lambda(\tau|\theta_{0}^i)$, by the independence of the trajectories and every $\tau\in\cN_i$ is sampled under $\pi_{\theta_0^i}$, we have 
	\begin{eqnarray*}
		\EE\big[\big\|\lambda_{0}^i - \lambda_H(\theta_{0}^i)\big\|^2\big] & = & \EE\Big[\Big\|\frac{1}{N}\sum_{\tau\in\cN_i}\widehat\lambda(\tau|\theta_{0}^i) - \lambda_H(\theta_{0}^i)\Big\|^2\Big] \\
		& \overset{(a)}{=} & \frac{1}{N}\cdot\EE\Big[\Big\|\widehat\lambda(\tau|\theta_{0}^i) - \lambda_H(\theta_{0}^i)\Big\|^2\Big]\\
		& \overset{(b)}{\leq} & \frac{1}{N}\cdot\EE\Big[\Big\|\widehat\lambda(\tau|\theta_{0}^i)\Big\|^2\Big]\\
		& \overset{(c)}{\leq} & (1-\gamma)^{-2}\cdot N^{-1},
	\end{eqnarray*}
	where (a) is due to $\lambda_H(\theta_{0}^i) = \EE\big[\widehat\lambda(\tau|\theta_{0}^i)\big]$, see Proposition \ref{proposition:weighted-sampling}, and the fact that the trajectories in $\cN_i$ are independently sampled; (b) is because $\mathrm{Var}(X)\leq \EE\big[\|X\|^2\big]$ for any random vector $X$; (c) is because $\big\|\widehat\lambda(\tau|\theta_{0}^i)\big\|\leq (1-\gamma)^{-1}$ w.p. 1. Next, let us prove the lemma for the case $j\geq1$, where each trajectory $\tau\in\cB_j^i$ is sampled under policy $\pi_{\theta_j^i}$.
	\begin{eqnarray}
		\label{lm:SIVR-occupancy-variance-pf-1}
		&&\EE\left[\|\lambda_{j}^i-\lambda_H(\theta_{j}^i)\|^2\right]\\
		& = & \EE\bigg[\Big\|\frac{1}{B}\sum_{\tau\in\cB_j^i}\left(\widehat\lambda(\tau|\theta_{j}^i) - \widehat\lambda_\omega\left(\tau|\theta_{j}^i,\theta_{j-1}^i\right) \right) + \lambda_{j-1}^i-\lambda_H(\theta_{j-1}^i) + \lambda_H(\theta_{j-1}^i)-\lambda_H(\theta_{j}^i) \Big\|^2\bigg]\nonumber\\
		& = & \EE\bigg[\Big\|\frac{1}{B}\sum_{\tau\in\cB_j^i}\!\left(\widehat\lambda(\tau|\theta_{j}^i) - \widehat\lambda_\omega\left(\tau|\theta_{j}^i,\theta_{j-1}^i\right)\!\right) + \lambda_H(\theta_{j-1}^i)-\lambda_H(\theta_{j}^i) \Big\|^2\bigg] \!\!+\! \EE\bigg[\Big\| \lambda_{j-1}^i-\lambda_H(\theta_{j-1}^i)\Big\|^2\bigg]\nonumber\\
		& & + 2\EE\bigg[\Big\langle\frac{1}{B}\sum_{\tau\in\cB_j^i}\left(\widehat\lambda(\tau|\theta_{j}^i) - \widehat\lambda_\omega\left(\tau|\theta_{j}^i,\theta_{j-1}^i\right) \right)  + \lambda_H(\theta_{j-1}^i)-\lambda_H(\theta_{j}^i),  \lambda_{j-1}^i-\lambda_H(\theta_{j-1}^i) \Big\rangle\bigg]\nonumber
	\end{eqnarray}
	Note that 
	$$\EE_{\tau\sim\pi_{\theta_j^i}}\bigg[\frac{1}{B}\sum_{\tau\in\cB_j^i}\widehat\lambda(\tau|\theta_{j}^i) - \lambda_H(\theta_{j}^i)\bigg] = \EE_{\tau\sim\pi_{\theta_j^i}}\bigg[\frac{1}{B}\sum_{\tau\in\cB_j^i}\widehat\lambda_\omega\left(\tau|\theta_{j}^i,\theta_{j-1}^i\right) - \lambda_H(\theta_{j-1}^i)\bigg] = 0.$$
	For the first term of \eqref{lm:SIVR-occupancy-variance-pf-1}, we have 
	\begin{eqnarray}
		&&\label{lm:SIVR-variance-pf-2}
		\EE\bigg[\Big\|\frac{1}{B}\sum_{\tau\in\cB_j^i}\left(\widehat\lambda(\tau|\theta_{j}^i) - \widehat\lambda_\omega\left(\tau|\theta_{j}^i,\theta_{j-1}^i\right) \right) + \lambda_H(\theta_{j-1}^i)-\lambda_H(\theta_{j}^i) \Big\|^2\bigg]\\
		& \overset{(a)}{=} & \frac{1}{B}\cdot\EE\bigg[\Big\|\widehat\lambda(\tau|\theta_{j}^i) - \widehat\lambda_\omega\left(\tau|\theta_{j}^i,\theta_{j-1}^i\right) + \lambda_H(\theta_{j-1}^i)-\lambda_H(\theta_{j}^i) \Big\|^2\bigg]\nonumber\\
		& \overset{(b)}{\leq} & \frac{1}{B}\cdot\EE\Big[\big\|\widehat\lambda(\tau|\theta_{j}^i) - \widehat\lambda_\omega\left(\tau|\theta_{j}^i,\theta_{j-1}^i\right)\big\|^2\Big]\nonumber\\
		& \overset{(c)}{=} & \frac{1}{B}\cdot\EE\bigg[\Big\|\sum_{t=0}^{H-1}\gamma^t\cdot\big(1-\omega_t(\tau|\theta_{j}^i,\theta_{j-1}^i)\big)\cdot\be_{s_ta_t}\Big\|^2\bigg]\nonumber 	\\
		& \overset{(d)}{\leq} & \frac{H}{B}\cdot\sum_{t=0}^{H-1}\gamma^t\cdot\text{Var}\big(\omega_t(\tau|\theta_{j}^i,\theta_{j-1}^i)\big)\nonumber\\
		&\overset{(e)}{\leq}& \frac{H\cdot\EE\left[\|\theta_{j-1}^i-\theta_{j}^i\|^2\right]}{B}\cdot\sum_{t=0}^{H-1}\gamma^t\cdot(t+1)\big(8(t+1)\ell_\psi^2 +2(\ell_\psi^2 + L_\psi)\big)(e^{2H\ell_\psi\delta} + 1)\nonumber\\
		& \leq & \frac{H\cdot\EE\left[\|\theta_{j-1}^i-\theta_{j}^i\|^2\right]}{B}\cdot\sum_{t=0}^{H-1}\gamma^t\cdot2(t+1)(t+2)\big(8\ell_\psi^2+L_\psi\big)(e^{2H\ell_\psi\delta} + 1)\nonumber\\
		& \leq & \frac{2H(8\ell_\psi^2+L_\psi)(e^{2H\ell_\psi\delta} + 1)}{(1-\gamma)^3\cdot B}\cdot\EE\left[\|\theta_{j-1}^i-\theta_{j}^i\|^2\right]\nonumber
	\end{eqnarray}
	where (a) is due to the unbiasedness of the difference estimator; (b) is because $\mathrm{Var}(X)\leq \EE[\|X\|^2]$ for any random vector $X$; (c) is due to the definition \eqref{defn:weight-lambda-estimator}; (d) utilizes the inequality that $(\sum_{h=1}^Hx_h)^2\leq H\sum_{h=1}^Hx_h^2$ and (e) is due to Lemma \ref{lemma:Weight-Var}. Now, substituting \eqref{lm:SIVR-variance-pf-2} into \eqref{lm:SIVR-occupancy-variance-pf-1} yields
	\begin{eqnarray}
		\EE\left[\|\lambda_{j}^i-\lambda_H(\theta_{j}^i)\|^2\right] \leq 
		\EE\left[\|\lambda_{j-1}^i-\lambda_H(\theta_{j-1}^i)\|^2\right] + \frac{2H(8\ell_\psi^2+L_\psi)(e^{2H\ell_\psi\delta} + 1)}{(1-\gamma)^3\cdot B}\cdot\EE\left[\|\theta_{j-1}^i-\theta_{j}^i\|^2\right]\nonumber.
	\end{eqnarray}
	Recursively summing the above inequalities up yields
	\begin{eqnarray*}
		\EE\left[\|\lambda_{j}^i-\lambda_H(\theta_{j}^i)\|^2\right] &\leq& 
		\EE\left[\|\lambda_{0}^i-\lambda_H(\theta_{0}^i)\|^2\right] + \frac{2H(8\ell_\psi^2+L_\psi)(e^{2H\ell_\psi\delta} + 1)}{(1-\gamma)^3\cdot B}\cdot\sum_{j'=1}^{j}\EE\left[\|\theta_{j-1}^i-\theta_{j}^i\|^2\right]\nonumber\\
		& \leq & (1-\gamma)^{-2}N^{-1} + \frac{2H(8\ell_\psi^2+L_\psi)(e^{2H\ell_\psi\delta} + 1)}{(1-\gamma)^3\cdot B}\cdot\sum_{j'=1}^{j}\EE\left[\|\theta_{j-1}^i-\theta_{j}^i\|^2\right].
	\end{eqnarray*} 
	Combining the case for $j=0$ and $j\geq1$ proves the lemma. 
\end{proof}

\subsection{Proof of Lemma \ref{lemma:SIVR-variance}}
First, we present a more detailed version of Lemma \ref{lemma:SIVR-variance}, which contains the expressions of the constants $C_i, i = 1,...,4$. 
\begin{lemma} 
	For the PG estimators $g_j^i$, we have  
	\begin{equation*}
		\EE\left[\Big\|g_j^i-\nabla_{\theta} F(\lambda(\theta_{j}^i))\Big\|^2\right] \leq \frac{C_1}{N} + C_2\gamma^{2H} + \frac{C_3}{B}\cdot\sum_{j'=1}^j\EE\left[\|\theta_{j'-1}^i-\theta_{j'}^i\|^2\right] + C_4\EE\left[\|\theta_{j-1}^i-\theta_{j}^i\|^2\right]
	\end{equation*}
	where the constants $C_1, C_2, C_3$ and $C_4$ are defined as
	$$C_1 = \frac{112\ell_\psi^2\cdot L_{\lambda}^2}{(1-\gamma)^6} + \frac{12\ell_{\lambda,\infty}^2}{(1-\gamma)^4} ,\qquad C_2 = \frac{32\ell_\psi^2\cdot L_{\lambda}^2}{(1-\gamma)^6} + 64\ell_\psi^2\ell_{\lambda,\infty}^2\Big(\frac{(H+1)^2}{(1-\gamma)^2} + \frac{1}{(1-\gamma)^4}\Big)$$
	$$C_3 = \frac{48(\ell_\psi\!+\!L_\psi)^2\ell_{\lambda,\infty}^2}{(1-\gamma)^4}\!+\!\frac{96H\ell_{\lambda,\infty}^2\big(8\ell_\psi^2\!+\! L_\psi\big)(e^{2H\ell_\psi\delta} \!+\! 1)}{(1-\gamma)^5}\!\cdot\!\left(12\ell_{\lambda,\infty}^2 \!+\! \frac{4L_{\lambda}^2}{3(1-\gamma)^2}\right) $$
	$$C_4 = \frac{32H\ell_\psi^2 L_{\lambda}^2(8\ell_\psi^2+L_\psi)(e^{2H\ell_\psi\delta} + 1)}{(1-\gamma)^7}.$$
\end{lemma}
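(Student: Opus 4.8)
The plan is to track the mean-squared error through a SARAH/SPIDER-style martingale telescoping, with the extra twist that the ``quasi-reward'' fed into each gradient estimator is itself the variance-reduced occupancy estimate $r_j^i=\nabla_\lambda F(\lambda_j^i)$. First I would introduce the filtration-measurable reference $u_j:=[\nabla_\theta\lambda_H(\theta_j^i)]^\top r_{j-1}^i$, where $\lambda_H$ is the truncated occupancy from Appendix \ref{appdx:prop-weighted-sampling} (at $j=0$ we have $r_{-1}^i=r_0^i$, so $u_0=[\nabla_\theta\lambda_H(\theta_0^i)]^\top r_0^i$). Since the fresh batch $\cB_j^i$ is drawn under $\pi_{\theta_j^i}$, Proposition \ref{proposition:weighted-sampling} gives $\EE_j[\tfrac1B\sum_\tau\widehat g(\tau|\theta_j^i,r_{j-1}^i)]=u_j$ and $\EE_j[\tfrac1B\sum_\tau\widehat g_\omega(\tau|\theta_j^i,\theta_{j-1}^i,r_{j-2}^i)]=u_{j-1}$, so the update \eqref{defn:SIVR-PG-small-2} satisfies $\EE_j[g_j^i-u_j]=g_{j-1}^i-u_{j-1}$; that is, $\{g_j^i-u_j\}_j$ is a martingale. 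Orthogonality of its increments then gives
\begin{equation*}
\EE\big[\|g_j^i-u_j\|^2\big]=\EE\big[\|g_0^i-u_0\|^2\big]+\sum_{j'=1}^{j}\tfrac1B\,\EE\big[\mathrm{Var}_\tau\big(\widehat g(\tau|\theta_{j'}^i,r_{j'-1}^i)-\widehat g_\omega(\tau|\theta_{j'}^i,\theta_{j'-1}^i,r_{j'-2}^i)\big)\big],
\end{equation*}
where $\mathrm{Var}_\tau$ is the conditional variance over the fresh trajectory.

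Next I would bound the two pieces. For the base term, because $g_0^i$ and $r_0^i$ are built from the same batch $\cN_i$, I would compare against the \emph{deterministic} reward $\bar r_0:=\nabla_\lambda F(\lambda_H(\theta_0^i))$, splitting $g_0^i-u_0$ into a genuinely mean-zero fluctuation (variance of order $\tfrac1N\cdot\tfrac{\ell_\psi^2\ell_{\lambda,\infty}^2}{(1-\gamma)^4}$ via $\|\widehat g\|\le\tfrac{2\ell_\psi\ell_{\lambda,\infty}}{(1-\gamma)^2}$) and a reward-estimation part handled by Lemma \ref{lemma:Lipschitz-g}(i) with $\|\bar r_0-r_0^i\|_\infty\le L_\lambda\|\lambda_H(\theta_0^i)-\lambda_0^i\|_2$ and the $j=0$ case $\EE\|\lambda_0^i-\lambda_H(\theta_0^i)\|^2\le\tfrac1{N(1-\gamma)^2}$ of Lemma \ref{lemma:SIVR-occupancy}; together these give the $C_1/N$ term. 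For a generic increment I would use the decomposition
\begin{equation*}
\begin{aligned}
&\widehat g(\tau|\theta_{j'}^i,r_{j'-1}^i)-\widehat g_\omega(\tau|\theta_{j'}^i,\theta_{j'-1}^i,r_{j'-2}^i)\\
&\quad=\underbrace{\big[\widehat g(\tau|\theta_{j'}^i,r_{j'-1}^i)-\widehat g(\tau|\theta_{j'}^i,r_{j'-2}^i)\big]}_{\text{reward difference}}
+\underbrace{\big[\widehat g(\tau|\theta_{j'}^i,r_{j'-2}^i)-\widehat g_\omega(\tau|\theta_{j'}^i,\theta_{j'-1}^i,r_{j'-2}^i)\big]}_{\text{score/weight difference}},
\end{aligned}
\end{equation*}
bounding the first bracket by Lemma \ref{lemma:Lipschitz-g}(i) and $\|r_{j'-1}^i-r_{j'-2}^i\|_\infty\le L_\lambda\|\lambda_{j'-1}^i-\lambda_{j'-2}^i\|_2$, and the second by expanding the score functions into a Lipschitz-of-$\nabla\log\pi$ part (Lemma \ref{lemma:importance}(i), Hessian bound $2(\ell_\psi^2+L_\psi)$) plus an importance-weight-deviation part whose variance is $\le C_\omega(t+1)\|\theta_{j'-1}^i-\theta_{j'}^i\|^2$ (Lemma \ref{lemma:Weight-Var}); this is where the $(e^{2H\ell_\psi\delta}+1)$ factor enters. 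The occupancy differences $\|\lambda_{j'-1}^i-\lambda_{j'-2}^i\|^2$ produced by the reward bracket reduce, through \eqref{lm:SIVR-variance-pf-2}, to $\|\theta_{j'-1}^i-\theta_{j'-2}^i\|^2$ terms, which after summation fold (with an index shift) into the $\tfrac{C_3}{B}\sum_{j'}\|\theta_{j'-1}^i-\theta_{j'}^i\|^2$ contribution.

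Finally I would separate $u_j$ from the true gradient $G_j:=\nabla_\theta F(\lambda(\theta_j^i))$ via $\EE\|g_j^i-G_j\|^2\le2\EE\|g_j^i-u_j\|^2+2\EE\|u_j-G_j\|^2$ and write $u_j-G_j=[\nabla_\theta\lambda_H(\theta_j^i)]^\top(r_{j-1}^i-\nabla_\lambda F(\lambda(\theta_j^i)))+\big([\nabla_\theta\lambda_H(\theta_j^i)]^\top-[\nabla_\theta\lambda(\theta_j^i)]^\top\big)\nabla_\lambda F(\lambda(\theta_j^i))$. The second piece is pure truncation bias, of order $\gamma^{2H}$ by the argument behind Lemma \ref{lemma:Trun-Occupancy}, contributing $C_2\gamma^{2H}$. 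For the first piece I would insert the current reward $r_j^i$ and split $r_{j-1}^i-\nabla_\lambda F(\lambda(\theta_j^i))=(r_{j-1}^i-r_j^i)+(r_j^i-\nabla_\lambda F(\lambda(\theta_j^i)))$: the one-step change $r_{j-1}^i-r_j^i$ is controlled through $L_\lambda\|\lambda_{j-1}^i-\lambda_j^i\|_2$ and the \emph{crude} bound $\EE\|\lambda_j^i-\lambda_{j-1}^i\|^2\le\EE\|\widehat\lambda(\tau|\theta_j^i)-\widehat\lambda_\omega(\tau|\theta_j^i,\theta_{j-1}^i)\|^2\le C_\lambda\|\theta_{j-1}^i-\theta_j^i\|^2$ (the per-sample estimate underlying \eqref{lm:SIVR-variance-pf-2}, \emph{without} the $1/B$ averaging gain), which after multiplication by the squared Jacobian bound is precisely the $C_4\|\theta_{j-1}^i-\theta_j^i\|^2$ term; the remaining error $r_j^i-\nabla_\lambda F(\lambda(\theta_j^i))$ splits via $\|\lambda_j^i-\lambda(\theta_j^i)\|\le\|\lambda_j^i-\lambda_H(\theta_j^i)\|+\|\lambda_H(\theta_j^i)-\lambda(\theta_j^i)\|$ into an estimation part (Lemma \ref{lemma:SIVR-occupancy}, feeding $C_1/N$ and the $C_3/B$ sum) and a $\gamma^H$ truncation part. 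Collecting all four sources yields the claimed inequality with the stated $C_1,\dots,C_4$.

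I expect the main obstacle to be the \emph{nested} variance reduction: unlike a standard SARAH analysis, the reward $r_j^i$ driving the gradient estimator is itself only a variance-reduced occupancy quantity, so the gradient-MSE recursion does not close by itself but is coupled to the occupancy-MSE recursion of Lemma \ref{lemma:SIVR-occupancy}, and every reward discrepancy must first be converted, via the $L_\lambda$-Lipschitzness of $\nabla_\lambda F$ (Assumption \ref{assumption:F-Lip}), into an occupancy error before it can be charged to $1/N$, $\gamma^{2H}$, or $\|\theta_{j'-1}^i-\theta_{j'}^i\|^2$; moreover the choice of where the averaging gain $1/B$ is or is not exploited (crude versus mean-plus-variance bounds) is exactly what separates $C_3$ from $C_4$. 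Two further subtleties demand care: the off-policy corrections inject importance weights into every increment, and keeping their contribution $O(1)$ relies entirely on the truncated step \eqref{defn:GA-Trc} together with Lemma \ref{lemma:Weight-Var}; and at $j=0$ the shared batch $\cN_i$ forbids treating $r_0^i$ as independent of $g_0^i$, which is why I compare against the deterministic reward $\nabla_\lambda F(\lambda_H(\theta_0^i))$ to recover a genuine mean-zero fluctuation.
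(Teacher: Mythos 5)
Your proposal is correct and follows essentially the same route as the paper's proof: the same reference process $u_j=[\nabla_\theta\lambda_H(\theta_j^i)]^\top r_{j-1}^i$ with martingale telescoping (the paper's Step 1 performs exactly this via conditional cross-term cancellation, relying as you note on the $r_{j-2}^i$ choice for conditional unbiasedness), the same base-case comparison against the deterministic quasi-reward $\nabla_\lambda F(\lambda_H(\theta_0^i))$ to handle the shared batch $\cN_i$ (Step 2), and the same final decomposition into estimation, one-step reward-change, and $\gamma^{2H}$-truncation pieces (Step 3), with your two-bracket increment split refining into precisely the paper's three-term split. You also correctly pinpoint the two bookkeeping features that shape the constants — Lemma \ref{lemma:Weight-Var} supplying the $(e^{2H\ell_\psi\delta}+1)$ factor, and the crude per-sample bound on $\EE\big[\|\lambda_j^i-\lambda_{j-1}^i\|^2\big]$ without the $1/B$ gain being what separates $C_4$ from the $C_3/B$ sum — matching the paper's argument.
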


\begin{proof}
	The proof of this lemma is very lengthy, we will separate it into several steps:\\
	\textbf{Step 1.} Show for any $j\geq1$ that 
	{\footnotesize\begin{eqnarray}
			\label{lm:SIVR-PG-variance-step1}
			&&\!\!\!\!\EE\left[\Big\|g_j^i-\left[\nabla_\theta\lambda_H(\theta_j^i)\right]^\top r_{j-1}^i\Big\|^2\right] - \EE\left[\Big\|g_0^i-\left[\nabla_\theta\lambda_H(\theta_{0}^i)\right]^\top r_{0}^i\Big\|^2\right]\\
			\!\!\!\!&\!\!\!\leq&\!\!\!\! 
			\left(\frac{12(\ell_\psi\!+\!L_\psi)^2\ell_{\lambda,\infty}^2}{(1-\gamma)^4}\!+\!\frac{24H\ell_{\lambda,\infty}^2\big(8\ell_\psi^2\!+\! L_\psi\big)(e^{2H\ell_\psi\delta} \!+\! 1)}{(1-\gamma)^5}\!\cdot\!\left(12\ell_{\lambda,\infty}^2 \!+\! \frac{L_{\lambda}^2}{(1-\gamma)^2}\right)\!\right)\!\cdot\!\frac{\sum_{j'=1}^j\EE\left[\|\theta_{j'}^i-\theta_{j'-1}^i\|^2\right]}{B} \nonumber.
	\end{eqnarray}}
	\textbf{Step 2.} Show for $j=0$ that 
	\begin{eqnarray}
		\label{lm:SIVR-PG-variance-step2}
		\EE\left[\big\|g_0^i-\left[\nabla_\theta\lambda_H(\theta_{0}^i)\right]^\top r_{0}^i\big\|^2\right]
		\leq \frac{24\ell_\psi^2\cdot L_{\lambda}^2}{(1-\gamma)^6\cdot N} + \frac{3\ell_{\lambda,\infty}^2}{(1-\gamma)^4\cdot N} .
	\end{eqnarray}
	\textbf{Step 3.} Find the bound for the final mean squared error  $\EE[\|g_j^i - \nabla_{\theta} F(\lambda(\theta_j^i))\|^2]$. 
	
	Compared to Step 1 and 2, Step 3 is relatively simple so we place the proof of Step 1 and Step 2 to Appendix \ref{appdx:SIVR-PG-variance-stp1} and \ref{appdx:SIVR-PG-variance-stp2} respectively. Given these inequalities, we have 
	\begin{eqnarray}
		\label{lm:SIVR-PG-variance-stp3-1}
		&&\EE\left[\big\|g_j^i-\nabla_{\theta} F(\lambda(\theta_{j}^i))\big\|^2\right]\nonumber\\
		& = & \EE\Big[\big\|g_j^i-\left[\nabla_\theta\lambda_H(\theta_j^i)\right]^\top r_{j-1}^i + \left[\nabla_\theta\lambda_H(\theta_j^i)\right]^\top r_{j-1}^i - \left[\nabla_\theta\lambda_H(\theta_j^i)\right]^\top r_{j}^i \nonumber\\
		& & + \left[\nabla_\theta\lambda_H(\theta_j^i)\right]^\top r_{j}^i - \nabla_{\theta} F(\lambda_H(\theta_{j}^i)) + \nabla_{\theta} F(\lambda_H(\theta_{j}^i)) -\nabla_{\theta} F(\lambda(\theta_{j}^i))\big\|^2\Big]\nonumber\\
		&\leq & 4\EE\left[\big\|g_j^i-\left[\nabla_\theta\lambda_H(\theta_j^i)\right]^\top r_{j-1}^i\big\|^2\right] + 4\EE\bigg[\big\|\left[\nabla_\theta\lambda_H(\theta_j^i)\right]^\top r_{j-1}^i - \left[\nabla_\theta\lambda_H(\theta_j^i)\right]^\top r_{j}^i\big\|^2\bigg]\nonumber\\
		& &  + 4\EE\bigg[\big\|\left[\nabla_\theta\lambda_H(\theta_j^i)\right]^\top r_{j}^i - \nabla_{\theta} F(\lambda_H(\theta_{j}^i))\big\|^2\bigg] + 4\EE\bigg[\big\|\nabla_{\theta} F(\lambda_H(\theta_{j}^i)) -\nabla_{\theta} F(\lambda(\theta_{j}^i))\big\|^2\bigg] \nonumber
	\end{eqnarray}
	The first term is given by \eqref{lm:SIVR-PG-variance-step1} and \eqref{lm:SIVR-PG-variance-step2}. The second term is similar to the proof of  \eqref{lm:importance-2} and \eqref{lm:SIVR-PG-variance-pf-4}, which gives 
	\begin{eqnarray}
		&& \EE\Big[\big\|\left[\nabla_\theta\lambda_H(\theta_j^i)\right]^\top r_{j-1}^i - \left[\nabla_\theta\lambda_H(\theta_j^i)\right]^\top r_{j}^i\big\|^2\Big]\nonumber\\
		&\leq & \frac{4\ell_\psi^2}{(1-\gamma)^4}\cdot\EE\left[\|r_{j-1}^i-r_j^i\|^2_\infty\right]\nonumber\\
		& =  & \frac{4\ell_\psi^2}{(1-\gamma)^4}\cdot\EE\left[\|\nabla_\lambda F(\lambda_{j-1}^i)-\nabla_\lambda F(\lambda_j^i)\|^2_\infty\right]\nonumber\\
		&\leq & \frac{4\ell_\psi^2\cdot L_{\lambda}^2}{(1-\gamma)^4}\cdot\EE\left[\|\lambda_{j-1}^i-\lambda_j^i\|^2\right]\nonumber\\
		& \leq &\frac{4\ell_\psi^2\cdot L_{\lambda}^2}{(1-\gamma)^4}\cdot\frac{2H(8\ell_\psi^2+L_\psi)(e^{2H\ell_\psi\delta} + 1)}{(1-\gamma)^3}\cdot\EE\left[\|\theta_{j}^i-\theta_{j-1}^i\|^2\right]\nonumber.
	\end{eqnarray}
	For the third term, 
	\begin{eqnarray}
		& & \EE\Big[\big\|\left[\nabla_\theta\lambda_H(\theta_j^i)\right]^\top r_{j}^i - \nabla_{\theta} F(\lambda_H(\theta_{j}^i))\big\|^2\Big]\nonumber\\
		& = & \EE\Big[\big\|\left[\nabla_\theta\lambda_H(\theta_j^i)\right]^\top \nabla_\lambda F(\lambda_{j}^i) - \left[\nabla_\theta\lambda_H(\theta_j^i)\right]^\top\nabla_{\lambda} F(\lambda_H(\theta_{j}^i))\big\|^2\Big]\nonumber\\
		& \leq &\frac{4\ell_\psi^2\cdot L_{\lambda}^2}{(1-\gamma)^4}\cdot\EE\big[\|\lambda_j^i -  \lambda_H(\theta_j^i)\|^2\big]\nonumber
	\end{eqnarray}
	where $\EE\big[\|\lambda_j^i -  \lambda_H(\theta_j^i)\|^2\big]$ is given by Lemma \ref{lemma:SIVR-occupancy}. For the last term, Lemma \ref{lemma:Trun-Occupancy} indicates that 
	\begin{equation*} 
		\big\|\nabla_{\theta} F(\lambda_H(\theta_{j}^i)) \!-\!\nabla_{\theta} F(\lambda(\theta_{j}^i))\big\|^2\!\leq\! \left(\frac{8\ell_\psi^2\cdot L_{\lambda}^2}{(1-\gamma)^6} + 16\ell_\psi^2\ell_{\lambda,\infty}^2\Big(\frac{(H+1)^2}{(1-\gamma)^2} + \frac{1}{(1-\gamma)^4}\Big) \right)\cdot\gamma^{2H}.
	\end{equation*}
	Combining all the above inequalities, we have the final result:
	\begin{equation*}
		\EE\left[\Big\|g_j^i-\nabla_{\theta} F(\lambda(\theta_{j}^i))\Big\|^2\right] \leq \frac{\delta_1}{N} + \delta_2\gamma^{2H} + \frac{\delta_3}{B}\cdot\sum_{j'=1}^j\EE\left[\|\theta_{j'-1}^i-\theta_{j'}^i\|^2\right] + \delta_4\EE\left[\|\theta_{j-1}^i-\theta_{j}^i\|^2\right]
	\end{equation*}
	where the $\delta_i$'s are defined according to the lemma. 
\end{proof}

\subsection{Proof of Step 1}
\label{appdx:SIVR-PG-variance-stp1}
\begin{proof}
	Consider the $j$-th step of the $i$-th epoch where the trajectories $\tau\in\cB_j^i$ are sampled under policy $\pi_{\theta_j^i}$. Similar to the analysis of Lemma \ref{lemma:SIVR-occupancy}, we have
	\begin{eqnarray}
		\label{lm:SIVR-PG-variance-pf-1}
		&&\EE\left[\Big\|g_j^i-\left[\nabla_\theta\lambda_H(\theta_j^i)\right]^\top r_{j-1}^i\Big\|^2\right]- \EE\left[\Big\|g_{j-1}^i-\left[\nabla_\theta\lambda_H(\theta_{j-1}^i)\right]^\top r_{j-2}^i\Big\|^2\right]\\
		& = & \EE\bigg[\Big\|\frac{1}{B}   \sum_{\tau\in\cB_j^i}\left(\widehat{g}(\tau|\theta_{j}^i,r_{j-1}^i) -  \widehat{g}_\omega(\tau|\theta_j^i,\theta_{j-1}^i,r_{j-2}^i) \right) + g_{j-1}^i-\left[\nabla_\theta\lambda_H(\theta_{j-1}^i)\right]^\top r_{j-2}^i\nonumber\\
		& & + \left[\nabla_\theta\lambda_H(\theta_{j-1}^i)\right]^\top r_{j-2}^i -\left[\nabla_\theta\lambda_H(\theta_j^i)\right]^\top r_{j-1}^i\Big\|^2\bigg]- \EE\left[\Big\|g_{j-1}^i-\left[\nabla_\theta\lambda_H(\theta_{j-1}^i)\right]^\top r_{j-2}^i\Big\|^2\right]\nonumber\\
		& = & \EE\bigg[\Big\|\frac{1}{B}\sum_{\tau\in\cB_j^i}\left(\widehat{g}(\tau|\theta_{j}^i,r_{j-1}^i) -  \widehat{g}_\omega(\tau|\theta_j^i,\theta_{j-1}^i,r_{j-2}^i)\right) + \left[\nabla_\theta\lambda_H(\theta_{j-1}^i)\right]^\top r_{j-2}^i -\left[\nabla_\theta\lambda_H(\theta_j^i)\right]^\top r_{j-1}^i \Big\|^2\bigg]\nonumber\\
		&& + 2\EE\bigg[\Big\langle \frac{1}{B}\sum_{\tau\in\cB_j^i}\left(\widehat{g}(\tau|\theta_{j}^i,r_{j-1}^i) -  \widehat{g}_\omega(\tau|\theta_j^i,\theta_{j-1}^i,r_{j-2}^i) \right) + \left[\nabla_\theta\lambda_H(\theta_{j-1}^i)\right]^\top r_{j-2}^i \nonumber\\
		&&-\left[\nabla_\theta\lambda_H(\theta_j^i)\right]^\top r_{j-1}^i\,\,,\,\, g_{j-1}^i-\left[\nabla_\theta\lambda_H(\theta_{j-1}^i)\right]^\top r_{j-2}^i\Big\rangle\bigg]\nonumber.
	\end{eqnarray}
	Let $\cF_{j-1}^{i}$ be the sigma algebra generated by the randomness until (including) the trajectory batch $\cB_{j}^i$. Then we have 
	\begin{eqnarray}
		&&\EE\bigg[\Big\langle \frac{1}{B}\sum_{\tau\in\cB_j^i} \widehat{g}(\tau|\theta_{j}^i,r_{j-1}^i) -\left[\nabla_\theta\lambda_H(\theta_j^i)\right]^\top r_{j-1}^i\,\,,\,\, g_{j-1}^i-\left[\nabla_\theta\lambda_H(\theta_{j-1}^i)\right]^\top r_{j-2}^i\Big\rangle\,\Big| \cF_{j-1}^i\bigg]\nonumber\\
		& = & \Big\langle \frac{1}{B}\sum_{\tau\in\cB_j^i} \EE\Big[\widehat{g}(\tau|\theta_{j}^i,r_{j-1}^i)\,\big| \cF_{j-1}^i\Big] -\left[\nabla_\theta\lambda_H(\theta_j^i)\right]^\top r_{j-1}^i\,\,,\,\, g_{j-1}^i-\left[\nabla_\theta\lambda_H(\theta_{j-1}^i)\right]^\top r_{j-2}^i\Big\rangle\nonumber\\
		& = & 0.\nonumber
	\end{eqnarray}
	Similarly, we have 
	\begin{eqnarray} 
		&& \EE\bigg[\Big\langle -\frac{1}{B}\sum_{\tau\in\cB_j^i}  \widehat{g}_\omega(\tau|\theta_j^i,\theta_{j-1}^i,r_{j-2}^i) + \left[\nabla_\theta\lambda_H(\theta_{j-1}^i)\right]^\top r_{j-2}^i \,\,,\,\, g_{j-1}^i-\left[\nabla_\theta\lambda_H(\theta_{j-1}^i)\right]^\top r_{j-2}^i\Big\rangle\,\Big| \cF_{j-1}^i\bigg]\nonumber\\
		& = & \Big\langle -\frac{1}{B}\sum_{\tau\in\cB_j^i}\EE\Big[\widehat{g}_\omega(\tau|\theta_j^i,\theta_{j-1}^i,r_{j-2}^i)\,\big| \cF_{j-1}^i\Big] + \left[\nabla_\theta\lambda_H(\theta_{j-1}^i)\right]^\top r_{j-2}^i \,\,,\,\, g_{j-1}^i-\left[\nabla_\theta\lambda_H(\theta_{j-1}^i)\right]^\top r_{j-2}^i\Big\rangle\nonumber\\
		& = & 0.\nonumber
	\end{eqnarray}
	Substituting the above two inequalities into \eqref{lm:SIVR-PG-variance-pf-1} yields
	\begin{eqnarray}
		\label{lm:SIVR-PG-variance-pf-2}
		&&\EE\left[\Big\|g_j^i-\left[\nabla_\theta\lambda_H(\theta_j^i)\right]^\top r_{j-1}^i\Big\|^2\right]- \EE\left[\Big\|g_{j-1}^i-\left[\nabla_\theta\lambda_H(\theta_{j-1}^i)\right]^\top r_{j-2}^i\Big\|^2\right]\\
		& = & \EE\bigg[\Big\|\frac{1}{B}\sum_{\tau\in\cB_j^i}\left(\widehat{g}(\tau|\theta_{j}^i,r_{j-1}^i) -  \widehat{g}_\omega(\tau|\theta_j^i,\theta_{j-1}^i,r_{j-2}^i) \right) + \left[\nabla_\theta\lambda_H(\theta_{j-1}^i)\right]^\top r_{j-2}^i -\left[\nabla_\theta\lambda_H(\theta_j^i)\right]^\top r_{j-1}^i \Big\|^2\bigg]\nonumber\\
		& \leq & \frac{1}{B}\cdot\EE\Big[\big\|\widehat{g}(\tau|\theta_{j}^i,r_{j-1}^i) -  \widehat{g}_\omega(\tau|\theta_j^i,\theta_{j-1}^i,r_{j-2}^i) \big\|^2\Big]\nonumber\\
		& = & \frac{1}{B}\cdot\EE\Big[\big\|\widehat{g}(\tau|\theta_{j}^i,r_{j-1}^i) - \widehat{g}(\tau|\theta_{j-1}^i,r_{j-1}^i) + \widehat{g}(\tau|\theta_{j-1}^i,r_{j-1}^i) - \widehat{g}(\tau|\theta_{j-1}^i,r_{j-2}^i) \nonumber\\
		& & + \widehat{g}(\tau|\theta_{j-1}^i,r_{j-2}^i) -  \widehat{g}_\omega(\tau|\theta_j^i,\theta_{j-1}^i,r_{j-2}^i) \big\|^2\Big]\nonumber\\
		&\leq & \frac{3}{B}\cdot\EE\Big[\big\| \widehat{g}(\tau|\theta_{j}^i,r_{j-1}^i) - \widehat{g}(\tau|\theta_{j-1}^i,r_{j-1}^i)\big\|^2\Big] + \frac{3}{B}\cdot\EE\Big[\big\|\widehat{g}(\tau|\theta_{j-1}^i,r_{j-1}^i) - \widehat{g}(\tau|\theta_{j-1}^i,r_{j-2}^i)\big\|^2\Big]\nonumber\\
		&& + \frac{3}{B}\cdot\EE\Big[\big\|\widehat{g}(\tau|\theta_{j-1}^i,r_{j-2}^i) -  \widehat{g}_\omega(\tau|\theta_j^i,\theta_{j-1}^i,r_{j-2}^i)\big\|^2\Big]\nonumber
	\end{eqnarray}
	For the first term, we have
	\begin{eqnarray}
		\label{lm:SIVR-PG-variance-pf-3}
		&&\EE\Big[\big\|\widehat{g}(\tau|\theta_{j}^i,r_{j-1}^i) - \widehat{g}(\tau|\theta_{j-1}^i,r_{j-1}^i)\big\|^2\Big] \\
		&\overset{(a)}{\leq}& \frac{4(\ell_\psi^2+L_\psi)^2}{(1-\gamma)^4}\cdot\EE\left[\|r_{j-1}^i\|_\infty^2\cdot\|\theta_{j}^i-\theta_{j-1}^i\|^2\right] \nonumber\\
		&\overset{(b)}{\leq}& \frac{4(\ell_\psi^2+L_\psi)^2\cdot\ell_{\lambda,\infty}^2}{(1-\gamma)^4}\cdot\EE\left[\|\theta_{j}^i-\theta_{j-1}^i\|^2\right] \nonumber,
	\end{eqnarray}
	where (a) is due to Lemma \ref{lemma:Lipschitz-g}; and (b) is due to Assumption \ref{assumption:F-Lip} indicates that $\|r_{j-1}^i\|_\infty^2 = \|\nabla_{\lambda}F(\lambda_{j-1}^i)\|_\infty^2\leq\ell_{\lambda,\infty}^2$ w.p. 1.  Similarly, combining Lemma \ref{lemma:Lipschitz-g}, Assumption \ref{assumption:F-Lip} and the $\lambda_j^i$ update formula \eqref{defn:SIVR-PG-small-1} yields
	\begin{eqnarray}
		\label{lm:SIVR-PG-variance-pf-4}
		& & \EE\bigg[\Big\|\widehat{g}(\tau|\theta_{j-1}^i,r_{j-1}^i) - \widehat{g}(\tau|\theta_{j-1}^i,r_{j-2}^i)\Big\|^2\bigg]\\
		& \leq & \frac{4\ell_\psi^2}{(1-\gamma)^4}\cdot\EE\left[\|r_{j-1}^i-r_{j-2}^i\|^2_\infty\right]\nonumber\\
		& = & \frac{4\ell_\psi^2}{(1-\gamma)^4}\cdot\EE\left[\|\nabla_\lambda F(\lambda_{j-1}^i)-\nabla_\lambda F(\lambda_{j-2}^i)\|^2_\infty\right]\nonumber\\
		& \leq & \frac{4\ell_\psi^2\cdot L_{\lambda}^2}{(1-\gamma)^4}\cdot\EE\left[\| \lambda_{j-1}^i- \lambda_{j-2}^i\|^2\right]\nonumber\\
		& = & \frac{4\ell_\psi^2\cdot L_{\lambda}^2}{(1-\gamma)^4}\cdot\EE\bigg[\Big\| \frac{1}{B}\sum_{\tau\in\cB_{j-1}^i}\left(\widehat\lambda(\tau|\theta_{j-1}^i) - \widehat\lambda_\omega\left(\tau|\theta_{j-1}^i,\theta_{j-2}^i\right) \right)\Big\|^2\bigg]\nonumber\\ 
		& \leq & \frac{4\ell_\psi^2\cdot L_{\lambda}^2}{(1-\gamma)^4}\cdot\EE\left[\big\| \widehat\lambda(\tau|\theta_{j-1}^i) - \widehat\lambda_\omega\left(\tau|\theta_{j-1}^i,\theta_{j-2}^i\right) \big\|^2\right] \nonumber\\
		& \leq & \frac{4\ell_\psi^2\cdot L_{\lambda}^2}{(1-\gamma)^4}\cdot\frac{2H(8\ell_\psi^2+L_\psi)(e^{2H\ell_\psi\delta} + 1)}{(1-\gamma)^3}\cdot\EE\left[\|\theta_{j-1}^i-\theta_{j-2}^i\|^2\right]\nonumber 
	\end{eqnarray}
	where the last inequality is due to the analysis of \eqref{lm:SIVR-variance-pf-2}. When $j = 1$, $r_{j-2}^i = r_{j-1}^i$ by default, and the above term is zero. To be compatible, we default $\theta_{-1}^i := \theta_{0}^i$. For the last term of \eqref{lm:SIVR-PG-variance-pf-2}, where the trajectory $\tau$ is sampled under the behavioral policy $\pi_{\theta_{j}^i}$, we have
	\begin{eqnarray}
		\label{lm:SIVR-PG-variance-pf-5}
		& & \EE\Big[\big\| \widehat{g}(\tau|\theta_{j-1}^i,r_{j-2}^i) -  \widehat{g}_\omega(\tau|\theta_j^i,\theta_{j-1}^i,r_{j-2}^i) \big\|^2\Big]\\
		& = & \EE\bigg[\Big\|\sum_{t=0}^{H-1}\gamma^t\cdot\big(1-\omega_t(\tau|\theta_j^i,\theta_{j-1}^i)\big)\cdot r_{j-2}^i(s_t,a_t)\cdot\Big(\sum_{t'=0}^t\nabla_{\theta}\log \pi_{\theta_{j-1}^i}(a_{t'}|s_{t'})\Big) \Big\|^2\bigg]\nonumber\\
		& \leq & 4H\ell_\psi^2\ell_{\lambda,\infty}^2\cdot\sum_{t=0}^{H-1}\gamma^t\cdot(t+1)^2\cdot\text{Var}\big(\omega_t(\tau|\theta_j^i,\theta_{j-1}^i)\big)\nonumber\\
		& \leq &  4H\ell_\psi^2\ell_{\lambda,\infty}^2\cdot\EE\left[\|\theta_{j-1}^i-\theta_{j}^i\|^2\right]\cdot\sum_{t=0}^{H-1}\gamma^t\cdot(t+1)^3\cdot(t+2)\big(8\ell_\psi^2+ L_\psi\big)(e^{2H\ell_\psi\delta} + 1)\nonumber\\
		&\leq& \frac{96H\ell_\psi^2\ell_{\lambda,\infty}^2\big(8\ell_\psi^2+ L_\psi\big)(e^{2H\ell_\psi\delta} + 1)}{(1-\gamma)^5}\cdot\EE\left[\|\theta_{j-1}^i-\theta_{j}^i\|^2\right]\nonumber.
	\end{eqnarray}
	Substituting \eqref{lm:SIVR-PG-variance-pf-3}, \eqref{lm:SIVR-PG-variance-pf-4} and \eqref{lm:SIVR-PG-variance-pf-5} into \eqref{lm:SIVR-PG-variance-pf-2} yields 
	\begin{eqnarray*} 
		&&\EE\left[\Big\|g_j^i-\left[\nabla_\theta\lambda_H(\theta_j^i)\right]^\top r_{j-1}^i\Big\|^2\right]- \EE\left[\Big\|g_{j-1}^i-\left[\nabla_\theta\lambda_H(\theta_{j-1}^i)\right]^\top r_{j-2}^i\Big\|^2\right] \\
		&\leq& \left(\frac{12(\ell_\psi+L_\psi)^2\cdot\ell_{\lambda,\infty}^2}{(1-\gamma)^4}+\frac{288H\ell_\psi^2\ell_{\lambda,\infty}^2\big(8\ell_\psi^2+ L_\psi\big)(e^{2H\ell_\psi\delta} + 1)}{(1-\gamma)^5}\right)\cdot\frac{\EE\left[\|\theta_{j}^i-\theta_{j-1}^i\|^2\right]}{B} \\
		&&+ \frac{24H\ell_\psi^2 L_{\lambda}^2(8\ell_\psi^2+L_\psi)(e^{2H\ell_\psi\delta} + 1)}{(1-\gamma)^7}\cdot\frac{\EE\left[\|\theta_{j-1}^i-\theta_{j-2}^i\|^2\right]}{B}
	\end{eqnarray*}
	Summing up the above inequality over $j$ proves the result. 
\end{proof}

\subsection{Proof of Step 2}
\label{appdx:SIVR-PG-variance-stp2}
\begin{proof}
	Define $(r_0^i)^* = \nabla_\lambda F(\lambda_H(\theta_0^i))$, for the ease of notation. Then 
	\begin{eqnarray}
		\label{lm:SIVR-PG-variance-pf-stp2-1}
		&&\EE\left[\big\|g_0^i-\left[\nabla_\theta\lambda_H(\theta_{0}^i)\right]^\top r_{0}^i\big\|^2\right]\\
		& = & \EE\bigg[\Big\|\frac{1}{N}\sum_{\tau\in\cN_i} \widehat{g}(\tau|\theta_{0}^i,r_0^i) - \frac{1}{N}\sum_{\tau\in\cN_i} \widehat{g}(\tau|\theta_{0}^i,(r_0^i)^*) + \frac{1}{N}\sum_{\tau\in\cN_i} \widehat{g}(\tau|\theta_{0}^i,(r_0^i)^*)-\nabla_{\theta} F(\lambda_H(\theta_{0}^i))\nonumber\\
		&& + \nabla_{\theta} F(\lambda_H(\theta_{0}^i)) - \left[\nabla_\theta\lambda_H(\theta_{0}^i)\right]^\top r_{0}^i\Big\|^2\bigg] \nonumber\\
		& \leq & 3\EE\bigg[\Big\|\frac{1}{N}\sum_{\tau\in\cN_i} \Big(\widehat{g}(\tau|\theta_{0}^i,r_0^i) - \widehat{g}(\tau|\theta_{0}^i,(r_0^i)^*)\Big)\Big\|^2\bigg] +3\EE\bigg[\Big\|\nabla_{\theta} F(\lambda_H(\theta_{0}^i)) - \left[\nabla_\theta\lambda_H(\theta_{0}^i)\right]^\top r_{0}^i\Big\|^2\bigg]  \nonumber\\
		& & + 3\EE\bigg[\Big\|\frac{1}{N}\sum_{\tau\in\cN_i}  \widehat{g}(\tau|\theta_{0}^i,(r_0^i)^*)-\nabla_{\theta} F(\lambda_H(\theta_{0}^i))\Big\|^2\bigg] \nonumber.
	\end{eqnarray}
	For the first term of \eqref{lm:SIVR-PG-variance-pf-stp2-1}, we have 
	\begin{eqnarray*}
		&&\EE\bigg[\Big\|\frac{1}{N}\sum_{\tau\in\cN_i} \Big(\widehat{g}(\tau|\theta_{0}^i,r_0^i) -  \widehat{g}(\tau|\theta_{0}^i,(r_0^i)^*)\Big)\Big\|^2\bigg]\\
		&\leq&\frac{1}{N}\sum_{\tau\in\cN_i}\EE\bigg[\Big\|\widehat{g}(\tau|\theta_{0}^i,r_0^i) -  \widehat{g}(\tau|\theta_{0}^i,(r_0^i)^*)\Big\|^2\bigg]\\
		&\overset{(a)}{\leq}& \frac{4\ell_\psi^2}{(1-\gamma)^4}\cdot\EE\left[\|r_0^i - (r_0^i)^*\|^2_\infty\right] \\
		& \overset{(b)}{\leq} & \frac{4\ell_\psi^2\cdot L_{\lambda}^2}{(1-\gamma)^4}\cdot\EE\left[\|\lambda_0^i - \lambda_H(\theta_0^i)\|^2\right]\\
		& \overset{(c)}{\leq} & \frac{4\ell_\psi^2\cdot L_{\lambda}^2}{N(1-\gamma)^6}
	\end{eqnarray*}
	where (a) is due to Lemma \ref{lemma:Lipschitz-g}; (b) is because $r_0^i = \nabla_\lambda F(\lambda_0^i)$, $(r_0^i)^* = \nabla_\lambda F(\lambda_H(\theta_0^i))$ and Assumption \ref{assumption:F-Lip}; (c) is because Lemma \ref{lemma:SIVR-occupancy}. Similarly, we can show that 
	$$\EE\bigg[\Big\|\frac{1}{N}\sum_{\tau\in\cN_i} \widehat{g}(\tau|\theta_{0}^i,(r_0^i)^*)-\nabla_{\theta} F(\lambda_H(\theta_{0}^i))\Big\|^2\bigg]\leq \frac{\ell_{\lambda,\infty}^2}{N(1-\gamma)^4}$$
	and 
	$$\EE\bigg[\Big\|\nabla_{\theta} F(\lambda_H(\theta_{0}^i)) - \left[\nabla_\theta\lambda_H(\theta_{0}^i)\right]^\top r_{0}^i\Big\|^2\bigg]\leq \frac{4\ell_\psi^2\cdot L_{\lambda}^2}{N(1-\gamma)^6}.$$
	Substituting the above three bounds into \eqref{lm:SIVR-PG-variance-pf-stp2-1} proves the inequality \eqref{lm:SIVR-PG-variance-step2}.
\end{proof}

\section{Proof of Theorem \ref{theorem:ncvx}}
\begin{proof}
	Summing up the ascent inequality of Lemma \ref{lemma:ascent-ncvx} for the $i$-th epoch and taking the expectation on both sides, we have 
	\begin{eqnarray}
		&&\frac{\eta}{4}\sum_{j=0}^{m-1}\EE\big[\|\cG_\eta(\theta_j^i)\|^2\big]\nonumber\\ &\overset{(i)}{\leq}&
		\EE\big[F(\lambda(\theta_m^i))\big]- \EE\big[F(\lambda(\theta_0^i))\big] - \Big(\frac{1}{2\eta}-L_\theta\Big)\cdot\sum_{j=0}^{m-1}\EE\big[\|\theta_{j+1}^i-\theta_j^i\|^2\big] \nonumber\\
		&&+ \Big(\frac{\eta}{2}+ \frac{1}{2L_\theta}\Big)\cdot\sum_{j=0}^{m-1} \EE\big[\|\nabla_\theta F(\lambda(\theta_j^i))-g_j^i\|^2\big]\nonumber\\
		&\overset{(ii)}{\leq}& \EE\big[F(\lambda(\theta_m^i))\big]- \EE\big[F(\lambda(\theta_0^i))\big] + m\Big(\frac{\eta}{2}+ \frac{1}{2L_\theta}\Big)\Big(\frac{C_1}{N} + \gamma^{2H}C_2\Big)\nonumber\\
		&& - \left(\Big(\frac{1}{2\eta}-L_\theta\Big) - \Big(\frac{\eta}{2}+ \frac{1}{2L_\theta}\Big)\cdot\Big(\frac{m}{B}C_3 + C_4 \Big)\right)\cdot\sum_{j=0}^{m-1} \EE\big[\|\nabla_\theta F(\lambda(\theta_j^i))-g_j^i\|^2\big]\nonumber,
	\end{eqnarray}
	where (i) is because Lemma \ref{lemma:ascent-ncvx} and (ii) is because Lemma \ref{lemma:SIVR-variance}.
	Note that $N = B^2 = m^2$, and $\eta=\frac{1}{1+(C_3+C_4)/L_\theta^{2}}\cdot\frac{1}{2L_\theta}$, the coefficient 
	$$\Big(\frac{1}{2\eta}-L_\theta\Big) - \Big(\frac{\eta}{2}+ \frac{1}{2L_\theta}\Big)\cdot\Big(\frac{m}{B}C_3 + C_4 \Big)\geq0.$$
	Hence we have 
	\begin{eqnarray}
		\frac{\eta}{4}\sum_{j=0}^{m-1}\EE\big[\|\cG_\eta(\theta_j^i)\|^2\big] 
		&\leq& \EE\big[F(\lambda(\theta_m^i))\big]- \EE\big[F(\lambda(\theta_0^i))\big] + m\Big(\frac{\eta}{2}+ \frac{1}{2L_\theta}\Big)\Big(\frac{C_1}{N} + \gamma^{2H}C_2\Big)\nonumber.
	\end{eqnarray}
	Note that $\lambda(\theta_0^i) = \tilde \theta_{i-1}$ and $\lambda(\theta_m^i) = \tilde \theta_i$, summing the above inequality over all $T$ epochs and dividing both sides with $\frac{\eta}{4}\cdot Tm$ yields
	\begin{eqnarray}
		\frac{1}{Tm}\sum_{j=0}^{m-1}\sum_{i =1}^T\EE\big[\|\cG_\eta(\theta_j^i)\|^2\big] 
		&\leq& \frac{4(F(\lambda(\theta^*))- F(\lambda(\theta_0^i)))}{Tm\cdot\eta} + \Big(2+ \frac{2}{L_\theta\eta}\Big)\Big(\frac{C_1}{N} + \gamma^{2H}C_2\Big)\nonumber.
	\end{eqnarray}
	Choosing $T = m = \epsilon^{-1}$, $H = \frac{2\log(\epsilon^{-1})}{1-\gamma}$ and let $\theta_{out}$ be selected uniformly at random from $\{\theta_{j}^i: i = 1,...,T, j = 0,...,m-1\}$ yields
	\begin{eqnarray}
		&&\EE\big[\|\cG_\eta(\theta_{out})\|^2\big]\nonumber\\
		& = &\frac{1}{Tm}\sum_{j=0}^{m-1}\sum_{i =1}^T\EE\big[\|\cG_\eta(\theta_j^i)\|^2\big] \nonumber\\
		&\leq& \frac{4(F(\lambda(\theta^*))- F(\lambda(\theta_0^i)))}{Tm\cdot\eta} + \Big(2+ \frac{2}{L_\theta\eta}\Big)\Big(\frac{C_1}{N} + \gamma^{2H}C_2\Big)\nonumber\\
		& = & \left(4\eta^{-1}\cdot(F(\lambda(\theta^*))- F(\lambda(\theta_0^i))) + (6+(C_3+C_4)/L_\theta^{2})\cdot(C_3 + \gamma^{2H}\epsilon^{-2}C_4)\right)\cdot\epsilon^2\nonumber\\
		& = & \cO(\epsilon^2).\nonumber
	\end{eqnarray}
	Then Lemma \ref{lemma:optimality-measure} indicates that $\EE\big[\|\nabla_\theta F(\lambda(\theta_{out}))\|^2\big]\leq \cO(\epsilon^{2})$. Then by Jensen's inequality, 
	$$\EE\big[\|\nabla_\theta F(\lambda(\theta_{out}))\|\big]\leq\sqrt{\EE\big[\|\nabla_\theta F(\lambda(\theta_{out}))\|^2\big]}\leq \cO(\epsilon).$$
	Hence we complete the proof. 
\end{proof}

\section{Proof of Lemma \ref{lemma:ascent-0}}
\label{appdx:ascent-0}
\begin{proof}
	By Lemma \ref{lemma:importance}, the function $F\circ\lambda(\cdot)$ is $L_\theta$-smooth. Then we have  
	\begin{eqnarray*}
		&&|F(\lambda(\theta)) - F(\lambda(\theta_{j}^i)) - \langle g_j^i, \theta-\theta_{j}^i\rangle|\\
		&\leq & |F(\lambda(\theta)) - F(\lambda(\theta_{j}^i)) - \langle \nabla_\theta F(\lambda(\theta_{j}^i)), \theta-\theta_{j}^i\rangle| + |\langle \nabla_\theta F(\lambda(\theta_{j}^i))-g_j^i, \theta-\theta_{j}^i\rangle|\\
		& \leq & \frac{L_\theta}{2}\|\theta - \theta_j^i\|^2 + \frac{L_\theta}{2}\|\theta - \theta_j^i\|^2 + \frac{1}{2L_\theta}\|\nabla_\theta F(\lambda(\theta_{j}^i))-g_j^i\|^2\\
		& = &L_\theta\|\theta - \theta_j^i\|^2 +  \frac{1}{2L_\theta}\|\nabla_\theta F(\lambda(\theta_{j}^i))-g_j^i\|^2.
	\end{eqnarray*}
	That is, 
	\begin{eqnarray}
		\label{lm:ascent-pf-1}
		&&F(\lambda(\theta_{j}^i)) + \langle g_j^i, \theta-\theta_{j}^i\rangle + L_\theta\|\theta-\theta_{j}^i\|^2 + \frac{1}{2L_\theta}\|\nabla_\theta F(\lambda(\theta_{j}^i))-g_j^i\|^2\\
		& \geq & 
		F(\lambda(\theta))\nonumber\\
		&\geq& F(\lambda(\theta_{j}^i)) + \langle g_j^i, \theta-\theta_{j}^i\rangle - L_\theta\|\theta-\theta_{j}^i\|^2 - \frac{1}{2L_\theta}\|\nabla_\theta F(\lambda(\theta_{j}^i))-g_j^i\|^2.\nonumber
	\end{eqnarray} 
	By the discussion of \eqref{defn:TR}, our truncated gradient update is equivalent to solving
	\begin{eqnarray}
		\label{lm:ascent-pf-2}
		\theta_{j+1}^i:=\argmax_{\|\theta-\theta_j^i\|\leq\delta} F(\lambda(\theta_{j}^i)) + \langle g_j^i,\theta-\theta_j^i\rangle - \frac{1}{2\eta}\|\theta-\theta_{j}^i\|^2.
	\end{eqnarray}
	By \eqref{lm:ascent-pf-1} and \eqref{lm:ascent-pf-2}, we have 
	\begin{eqnarray} 
		&&F(\lambda(\theta_{j+1}^i))\nonumber \\
		& \overset{(a)}{\geq} & F(\lambda(\theta_{j}^i)) + \langle g_j^i, \theta_{j+1}^i-\theta_{j}^i\rangle - L_\theta\|\theta_{j+1}^i-\theta_{j}^i\|^2 - \frac{1}{2L_\theta}\|\nabla_\theta F(\lambda(\theta_{j}^i))-g_j^i\|^2\nonumber\\
		& = & F(\lambda(\theta_{j}^i)) + \langle g_j^i, \theta_{j+1}^i-\theta_{j}^i\rangle - \frac{1}{2\eta}\|\theta_{j+1}^i-\theta_{j}^i\|^2 - \frac{1}{2L_\theta}\|\nabla_\theta F(\lambda(\theta_{j}^i))-g_j^i\|^2  \nonumber\\
		&& + \Big(\frac{1}{2\eta}-L_\theta\Big)\cdot\|\theta_{j+1}^i-\theta_{j}^i\|^2\nonumber\\
		& \overset{(b)}{=} & \max_{\|\theta-\theta_j^i\|\leq\delta}\left\{F(\lambda(\theta_{j}^i)) + \langle g_j^i, \theta-\theta_{j}^i\rangle - \frac{1}{2\eta}\|\theta-\theta_{j}^i\|^2\right\} - \frac{1}{2L_\theta}\|\nabla_\theta F(\lambda(\theta_{j}^i))-g_j^i\|^2\nonumber\\
		&&+ \Big(\frac{1}{2\eta}-L_\theta\Big)\cdot\|\theta_{j+1}^i-\theta_{j}^i\|^2\nonumber\\
		& \overset{(c)}{\geq}& \max_{\|\theta-\theta_j^i\|\leq\delta}\left\{F(\lambda(\theta)) - \Big(\frac{1}{2\eta}+L_\theta\Big)\|\theta-\theta_{j}^i\|^2\right\} - \frac{1}{L_\theta}\|\nabla_\theta F(\lambda(\theta_{j}^i))-g_j^i\|^2 \nonumber\\
		&&+ \Big(\frac{1}{2\eta}-L_\theta\Big)\cdot\|\theta_{j+1}^i-\theta_{j}^i\|^2\nonumber,
	\end{eqnarray}
	where (a) is by setting $\theta = \theta_{j+1}^i$ in second half of \eqref{lm:ascent-pf-1}, (b) is due to  \eqref{lm:ascent-pf-2}, and (c) is due to the first half of \eqref{lm:ascent-pf-1}. That is, 
	\begin{align}
		\label{lm:ascent-pf-4}
		F(\lambda(\theta^i_{j+1})) &\,\,\geq\,\, \max_{\|\theta-\theta^i_j\|\leq\delta} \left\{F(\lambda(\theta)) - \Big(L_\theta + \frac{1}{2\eta}\Big)\cdot\|\theta - \theta_j^i\|^2\right\} \\
		& \qquad\qquad\qquad+ \Big(\frac{1}{2\eta}-L_\theta\Big)\cdot\|\theta_{j+1}^i - \theta_j^i\|^2 - \frac{1}{L_\theta}\|g_j^i - \nabla_\theta F(\lambda(\theta_j^i))\|^2.\nonumber
	\end{align} 
	For any $\epsilon<\bar\epsilon$, by Assumption \ref{assumption:over-para}, $(1-\epsilon)\lambda(\theta_j^i) + \epsilon\lambda(\theta^*)\in\mathcal{V}_{\lambda(\theta_j^i)}$ and hence $$\theta_\epsilon := \big(\lambda|_{\mathcal{U}_{\theta_j^i}}\big)^{-1}\big((1-\epsilon)\lambda(\theta_j^i) + \epsilon\lambda(\theta^*)\big)\in\mathcal{U}_{\theta_j^i}\subset B(\theta_j^i,\delta).$$
	Consequently, substituting the above $\theta_\epsilon$ into \eqref{lm:ascent-pf-4} yields
	\begin{eqnarray} 
		\label{thm:cvg-rate-pf-1}
		F(\lambda(\theta^i_{j+1})) &\geq & F\circ\lambda\circ\big(\lambda|_{\mathcal{U}_{\theta_j^i}}\big)^{-1}\big((1-\epsilon)\lambda(\theta_j^i) + \epsilon\lambda(\theta^*)\big) \\
		& & \!\!\!\!- \Big(L_\theta + \frac{1}{2\eta}\Big)\cdot\|\psi\left((1-\epsilon)\cdot\lambda(\theta_{j}^i) + \epsilon\cdot\lambda^*\right) - \theta_j^i\|^2  \nonumber\\
		&& \!\!\!\!+ \Big(\frac{1}{2\eta}-L_\theta\Big)\cdot\|\theta_{j+1}^i - \theta_j^i\|^2 - \frac{1}{L_\theta}\|g_j^i - \nabla_\theta F(\lambda(\theta_j^i))\|^2\nonumber.
	\end{eqnarray}
	Note that $\lambda\circ\big(\lambda|_{\mathcal{U}_{\theta_j^i}}\big)^{-1}$ is the identity mapping on $\mathcal{U}_{\theta_j^i}$, then 
	\begin{eqnarray*}
		F\circ\lambda\circ\big(\lambda|_{\mathcal{U}_{\theta_j^i}}\big)^{-1}\big((1-\epsilon)\lambda(\theta_j^i) + \epsilon\lambda(\theta^*)\big)  & = & F\left((1-\epsilon)\cdot\lambda(\theta_{j}^i) + \epsilon\cdot\lambda(\theta^*)\right)\\
		& \geq &  (1-\epsilon)\cdot F(\lambda(\theta_{j}^i)) + \epsilon \cdot F(\lambda(\theta^*)) 
	\end{eqnarray*}
	and 
	\begin{eqnarray*}
		& &\big\|\big(\lambda|_{\mathcal{U}_{\theta_j^i}}\big)^{-1}\big((1-\epsilon)\lambda(\theta_j^i) + \epsilon\lambda(\theta^*)\big) - \theta_j^i\big\|^2\\
		& = & \big\|\big(\lambda|_{\mathcal{U}_{\theta_j^i}}\big)^{-1}\big((1-\epsilon)\lambda(\theta_j^i) + \epsilon\lambda(\theta^*)\big) - \big(\lambda|_{\mathcal{U}_{\theta_j^i}}\big)^{-1}(\lambda(\theta_j^i))\big\|^2\nonumber\\
		& \overset{(a)}{\leq} & \epsilon^2\ell_{\theta}^2\cdot\|\lambda(\theta_{j}^i) - \lambda^*\|^2\\
		& = & \epsilon^2\ell_{\theta}^2\cdot\left(\|\lambda(\theta_{j}^i)\|^2 +  \|\lambda^*\|^2 - 2\langle\lambda(\theta_j^i),\lambda^*\rangle\right)\\
		& \leq& \frac{2\epsilon^2\ell_{\theta}^2}{(1-\gamma)^2} \nonumber
	\end{eqnarray*}
	where (a) is due to Assumption \ref{assumption:over-para}.
	Substituting the above two inequalities into \eqref{thm:cvg-rate-pf-1} and slightly rearranging the terms yields 
	\begin{align}  
		&F(\lambda(\theta^*))-F(\lambda(\theta^i_{j+1})) \,\,\leq\,\,  (1-\epsilon)\left(F(\lambda(\theta^*))-F(\lambda(\theta_{j}^i))\right)  \nonumber\\ 
		& \qquad\qquad\qquad+ \Big(L_\theta+ \frac{1}{2\eta}\Big)\frac{2\epsilon^2\ell_{\theta}^2}{(1-\gamma)^2} - \Big(\frac{1}{2\eta}- L_\theta\Big)\|\theta_{j+1}^i - \theta_j^i\|^2 + \frac{1}{L_\theta}\|g_j^i - \nabla_\theta F(\lambda(\theta_j^i))\|^2,\nonumber
	\end{align}
	which completes the proof. 
\end{proof}

\section{Proof of Theorem \ref{theorem:convergence-rate}}
\label{appdx:thm}
\begin{proof}
	Recall Lemma \ref{lemma:ascent-0}, where we have 
	\begin{align} 
		\label{thm:cvg-rate-pf-2}
		F(\lambda(\theta^*))-F(\lambda(\theta^i_{j+1}))& \leq  (1-\epsilon)\left(F(\lambda(\theta^*))-F(\lambda(\theta_{j}^i))\right) + \Big(L_\theta+ \frac{1}{2\eta}\Big)\frac{2\epsilon^2\ell_{\theta}^2}{(1-\gamma)^2}  \nonumber\\
		&\quad -\Big(\frac{1}{2\eta}-L_\theta\Big)\|\theta_{j+1}^i - \theta_j^i\|^2 +\frac{1}{L_\theta}\|g_j^i - \nabla_\theta F(\lambda(\theta_j^i))\|^2
	\end{align}
	For the ease of notation, let us denote $\sigma_{j}^i = - \Big(\frac{1}{2\eta}-L_\theta \Big)\cdot\|\theta_{j+1}^i - \theta_j^i\|^2 + \frac{1}{L_\theta}\|g_j^i - \nabla_\theta F(\lambda(\theta_j^i))\|^2$. Taking the expectation and telescoping \eqref{thm:cvg-rate-pf-2} over $j$ yields
	\begin{eqnarray}
		\label{thm:cvg-rate-pf-3}
		\EE\left[F(\lambda(\theta^*))\!-\!F(\lambda(\theta_{m}^i))\right] &\leq& (1\!-\!\epsilon)^m\cdot\EE\left[F(\lambda(\theta^*))\!-\!F(\lambda(\theta_{0}^i))\right] \!+\! \frac{(2L_\theta\!+\!\frac{1}{\eta})\ell_{\theta}^2}{(1-\gamma)^2}\!\cdot\!\epsilon\\
		&&\!+\! \sum_{j=0}^{m-1}(1\!-\!\epsilon)^{m-j-1}\EE[\sigma_j^i]\nonumber
	\end{eqnarray}
	Next, we show that when the step size $\eta$ is properly chosen, the term $\sum_{j=0}^{m-1}(1-\epsilon)^{m-j-1}\EE[\sigma_j^i]$ will be negligible.
	\begin{eqnarray*}
		& & \sum_{j=0}^{m-1}(1\!-\!\epsilon)^{m-j-1}\EE[\sigma_j^i]\\
		& = & \sum_{j=0}^{m-1}(1\!-\!\epsilon)^{m-j-1}\cdot\EE\left[- \Big(\frac{1}{2\eta}-L_\theta\Big)\cdot\|\theta_{j+1}^i - \theta_j^i\|^2 + \frac{1}{L_\theta}\|g_j^i - \nabla_\theta F(\lambda(\theta_j^i))\|^2\right]\nonumber\\
		& \leq & \frac{1}{L_\theta}\sum_{j=0}^{m-1}\EE\left[\|g_j^i - \nabla_\theta F(\lambda(\theta_j^i))\|^2\right] - (1-\epsilon)^m\Big(\frac{1}{2\eta}-L_\theta\Big)\cdot\sum_{j=0}^{m-1}\EE\left[\|\theta_{j+1}^i - \theta_j^i\|^2\right]\nonumber
	\end{eqnarray*}
	Let us choose $m = \epsilon^{-1}$. Then as long as $\epsilon\leq 1/2$, we have $(1-\epsilon)^{\epsilon^{-1}}\geq\frac{1}{4}$. Then, using Lemma \ref{lemma:SIVR-variance} yields
	\begin{eqnarray}
		& & \sum_{j=0}^{m-1}(1\!-\!\epsilon)^{m-j-1}\EE[\sigma_j^i]\nonumber\\
		& \leq & \frac{1}{L_\theta}\sum_{j=0}^{m-1}\EE\left[\|g_j^i - \nabla_\theta F(\lambda(\theta_j^i))\|^2\right] - \Big(\frac{1}{8\eta}-\frac{L_\theta}{4}\Big)\cdot\sum_{j=0}^{m-1}\EE\left[\|\theta_{j+1}^i - \theta_j^i\|^2\right]\nonumber\\
		& \leq & \frac{mC_1}{L_\theta N} + \frac{mC_2\cdot\gamma^{2H}}{L_\theta} + \frac{C_3}{L_\theta B}\cdot\sum_{j=0}^{m-1}\sum_{j'=0}^{j-1}\EE\left[\|\theta_{j'+1}^i-\theta_{j'}^i\|^2\right]- \Big(\frac{1}{8\eta}-\frac{L_\theta}{4} - \frac{C_4}{L_\theta}\Big)\cdot\sum_{j=0}^{m-1}\EE\left[\|\theta_{j+1}^i - \theta_j^i\|^2\right]\nonumber\\
		& \leq & \frac{mC_1}{L_\theta N} + \frac{mC_2\cdot\gamma^{2H}}{L_\theta} - \Big(\frac{1}{8\eta}-\frac{L_\theta}{4}- \frac{mC_3}{L_\theta B} - \frac{C_4}{L_\theta}\Big)\cdot\sum_{j=0}^{m-1}\EE\left[\|\theta_{j+1}^i - \theta_j^i\|^2\right]\nonumber
	\end{eqnarray}
	Since we choose $\eta \leq \frac{1}{2L_\theta + \frac{8(C_3+C_4)}{L_\theta}}$, then $\frac{1}{8\eta}-\frac{L_\theta}{4}- \frac{mC_3}{L_\theta B} - \frac{C_4}{L_\theta}$. Because we choose $B = m = \epsilon^{-1}$ and $N = \epsilon^{-2}$, then 
	\begin{eqnarray*} 
		\sum_{j=0}^{m-1}(1\!-\!\epsilon)^{m-j-1}\EE[\sigma_j^i] \leq \frac{C_1}{L_\theta}\epsilon + \frac{C_2}{L_\theta\epsilon}\cdot\gamma^{2H}.
	\end{eqnarray*} 
	Substituting the above inequality into \eqref{thm:cvg-rate-pf-3} and use the fact that $(1-\epsilon)^{\epsilon^{-1}}\leq \frac{1}{2}, \forall \epsilon\leq 1$ yields
	\begin{eqnarray*} 
		\EE\left[F(\lambda(\theta^*))\!-\!F(\lambda(\theta_{m}^i))\right] \leq \frac{1}{2}\EE\left[F(\lambda(\theta^*))\!-\!F(\lambda(\theta_{0}^i))\right] \!+\! \left(\frac{(2L_\theta\!+\!\frac{1}{\eta})\ell_{\theta}^2}{(1-\gamma)^2}+\frac{C_1}{L_\theta}\right)\cdot\epsilon + \frac{C_2}{L_\theta\epsilon}\cdot\gamma^{2H}.
	\end{eqnarray*}
	Using the fact that $\tilde\theta_{i-1} = \theta_{0}^i$ and $\tilde{\theta}_{i} = \theta_{m}^i$ proves 
	\begin{eqnarray*} 
		\EE\left[F(\lambda(\theta^*))\!-\!F(\lambda(\tilde{\theta}_{i}))\right] \leq \frac{1}{2}\EE\left[F(\lambda(\theta^*))\!-\!F(\lambda(\tilde\theta_{i-1}))\right] \!+\! \left(\frac{(2L_\theta\!+\!\frac{1}{\eta})\ell_{\theta}^2}{(1-\gamma)^2}+\frac{C_1}{L_\theta}\right)\!\cdot\!\epsilon + \frac{C_2}{L_\theta\epsilon}\cdot\gamma^{2H}.
	\end{eqnarray*}
	Again, telescoping sum the above inequality for $i=1,...,T$ proves 
	\begin{align*}  
		\EE\Big[F(\lambda(\theta^*))&-F(\lambda(\tilde\theta_{T}))\Big] \,\,\leq\,\, \left(F(\lambda(\theta^*))-F(\lambda(\tilde\theta_{0})) + \frac{(4L_\theta+\frac{2}{\eta})\ell_{\theta}^2}{(1-\gamma)^2}+\frac{2C_1}{L_\theta}\right)\cdot\frac{1}{2^T} + \frac{2C_2}{L_\theta\epsilon}\cdot\gamma^{2H},\nonumber
	\end{align*}
	Note that we choose $T = \log_2(\epsilon^{-1})$ and $H = \frac{2\log(1/\epsilon)}{1-\gamma}$, indicating that $\frac{1}{2^T} = \epsilon$ and $\gamma^{2H} = \cO(\epsilon^4)$. Therefore, 
	$\EE\big[F(\lambda(\theta^*))-F(\lambda(\tilde\theta_{T}))\big] \leq \cO(\epsilon)$.
\end{proof}

\section{Experiment Settings}
\subsection{Experiment Setting for Ordinary RL Tasks}
The snapshot batchsize $N$ is chosen from the grid search from $\{10, 25, 50, 100, 200\}$.  $B$ and $m$ are calculated according to the provided formula of the theory. The learning rate $\eta$ is also chosen from a grid search from the range $[10^{-4}, 10^{-1}]$. More details are presented in the list below. 

\begin{table}[htb!]
	\center
	\begin{tabular}{|c | c | c |c| c| c| c| c| c| c|}
		\hline
		Environment & Algorithm & \!\!Policy Network\!\! & $\gamma$ & $H$ &  $B$ &  $N$ & \!\!$m$\!\! & $\eta$ & $\delta$\\
		\hline
		& TSIVR-PG &&&& \!\!5\!\! & \!\!25\!\! & \!\!5\!\! &  \!\!$5\!\times\! 10^{-3}$\!\! & \!\!$1\!\times\!10^{-2}$\!\!\\
		& \!\!REINFORCE\!\!&&&&  & 25 &  &  $5\!\times\!10^{-3}$&\\
		CartPole-v0 & SVRPG & $4\!\times\!64\!\times\!64\!\times\!2$ & \!\!0.99\!\! & \!\!200\!\! &8 & 25 & 3 &  $5\!\times\!10^{-3}$&\\
		& SRVR-PG&&&& \!\!5\!\! & \!\!25\!\! & \!\!5\!\! &  \!\!$5\!\times\!10^{-3}$\!\!&\\
		& HSPGA&&&& \!\!5\!\! & \!\!25\!\! & \!\!5\!\! &  \!\!$8\!\times\!10^{-3}$\!\!&\\
		\hline
		& TSIVR-PG &&&& \!\!10\!\! & \!\!100\!\! & \!\!10\!\! &  \!\!$1\!\times\!10^{-1}$\!\! & \!\!$1\!\times\!10^{-2}$\!\!\\
		& \!\!REINFORCE\!\!&&&&  & \!\!100\!\! &  &  \!\!$5\!\times\!10^{-2}$\!\!&\\
		\!\!FrozenLake8x8\!\! & SVRPG & $64\!\times\!4$ & \!\!0.99\!\! & \!\!200\!\! &\!\!20\!\! & \!\!100\!\! & \!\!5\!\! &  \!\!$5\!\times\!10^{-2}$\!\!&\\
		& SRVR-PG&&&& \!\!10\!\! & \!\!100\!\! & \!\!10\!\! &  \!\!$5\!\times\!10^{-2}$\!\!&\\
		& HSPGA&&&& \!\!10\!\! &  \!\!100\!\! & \!\!10\!\! &  \!\!$8\!\times\!10^{-2}$\!\!&\\
		\hline
		& TSIVR-PG &&&& \!\!10\!\! & \!\!100\!\! & \!\!10\!\! &  \!\!$5\!\times\!10^{-3}$\!\! &\!\!$1\!\times\!10^{-2}$\!\!\\
		& \!\!REINFORCE\!\!&&&&  & \!\!100\!\! &  &  \!\!$2\!\times\!10^{-3}$\!\!&\\
		Acrobot-v1 & SVRPG & \!\!$4\!\times\!64\!\times\!64\!\times\!3$\!\! & \!\!0.999\!\! & \!\!500\!\! &\!\!20\!\! & \!\!100\!\! & \!\!5\!\! &  \!\!$2\!\times\!10^{-3}$\!\!&\\
		& SRVR-PG&&&& \!\!10\!\! & \!\!100\!\! & \!\!10\!\! &  \!\!$2\!\times\!10^{-3}$\!\!&\\
		& HSPGA&&&& \!\!10\!\! & \!\!100\!\! & \!\!10\!\! &  \!\!$2\!\times\!10^{-3}$\!\!&\\
		\hline
	\end{tabular}
\end{table}

\subsection{Experiment Setting for Maximizing Non-linear Objective Function}
The parameter selection is performed in the same way as the previous section. The inner planning loop of MaxEnt is performed by a single policy gradient step.  The details are presented in the list below. 
\begin{table}[htb!]
	\center
	\begin{tabular}{| c | c |c| c| c| c| c| c| c|}
		\hline
		Algorithm & Policy Network & $\gamma$ & $H$ &  $B$ &  $N$ &  $m$ & $\eta$ & $\delta$\\
		\hline
		TSIVR-PG & $64\times 4$ & 0.99 & 200 &20 & 100 & 5 &  $5\times 10^{-2}$& $9\times 10^{-2}$\\
		MaxEnt & $64\times 4$ &0.99&200&  & 100 &  &  $9\times 10^{-2}$ & \\
		\hline
	\end{tabular}
\end{table}

\end{document}